\theoremstyle{plain}
\theoremstyle{definition}
\theoremstyle{remark}
\def\1{\bm{1}}
\DeclareMathAlphabet{\mathsfit}{\encodingdefault}{\sfdefault}{m}{sl}
\SetMathAlphabet{\mathsfit}{bold}{\encodingdefault}{\sfdefault}{bx}{n}
\newcommand{\E}{\mathbb{E}}
\definecolor{myred-color}{RGB}{238, 117, 120}
\definecolor{myblue-color}{RGB}{102, 153, 204}
\definecolor{dkgreen}{rgb}{0,0.6,0}
\definecolor{gray}{rgb}{0.5,0.5,0.5}
\definecolor{mauve}{rgb}{0.58,0,0.82}
\tiny\color{gray},
\newcommand{\ours}{\textit{Constrained Gradient-based Policy Optimization}}
\newcommand{\abe}{\textit{Advantage-based Estimation}}
\newcommand{\gbe}{\textit{Gradient-based Estimation}}
\definecolor{revision}{RGB}{255,102,0}
\icmltitlerunning{Safe Reinforcement Learning using Finite-Horizon Gradient-based Estimation}
\begin{document}
\twocolumn[
\icmltitle{Safe Reinforcement Learning using Finite-Horizon Gradient-based Estimation}

\begin{icmlauthorlist}
\icmlauthor{Juntao Dai}{sch1,sch2}
\icmlauthor{Yaodong Yang}{sch3}
\icmlauthor{Qian Zheng}{sch1,sch2}
\icmlauthor{Gang Pan}{sch1,sch2}
\end{icmlauthorlist}

\icmlaffiliation{sch1}{College of Computer Science and Technology, Zhejiang University, Hangzhou, China}
\icmlaffiliation{sch2}{The State Key Lab of Brain-Machine Intelligence, Zhejiang University, Hangzhou, China}
\icmlaffiliation{sch3}{Center for AI Safety and Governance, Peking University, Beijing, China}

\icmlcorrespondingauthor{Qian Zheng}{qianzheng@zju.edu.cn}
\icmlcorrespondingauthor{Gang Pan}{gpan@zju.edu.cn}

\icmlkeywords{Machine Learning, ICML}

\vskip 0.3in
]
\printAffiliationsAndNotice{}
\doparttoc
\faketableofcontents

\begin{abstract}
A key aspect of Safe Reinforcement Learning (Safe RL) involves estimating the constraint condition for the next policy, which is crucial for guiding the optimization of safe policy updates.
However, the existing \textit{Advantage-based Estimation} (ABE) method relies on the infinite-horizon discounted advantage function.
This dependence leads to catastrophic errors in finite-horizon scenarios with non-discounted constraints, resulting in safety-violation updates.
In response, we propose the first estimation method for finite-horizon non-discounted constraints in deep Safe RL, termed \textit{Gradient-based Estimation} (GBE), which relies on the analytic gradient derived along trajectories.
Our theoretical and empirical analyses demonstrate that GBE can effectively estimate constraint changes over a finite horizon.
Constructing a surrogate optimization problem with GBE, we developed a novel Safe RL algorithm called \textit{Constrained Gradient-based Policy Optimization} (CGPO).
CGPO identifies feasible optimal policies by iteratively resolving sub-problems within trust regions.
Our empirical results reveal that CGPO, unlike baseline algorithms, successfully estimates the constraint functions of subsequent policies, thereby ensuring the efficiency and feasibility of each update.

\end{abstract}

\section{Introduction}
Reinforcement Learning (RL) \citep{sutton2018reinforcement} stands as a powerful paradigm in artificial intelligence.
Over the past few years, RL has achieved notable success across various challenging tasks, such as video games~\citep{eldahshan2022deep}, robotic control~\citep{okamura2000overview,singh2022reinforcement}, Go~\citep{silver2016mastering,silver2017mastering}, and the training of large language models~\citep{ouyang2022training,rafailov2023direct}.
Recently, there has been a growing emphasis on prioritizing the safety of policy learning.
This shift is driven by the critical need for safety in real-world applications, such as autonomous driving~\citep{muhammad2020deep} and service robots~\citep{bogue2017robots}.
In response, Safe RL~\citep{garcia2015comprehensive} has emerged as a related paradigm, aiming to provide reliable and robust policy learning in the face of complex and dynamic environments.
The Constrained Markov Decision Process (CMDP)~\citep{altman1999constrained} stands as a foundational framework of Safe RL, augmenting the traditional Markov Decision Process (MDP)~\citep{puterman1990markov} by incorporating constraints.

\begin{figure}[t]
    \centering
    \includegraphics[width=0.98\columnwidth]{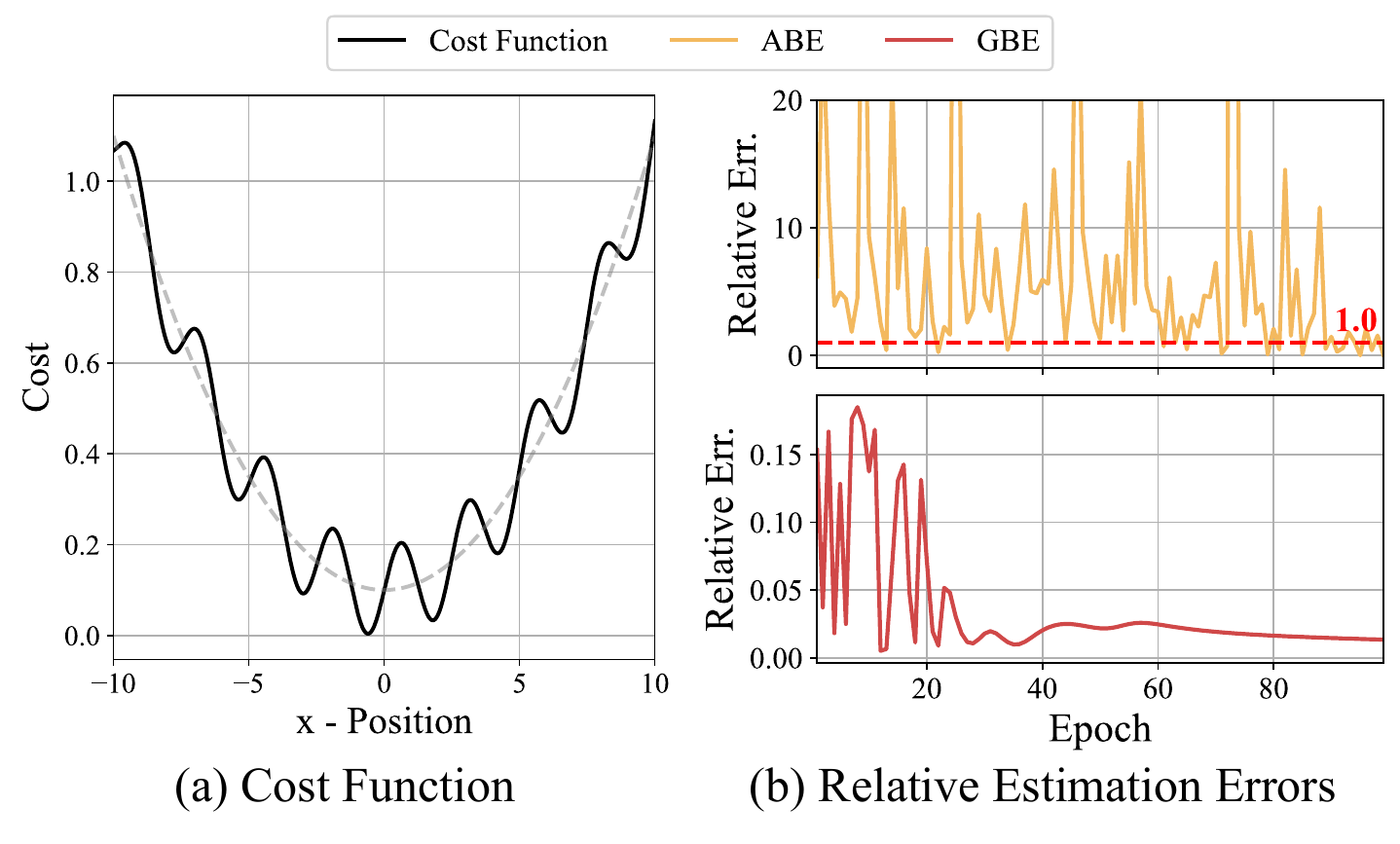}
    \vspace{-0.5em}
    \caption{\textbf{\textit{Advantage-based Estimation} fails even in simple environments under finite-horizon constraints.} (a) The cost obtained by the agent while traversing along the x-axis, namely, $c_t = c(x_t)$. (b) Relative errors in the estimation of changes in the finite-horizon cumulative constraint (i.e., $\sum_{t=1}^\top c_t \leq b$). The ABE method generates relative errors even greater than $1.0$, showing completely incorrect estimations. Refer to Appendix~\ref{app:simple_env} for more details.}
    \label{fig:simple-exp}
\end{figure}

In most Safe RL benchmarks~\citep{ray2019benchmarking,gronauer2022bullet,ji2023safety}, the common practical constraints are represented as non-discounted sums over finite trajectories, subject to scalar thresholds.
However, existing deep Safe RL algorithms uniformly apply an infinite-horizon approach, the \abe{} (ABE)~\citep{achiam2017constrained}, across all types of constraints.
This fundamental discrepancy hinders these algorithms' ability to accurately predict the constraint values for subsequent policies, thereby misleading the optimization direction in constrained problems.
Our straightforward experiment illustrates this issue in Figure~\ref{fig:simple-exp}.
Despite the simplicity of the task, the ABE method generates relative errors exceeding 1.0.

To bridge the gap in estimating finite-horizon constraints, we introduce a new estimation methodology, \gbe{} (GBE).
Unlike previous methods, GBE avoids reliance on the infinite-horizon assumption, instead leveraging first-order gradients derived along finite trajectories~\citep{mohamed2020monte}.
This approach facilitates precise estimation of constraints, particularly where a non-discounted cumulative sum over a finite horizon is compared against a scalar threshold.
Utilizing the GBE, we construct the constrained surrogate problem and develop a novel Safe RL algorithm, called \ours{} (CGPO).
Additionally, informed by an error analysis of GBE, we implement a trust region approach in the parameter space to mitigate errors and further ensure the feasibility of policy updates.

Our contributions are threefold:
\textbf{(1)} We introduce the GBE method, designed to estimate finite-horizon non-discounted constraints in Safe RL tasks, along with relevant theoretical analysis.
\textbf{(2)} We propose the CGPO algorithm. To our best knowledge, CGPO is the first deep Safe RL algorithm that can effectively address tasks with finite-horizon constraints.
Based on precise estimations of the GBE method, CGPO ensures the efficiency and feasibility of each update.
Our analysis includes worst-case scenarios and introduces an adaptive trust region radius strategy for improved performance.
\textbf{(3)} We develop a series of Safe RL tasks with differentiable dynamics to test our algorithm. Comparative evaluations demonstrate our algorithm's superior update efficiency and constraint satisfaction against baselines.

\section{Related Works}

\textbf{Differentiable RL.}
In many RL applications, the knowledge of the underlying transition function facilitates policy optimization through analytical gradients, a method known as Differentiable RL~\citep{mohamed2020monte,jaisson2022deep}.
The development of differentiable simulators~\citep{degrave2019differentiable,werling2021fast,xian2023fluidlab}, which represent systems as differentiable computational graphs, significantly advance this research area.
Some well-known differentiable RL algorithms include BPTT~\citep{bptt}, POSD~\citep{mora2021pods}, and SHAC~\citep{Jie2022shac}.
Moreover, differentiable RL techniques extend to model-based algorithms that access analytical gradients from World Models~\citep{clavera2020model,as2022constrained,pmlr-v202-parmas23a}.
Our algorithm, consistent with other Differentiable RL methods, depends on either a differentiable simulator or the World Model method for modeling task environments. Considering the swift advancements in related techniques, we view this requirement optimistically. For further discussion, see Section~\ref{app:limitation}.

\textbf{Safe RL.}
The work closely related to ours focuses on on-policy deep Safe RL algorithms, which are categorized into Lagrangian and Convex Optimization methods~\citep{xu2022trustworthy}.
Lagrangian methods, such as PDO~\citep{chow2018risk} and CPPO-PID~\citep{stooke2020responsive}, alternate between addressing the primal and the dual problems.
The latest APPO~\citep{dai2023augmented} mitigates the oscillatory issue by augmenting a simple quadratic term.
Convex Optimization methods, such as CPO~\citep{achiam2017constrained} and CVPO~\citep{liu2022constrained}, optimize based on the solution to the primal problem.
The latest method, CUP~\citep{yang2022constrained}, divides updates into two steps: updating in the steepest direction first, then mapping into the feasible domain if constraints are violated.
Above all methods use estimates of objective and constraint functions to formulate surrogate problems~\citep{achiam2017constrained}, which we refer to as \abe{}.
Thus, the accuracy of estimates impacts the efficiency and feasibility of updates.

However, research on finite-horizon constraints has been limited to non-deep Safe RL contexts~\citep{kalagarla2021sample,guin2023policy}.
Despite their prevalence in the Safe RL Benchmark, aforementioned deep Safe RL algorithms treat these constraints as if they were infinite-horizon (such as Safety-Gym~\citep{ray2019benchmarking}, Bullet-Safety-Gym~\citep{gronauer2022bullet}, Safety-Gymnasium~\citep{ji2023safety}, and OmniSafe~\citep{ji2023omnisafe}).
This leads to poor constraint satisfaction within these benchmarks.

\section{Preliminaries}
\label{sec:prelim}

\subsection{Constrained Markov Decision Process}
RL is typically framed as a Markov Decision Process (MDP)~\citep{puterman2014markov}, denoted $\mathcal{M}\triangleq\left\langle \mathcal{S}, \mathcal{A}, r, {P}, \mu_0 \right\rangle$, which encompasses the state space $\mathcal{S}$, the action space $\mathcal{A}$, a reward function $r$, the transition probability ${P}$, and the initial state distribution $\mu_0$.
A stationary policy $\pi$ represents a probability distribution defining the likelihood of taking action $a$ in state $s$.
The set $\Pi$ symbolizes the collection of all such stationary policies.
The primary objective of RL is to optimize the performance metric typically formulated as the total expected return over a finite horizon $T$, namely, $\mathcal{J}_R(\pi) = \E_{\tau\sim\pi}\left[\sum^{T-1}_{t=0}r(s_t, a_t)\right]$.

Generally, Safe RL is formulated as a Constrained MDP (CMDP)~\citep{altman1999constrained}, $\mathcal{M} \cup \mathcal{C}$, augmenting the standard MDP $\mathcal{M}$ with an additional set of constraints $\mathcal{C}$.
This constraint set $\mathcal{C}=\left\{\left(c_{i}, b_{i}\right)\right\}_{i=1}^{m}$ consists of pairs of cost functions $c_{i}$ and corresponding thresholds $b_{i}$.
The constraint function is defined as the cumulative cost over the horizon $T$, $\mathcal{J}_{C_{i}}(\pi)=\mathbb{E}_{\tau\sim\pi}\left[\sum_{t=0}^{T-1} c_{i}\left(s_{t}, a_{t}\right)\right]$, and the feasible policy set is $\Pi_\mathcal{C}=\mathop{\cap}^m_{i=1}\{\pi\in\Pi\mid\mathcal{J}_{C_{i}}(\pi)\leq b_i\}$.
The objective of Safe RL is to find the optimal feasible policy, $\pi^{\star}= \mathop{\arg \max}_{\pi \in \Pi_{\mathcal{C}}}\mathcal{J}_R(\pi)$.

\subsection{Advantage-based Estimation Method}\label{sec:abe_anlysis}

By introducing the discount factor $\gamma$, we define the objective function for an infinite-horizon scenario as $\mathcal{J}^\gamma_R(\pi) = \mathbb{E}_{\tau\sim\pi}\left[\sum^{\infty}_{t=0}\gamma^t r(s_t, a_t)\right]$ and the constraint function as $\mathcal{J}^\gamma_{C_{i}}(\pi) = \mathbb{E}_{\tau\sim\pi}\left[\sum_{t=0}^{\infty} \gamma^t c_{i}(s_{t}, a_{t})\right]$.
We express the infinite-horizon value function as $V^\gamma_\pi(s)= \mathbb{E}_{\tau\sim\pi}\left[\sum^{\infty}_{t=0}\gamma^t r_t\mid s_0=s\right]$ and the state-action value function as $Q^\gamma_\pi(s,a)= \mathbb{E}_{\tau\sim\pi}\left[\sum^{\infty}_{t=0}\gamma^t r_t\mid s_0=s,a_0=a\right]$. The infinite-horizon advantage function is $A^\gamma_\pi(s,a)=Q^\gamma_\pi(s,a) - V^\gamma_\pi(s)$.
The discounted future state distribution $d^\gamma_\pi$ is denoted as $d^\gamma_\pi(s)=(1-\gamma)\sum^\infty_{t=0}\gamma^tP(s_t=s\mid\pi)$. Then, the difference in some metrics between two policies $\pi$, $\pi'$ can be derived as~\citep{kakade2002approximately}:
\begin{equation}\label{eq:policy_diff}
    \mathcal{J}^\gamma_f(\pi')-\mathcal{J}^\gamma_f(\pi) = \frac{1}{1-\gamma}\E_{\substack{s\sim d^\gamma_{\pi'} \\ a\sim{\pi'}}}\Big[A^\gamma_\pi(s,a)\Big],
\end{equation}
where $\mathcal{J}^\gamma_f$ represents infinite-horizon $\mathcal{J}^\gamma_R$, $\mathcal{J}^\gamma_C$.
Equation~\eqref{eq:policy_diff} is difficult to estimate since $s\sim d^\gamma_{\pi'}$ is unknown. \citet{achiam2017constrained} approximates $s\sim d^\gamma_{\pi'}$ with $s\sim d^\gamma_{\pi}$ and employs importance sampling techniques to derive the following estimation:
\begin{equation}\label{eq:abm}
    \bar{\mathcal{J}}^\gamma_f(\pi') \triangleq \mathcal{J}^\gamma_f(\pi) + \frac{1}{1-\gamma}\E_{\substack{s\sim d^\gamma_{\pi} \\ a\sim{\pi}}} \Big[\frac{\pi'(s,a)}{\pi(s,a)}A^\pi(s,a)\Big],
\end{equation}
where $d^\pi$ and $A^\pi(s,a)$ are both defined in the infinite-horizon format and it requires that $\gamma\neq1$.
Thus, algorithms based on Equation~\eqref{eq:abm} have to treat all constraints as if they were infinite-horizon.

\subsection{Policy Optimization with Differentiable Dynamics}

Differentiable simulators~\citep{freeman2021brax} represent physical rules using a differentiable computational graph, $\bm{s}_{t+1}=\mathcal F(\bm{s}_t,\bm{a}_t)$, allowing them to participate in gradient back-propagation process~\citep{mohamed2020monte}.
The fundamental approach for policy optimization with differentiable physical dynamics is Back-propagation Through Time (BPTT)~\citep{mozer2013focused}.
Multiple trajectories $\{\tau_i\}^N_{i=1}$ are collected from and then derivation is performed along the trajectory.
Then, the loss function is
\begin{equation}
    \mathcal{L}^\text{BPTT} = -\frac{1}{NT}\sum^N_{i=1}\sum^{T-1}_{t=0}r(\bm{s}^i_t,\bm{a}^i_t).
\end{equation}

The Short-Horizon Actor-Critic (SHAC) approach \citep{Jie2022shac} modifies BPTT by ultilizing the value function $V$ to segment trajectories into sub-windows of length $h$:
\begin{equation}
    \mathcal{L}^\text{SHAC} = -
    \frac{1}{Nh}\sum_{i = 1}^{N}\Big[\sum_{t = t_0}^{t_0 + h-1}\gamma^{t - t_0}r(\bm{s}^i_t,\bm{a}^i_t) + \gamma^h V(\mathbf{s}^i_{t_0 + h})\Big].
\end{equation}
This change limits the maximum length for gradient back-propagation to $h$, thereby mitigating the issues of gradient vanishing/exploding and making the training more stable.

\section{Constrained Surrogate Problem using Gradient-based Estimation}
\label{sec:opt}

Differentiable environments represent physical dynamics as differentiable computational graphs, allowing first-order gradients of objective and constraint functions to be derived via gradient back-propagation along trajectories.
In this section, we propose a new estimation method based on this feature, called \gbe{} (GBE).
It facilitates more accurate approximations for the next police's objective and constraint functions, which leads to a new constrained surrogate problem for solving Safe RL.

\subsection{Gradient-based Estimation for Objective and Constraint Functions}
\label{sec:bound}
Consider a parameterized policy $\pi_\theta$ within a parameter space $ \Theta $, e.g., represented by a neural network.
In the context of differentiable environments, both the objective function $\mathcal{J}_R(\bm\theta)$ and the constraint function $ \mathcal{J}_C(\bm{\theta}) $ can be regarded as differentiable over $\Theta$.
Consequently, we can compute the first-order gradients of them along the trajectories, denoted as {$\nabla_{\bm\theta}\mathcal{J}_R(\bm{\theta})$ and $\nabla_{\bm\theta}\mathcal{J}_C(\bm{\theta})$}, through back-propagation.
For simplicity and generality, we will focus on a single constraint scenario, though the method applies to multiple constraints via corresponding matrix operations.

For a minor update $\bm \delta$ in the policy parameter space, transitioning from $ \bm{\theta}_0 $ to $ \bm{\theta}_0 + \bm{\delta} $, consider the function $\mathcal{J}_f$, which encapsulates both the objective function $ \mathcal J_R $ and the constraint function $ \mathcal J_C$. {We perform a first-order Taylor expansion of $\mathcal{J}_f$ at $\bm{\theta}_0$ to obtain:
\begin{equation}
    \label{eq:taylor}
    \begin{aligned}
    \mathcal J_f(\bm{\theta}_0+\bm{\delta}) = \mathcal J_f (\bm{\theta}_0) &+ \bm{\delta}^\top \nabla_{\bm{\theta}} \mathcal J_f(\bm{\theta}_0) \\&+ \frac{1}{2} \bm{\delta}^\top\nabla_{\bm{\theta}}^2 \mathcal J_f(\bm{\theta}_0+t\bm{\delta})\bm{\delta}.
    \end{aligned}
\end{equation}}
Wherein, $t\in(0,1)$ and $\frac{1}{2} \bm{\delta}^\top\nabla_{\bm{\theta}}^2 \mathcal J_f(\bm{\theta}_0+t\bm{\delta})\bm{\delta}$ represents the Peano remainder term, which is $ o(\|\bm{\delta}\|) $.
When $ \bm{\delta} $ is sufficiently small, the remainder term becomes negligible.
Therefore, we propose the \textit{Gradient-based Estimation} method to estimate the values of the objective and constraint functions after a minor update $\bm\delta$ from $\bm{\theta}_0$ as follows:
\begin{equation}
    \hat{\mathcal J}_R(\bm{\theta}_0+\bm{\delta}) \triangleq \mathcal J_R (\bm{\theta}_0) + \bm{\delta}^\top \nabla_{\bm{\theta}} \mathcal J_R(\bm{\theta}_0),
    \label{eq:estimation_obj}
\end{equation}
\begin{equation}
    \hat{\mathcal J}_C(\bm{\theta}_0+\bm{\delta}) \triangleq \mathcal J_C (\bm{\theta}_0) + \bm{\delta}^\top \nabla_{\bm{\theta}} \mathcal J_C(\bm{\theta}_0).
    \label{eq:estimation_cons}
\end{equation}
Regarding the error analysis for these estimates, the following lemma states:
\begin{restatable}{lemma}{TheEstError}
    \label{the:estimation_error}
    Assume $ \bm{\theta}_0 \in \Theta $ and $ \mathcal J_f(\bm{\theta}) $ is twice differentiable in a neighborhood surrounding $ \bm{\theta}_0 $. Let $ \bm{\delta} $ be a small update from $ \bm{\theta}_0 $. If we estimate $\mathcal J_f(\bm{\theta}_0+\bm{\delta})$ as $\hat{\mathcal J}_f(\bm{\theta}_0+\bm{\delta}) = \mathcal J_f (\bm{\theta}_0) + \bm{\delta}^\top \nabla_{\bm{\theta}} \mathcal J_f(\bm{\theta}_0)$ and given that $\epsilon = \max_{t\in(0,1)}\left|\nabla_{\bm{\theta}}^2 \mathcal J_R(\bm{\theta}_0+t\bm{\delta})\right|$, the estimation error can be bounded as:
    \begin{equation}
        \label{eq:estimation_error}
        \left|\hat{\mathcal J}_f(\bm{\theta}_0+\bm{\delta}) - {\mathcal J}_f(\bm{\theta}_0+\bm{\delta})\right|\leq\frac{1}{2} \epsilon \|\bm{\delta}\|^2_2.
    \end{equation}
\end{restatable}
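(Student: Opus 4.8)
The plan is to read the desired bound directly off the first-order Taylor expansion already recorded in Equation~\eqref{eq:taylor}. Since $\mathcal J_f$ is assumed twice differentiable in a neighborhood of $\bm\theta_0$, and $\bm\delta$ is small enough that the whole segment $\{\bm\theta_0+t\bm\delta : t\in[0,1]\}$ stays inside that neighborhood, Taylor's theorem with the Lagrange form of the remainder applies and gives, for some $t\in(0,1)$,
\begin{equation*}
    \mathcal J_f(\bm\theta_0+\bm\delta) = \mathcal J_f(\bm\theta_0) + \bm\delta^\top\nabla_{\bm\theta}\mathcal J_f(\bm\theta_0) + \tfrac12\,\bm\delta^\top\nabla_{\bm\theta}^2\mathcal J_f(\bm\theta_0+t\bm\delta)\,\bm\delta .
\end{equation*}

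Next I would subtract the estimator. By definition $\hat{\mathcal J}_f(\bm\theta_0+\bm\delta) = \mathcal J_f(\bm\theta_0) + \bm\delta^\top\nabla_{\bm\theta}\mathcal J_f(\bm\theta_0)$ is exactly the zeroth- and first-order part of the expansion, so the difference is precisely the remainder:
\begin{equation*}
    \hat{\mathcal J}_f(\bm\theta_0+\bm\delta) - \mathcal J_f(\bm\theta_0+\bm\delta) = -\tfrac12\,\bm\delta^\top\nabla_{\bm\theta}^2\mathcal J_f(\bm\theta_0+t\bm\delta)\,\bm\delta .
\end{equation*}
Taking absolute values and bounding the quadratic form by the spectral norm of the Hessian, $|\bm\delta^\top M\bm\delta|\le\|M\|_2\,\|\bm\delta\|_2^2$, then replacing $\|\nabla_{\bm\theta}^2\mathcal J_f(\bm\theta_0+t\bm\delta)\|_2$ by its supremum $\epsilon$ over $t\in(0,1)$, yields $|\hat{\mathcal J}_f-\mathcal J_f|\le\tfrac12\epsilon\|\bm\delta\|_2^2$, which is Equation~\eqref{eq:estimation_error}.

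There is essentially no deep obstacle here; the statement is a direct corollary of Taylor's theorem. The only points that need a little care are bookkeeping ones: clarifying that $|\nabla_{\bm\theta}^2\mathcal J_f(\cdot)|$ in the hypothesis is to be read as the operator (spectral) norm of the Hessian matrix, so that the quadratic-form inequality can be invoked cleanly; noting that the constant $\epsilon$ should be taken with respect to $\mathcal J_f$ (matching the function being estimated) rather than only $\mathcal J_R$; and observing that the supremum over the open interval $(0,1)$ is finite and attained because twice differentiability makes the relevant quantity well defined along the compact segment $[0,1]$. Once these conventions are fixed, the three-line computation above completes the argument.
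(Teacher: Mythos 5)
Your proposal is correct and follows essentially the same route as the paper's own proof: a Taylor expansion with the Lagrange-form remainder, identifying the estimation error with that remainder, and bounding the quadratic form by $\frac{1}{2}\epsilon\|\bm{\delta}\|_2^2$. Your side remarks (reading $|\nabla^2_{\bm\theta}\mathcal J_f|$ as the Hessian's operator norm and taking $\epsilon$ with respect to $\mathcal J_f$ rather than $\mathcal J_R$) are sensible clarifications of the statement, not deviations in the argument.
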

\begin{proof}
The primary source of estimation error is the neglect of higher-order infinitesimal remainders.
Thus, the error is $\left|\hat{\mathcal J}_f(\bm{\theta}_0+\bm{\delta}) - {\mathcal J}_f(\bm{\theta}_0+\bm{\delta})\right| = \frac12 \big|\bm{\delta}^\top\nabla_{\bm{\theta}}^2 \mathcal J_f(\bm{\theta}_0+t\bm{\delta})\bm{\delta}\big|\leq\frac{1}{2} \epsilon \|\bm{\delta}\|^2_2$.
See Appendix \ref{app:first_order_error} for more details.
\end{proof}
This finding indicates that as policy parameters are updated from $\bm{\theta}_0$ to $\bm{\theta}_0 + \bm{\delta}$, the upper bounds of the estimation errors for both the objective and constraint functions are positively correlated with the square of the $L_2$ norm of the update vector $\bm{\delta}$.
Consequently, when employing $\hat{\mathcal J}_R$ and $\hat{\mathcal J}_C$ as surrogate objective and constraint functions, meticulous control over the magnitude of these updates becomes essential.
By carefully managing $\|\bm{\delta}\|^2_2$, we could ensure that the estimation error remains within an acceptable range, thus facilitating both effective performance improvement and precise adherence to constraints.

\subsection{Constrained Surrogate Problem within Trust Region}

Based on the analysis of Theorem~\ref{the:estimation_error}, the idea of controlling the error by managing $\|\bm{\delta}\|^2_2$ naturally aligns with the concept of trust regions~\citep{schulman2015trust, 9334437}.
Given a trust region radius $\hat\delta$, we suppose that updated parameter $\bm{\theta_{k+1}}$ of the $k^\text{th}$ iteration within the trust region $\Theta_k=\left\{ \bm{\theta} \in \Theta \mid \|\bm{\theta} - \bm{\theta}_k\|^2 \leq \hat\delta \right\}$ are credible.

By employing the approximations of the objective function in Equation~\eqref{eq:estimation_obj} and the constraint function in Equation~\eqref{eq:estimation_cons}, We transform the solution of the primal Safe RL problem into an iterative process of solving a series of sub-problems within predefined trust regions.
Given the initial policy parameter $\bm{\theta}_{k}$ of the $k^\text{th}$ iteration, our sub-problem targets finding the next optimal and credible parameter  $\bm{\theta}_{k+1}\in\Theta_k$, which not only maximize the surrogate objective function $\hat{\mathcal J}_R(\bm\theta_{k+1})$ but also conform to the surrogate constraint $\hat{\mathcal J}_C(\bm\theta_{k+1}) \leq b$. Thus, the surrogate sub-problem within a given trust region at $k^{th}$ iteration can be represented as:
\begin{equation}
    \begin{aligned}
        \bm{\theta}_{k+1} = &\arg\max_{\bm{\theta}\in\Theta} \quad  \left(\bm{\theta} - \bm{\theta}_{k}\right)^\top \nabla_{\bm{\theta}} \mathcal J_R(\bm{\theta}_k)                                      \\
        \text{s.t.} \quad                                         & \mathcal J_C(\bm{\theta}_k) + \left(\bm{\theta} - \bm{\theta}_{k}\right)^\top \nabla_{\bm{\theta}} \mathcal J_C(\bm{\theta}_k) \leq b \\
                                                              & \left(\bm{\theta} - \bm{\theta}_{k}\right)^\top\left(\bm{\theta} - \bm{\theta}_{k}\right) \leq \hat\delta.
    \end{aligned}
    \label{eq:sub-problem}
\end{equation}

\begin{figure}[t]
    \centering
    \includegraphics[width=0.95\linewidth]{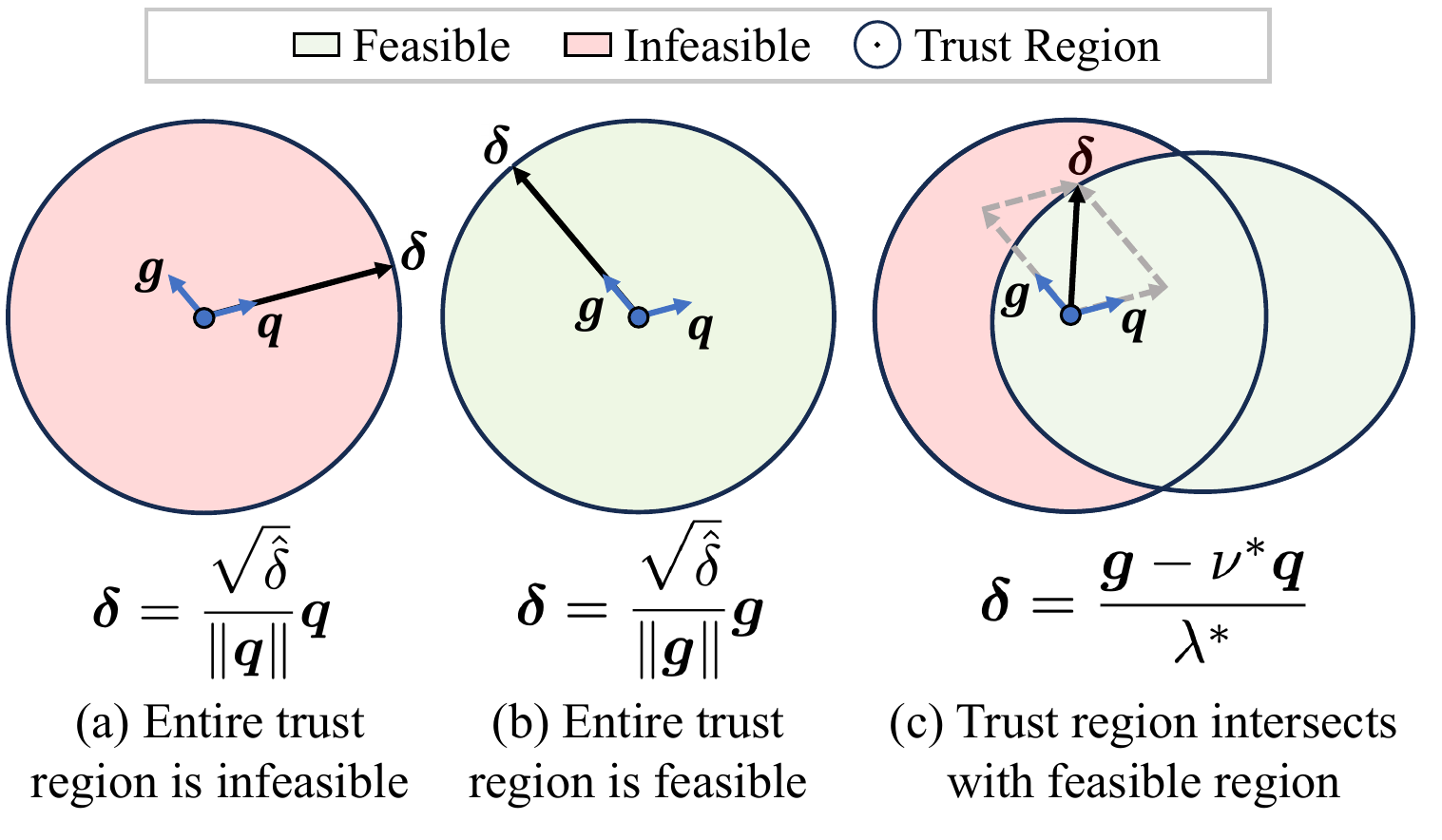}
    \caption{The computational relationship between the policy update $\bm{\delta}$, the gradient of the objective function $\bm{g}$, and the gradient of the constraint function $\bm{g}$ varies in three scenarios.}
    \label{fig:delta}
\end{figure}

\section{Constrained Gradient-based Policy Optimization}
\label{sec:impl}
Based on the constrained surrogate sub-problem in Equation~\eqref{eq:sub-problem}, we develop a novel Safe RL algorithm named \textit{Constrained Gradient-based Policy Optimization} (CGPO).

\subsection{Solution to Surrogate Sub-problem}\label{sec:sol-to-sub-prob}
\textbf{Notations.} Considering the $ k^\text{th} $ iteration within the trust region $ \Theta_k $, we introduce additional notations to make the discussion more concise:
$\bm{g}_k\triangleq\nabla_{\bm{\theta}} \mathcal J_R(\bm{\theta}_k)$,
$\bm{g}_k\triangleq\nabla_{\bm{\theta}} \mathcal J_C(\bm{\theta}_k)$,
$c_k\triangleq\mathcal{J}_C(\bm\theta_k)-b$, and
$\bm\delta\triangleq\bm\theta-\bm\theta_k$.
With these definitions, we rewrite the sub-problem ~\eqref{eq:sub-problem} within the trust region:
\begin{equation}
  \begin{aligned}
    \label{eq:rewrite_problem}
    \max_{\bm{\delta}}~\bm{g}_k^\top\bm{\delta}\quad
    \text{s.t.}~c_k + \bm{q}^\top_k\bm{\delta} \leq 0, \quad \bm{\delta}^\top\bm{\delta} \leq \hat\delta
  \end{aligned}
\end{equation}

Since the sub-problem~\eqref{eq:rewrite_problem} may have no solution, we first discuss the conditions under which this problem is unsolvable. The following theorem holds:
\begin{restatable}[Solvability Conditions for the Sub-problem]{theorem}{CondOfInfea}\label{the:cond_of_sub}
    The sub-problem~\eqref{eq:rewrite_problem} is unsolvable if and only if $c_k^2 / \bm{q}^\top_k\bm{q}_k - \hat\delta > 0$ and $c_k>0$.
\end{restatable}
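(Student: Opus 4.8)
The plan is to observe that the sub-problem~\eqref{eq:rewrite_problem} maximizes a continuous linear objective over the intersection of a closed half-space $H=\{\bm\delta:\bm q_k^\top\bm\delta\le -c_k\}$ and the closed ball $B=\{\bm\delta:\bm\delta^\top\bm\delta\le\hat\delta\}$. Whenever $H\cap B$ is nonempty it is compact (closed and contained in $B$), so the objective attains a maximum; hence the sub-problem is \emph{unsolvable} if and only if $H\cap B=\emptyset$. The whole proof therefore reduces to characterizing emptiness of $H\cap B$, a purely geometric question about a ball and a half-space.

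First I would settle the direction that pins down the sign of $c_k$. If $c_k\le 0$, then $\bm\delta=\bm 0$ satisfies $\bm q_k^\top\bm 0=0\le -c_k$ and $\bm 0^\top\bm 0=0\le\hat\delta$, so $\bm 0\in H\cap B$ and the problem is solvable. Contrapositively, unsolvability forces $c_k>0$, which is one of the two claimed conditions. Next, assuming $c_k>0$, the origin (the center of $B$) does not lie in $H$, so the point of $H$ closest to $\bm 0$ is its orthogonal projection onto the bounding hyperplane $\{\bm q_k^\top\bm\delta=-c_k\}$, namely $\bm\delta^\star=-\bigl(c_k/\bm q_k^\top\bm q_k\bigr)\bm q_k$, at Euclidean distance $\lVert\bm\delta^\star\rVert_2=c_k/\lVert\bm q_k\rVert_2$. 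Then $H\cap B\neq\emptyset$ exactly when this distance is at most the radius $\sqrt{\hat\delta}$ of $B$, i.e.\ when $c_k^2/\bm q_k^\top\bm q_k\le\hat\delta$ (squaring is valid since both sides are nonnegative; at equality $\bm\delta^\star$ itself lies in $H\cap B$ because both sets are closed). Hence, for $c_k>0$, the problem is unsolvable precisely when $c_k^2/\bm q_k^\top\bm q_k-\hat\delta>0$. Combining the two cases gives the stated ``if and only if''.

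I would close with a one-line remark on the degenerate case $\bm q_k=\bm 0$: there the linear constraint collapses to $c_k\le 0$, so the sub-problem is unsolvable exactly when $c_k>0$, which is consistent with reading $c_k^2/\bm q_k^\top\bm q_k$ as $+\infty$ in the theorem's condition. I do not expect a substantive obstacle here — the argument is elementary convex geometry. The only points demanding care are (i) justifying that ``no optimal solution'' is equivalent to ``empty feasible set'' via the compactness of $H\cap B$, and (ii) handling the sign of $c_k$ and the $\bm q_k=\bm 0$ corner case explicitly so the equivalence is airtight; the distance-to-hyperplane computation itself is routine.
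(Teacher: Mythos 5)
Your proof is correct and takes essentially the same route as the paper's: both reduce unsolvability to emptiness of the intersection of the trust-region ball with the feasible half-space, and decide it by comparing the norm of the projection point $-c_k\bm{q}_k/(\bm{q}_k^\top\bm{q}_k)$ against $\sqrt{\hat\delta}$, with the sign of $c_k$ locating the ball's center relative to the half-space. If anything yours is slightly tighter — you make the solvability-iff-feasibility step explicit via compactness, measure the distance to the half-space directly rather than arguing through the ball not meeting the boundary hyperplane, and cover the $\bm{q}_k=\bm{0}$ corner case — whereas the paper obtains the same projection point via a small KKT computation.
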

\begin{proof}
    Consider whether there is an intersection between the trust region and the feasible half-space. For a detailed proof, refer to Appendix~\ref{app:solvability}.
\end{proof}
Through a similar proof, we arrive at the following corollary:
\begin{restatable}{corollary}{AllFea}\label{the:all_feasible}
    $\bm{\theta}$ is deemed feasible for every $\bm{\theta}$ within $\Theta_k$ if and only if $c_k^2 / \bm{q}_k^\top\bm{q}_k - \hat{\delta} > 0$ and $c_k \leq 0$ are both satisfied.
\end{restatable}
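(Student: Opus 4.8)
The plan is to mirror the proof of Theorem~\ref{the:cond_of_sub}, swapping the roles of the feasible and infeasible half-spaces: instead of asking whether the trust-region ball \emph{meets} the feasible half-space, I would ask whether it is entirely \emph{contained} in it. Write $\bm{\delta} = \bm{\theta} - \bm{\theta}_k$ and $B = \{\bm{\delta} : \bm{\delta}^\top\bm{\delta} \le \hat{\delta}\}$. Then the statement ``every $\bm{\theta} \in \Theta_k$ is feasible'' is, by definition of the surrogate constraint $\hat{\mathcal J}_C(\bm{\theta}) \le b$, exactly $B \subseteq \{\bm{\delta} : c_k + \bm{q}_k^\top\bm{\delta} \le 0\}$, which is in turn equivalent to $\max_{\bm{\delta}\in B}\big(c_k + \bm{q}_k^\top\bm{\delta}\big) \le 0$ (the max exists since $B$ is compact).

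The first concrete step is to evaluate this maximum (assume $\bm{q}_k \neq \bm{0}$ throughout, so that $c_k^2/\bm{q}_k^\top\bm{q}_k$ is defined; the case $\bm{q}_k = \bm{0}$ is immediate, as the condition collapses to $c_k \le 0$). By Cauchy--Schwarz, for all $\bm{\delta}\in B$ we have $\bm{q}_k^\top\bm{\delta} \le \|\bm{q}_k\|_2 \|\bm{\delta}\|_2 \le \|\bm{q}_k\|_2\sqrt{\hat{\delta}}$, and this bound is attained at $\bm{\delta}^\star = \sqrt{\hat{\delta}}\,\bm{q}_k/\|\bm{q}_k\|_2$. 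Hence the containment is equivalent to the scalar inequality $c_k + \|\bm{q}_k\|_2\sqrt{\hat{\delta}} \le 0$.

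The second step is purely algebraic: rewrite $c_k + \|\bm{q}_k\|_2\sqrt{\hat{\delta}} \le 0$ as the conjunction in the statement. Since $\|\bm{q}_k\|_2\sqrt{\hat{\delta}} \ge 0$, the inequality forces $c_k \le -\|\bm{q}_k\|_2\sqrt{\hat{\delta}} \le 0$, giving $c_k \le 0$; and since both $-c_k$ and $\|\bm{q}_k\|_2\sqrt{\hat{\delta}}$ are nonnegative, squaring is an equivalence, so $-c_k \ge \|\bm{q}_k\|_2\sqrt{\hat{\delta}}$ iff $c_k^2 \ge (\bm{q}_k^\top\bm{q}_k)\hat{\delta}$ iff $c_k^2/\bm{q}_k^\top\bm{q}_k - \hat{\delta} \ge 0$. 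Conversely, $c_k \le 0$ together with $c_k^2/\bm{q}_k^\top\bm{q}_k \ge \hat{\delta}$ give $-c_k = |c_k| \ge \|\bm{q}_k\|_2\sqrt{\hat{\delta}}$ after taking square roots, which is the scalar inequality; this closes the ``if and only if''.

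There is no real obstacle here beyond careful sign bookkeeping in the squaring step (it is essential that $-c_k$, not $c_k$, is the nonnegative quantity being squared). The one point worth flagging is the boundary case $c_k^2/\bm{q}_k^\top\bm{q}_k = \hat{\delta}$ with $c_k < 0$: there the maximum above equals exactly $0$ and is attained on the sphere, so $\Theta_k$ remains entirely feasible under the constraint $\hat{\mathcal J}_C \le b$, and the ``natural'' quadratic condition is then the non-strict $\ge 0$; if instead feasibility is read as strict satisfaction $\hat{\mathcal J}_C < b$, the same argument with strict inequalities throughout yields exactly the stated strict $> 0$, so I would fix which convention is in force at the start of the proof.
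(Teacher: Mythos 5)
Your proof is correct, and it takes a cleaner route than the paper. The paper proves the corollary by reusing the machinery of Theorem~\ref{the:cond_of_sub}: it computes the minimum-norm update $\mathring{\bm{\delta}}_k=-c_k\bm{q}_k/\bm{q}_k^\top\bm{q}_k$ onto the boundary hyperplane via a KKT argument, observes that $c_k^2/\bm{q}_k^\top\bm{q}_k-\hat\delta>0$ means the ball misses that hyperplane, and then uses the sign of $c_k$ (feasibility of the center) to decide on which side the whole ball lies. You instead compute the support function of the ball, $\max_{\bm\delta\in B}\bigl(c_k+\bm{q}_k^\top\bm\delta\bigr)=c_k+\|\bm{q}_k\|_2\sqrt{\hat\delta}$, via Cauchy--Schwarz, which turns the containment $B\subseteq\{\bm\delta: c_k+\bm{q}_k^\top\bm\delta\le 0\}$ into a single scalar inequality and makes both directions of the equivalence fall out of elementary sign-careful algebra; you also cover $\bm{q}_k=\bm{0}$, which the paper silently excludes. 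Your flag about the boundary case is a genuine and worthwhile observation, not mere pedantry: when $c_k<0$ and $c_k^2/\bm{q}_k^\top\bm{q}_k=\hat\delta$, the ball is tangent to the hyperplane from the feasible side, so every point of $\Theta_k$ satisfies $\hat{\mathcal J}_C\le b$ even though the stated strict condition $c_k^2/\bm{q}_k^\top\bm{q}_k-\hat\delta>0$ fails; hence the ``only if'' direction of the corollary as written holds only with the non-strict $\ge 0$ (the paper's proof glosses over this because ``does not intersect the hyperplane'' is sufficient but not necessary for full feasibility). In short, your argument is equivalent in substance, more self-contained in mechanics, and more precise at the boundary than the paper's.
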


Based on Theorem~\ref{the:cond_of_sub} and Corollary~\ref{the:all_feasible}, we solve for $\bm\theta_{k+1}$ of sub-problem~\eqref{eq:sub-problem} in three different scenarios, as illustrated in Figure~\ref{fig:delta}.
\begin{enumerate}[label=(\alph*),leftmargin=*]
    \item If $c^2/{\bm{q}^\top\bm{q}}-\hat\delta>0$ and $c>0$, the entire trust region is infeasible. We update the policy along the direction of the steepest descent of the constraint function, namely, $\bm{\theta}_{k+1} = \bm{\theta}_{k} - \frac{\sqrt{\hat\delta}}{\|\bm{q}_k\|}\bm{q}_k.$
    \item If $c^2/{\bm{q}^\top\bm{q}}-\hat\delta>0$ and $c\leq0$, the trust region lies entirely within the constraint-satisfying half-space. Similarly, we update along the direction of the steepest ascent of the objective function, namely, $\bm{\theta}_{k+1} = \bm{\theta}_{k} + \frac{\sqrt{\hat\delta}}{\|\bm{g}_k\|}\bm{g}_k.$
    \item In other cases, the trust region partially intersects with the feasible region, so we have to solve the constrained quadratic sub-problem~\eqref{eq:rewrite_problem}.
\end{enumerate}
\vspace{-0.5em}
The sub-problem~\eqref{eq:rewrite_problem} is characterized as convex.
When the trust region is partially feasible, the existence of at least one strictly feasible point is guaranteed.
Consequently, strong duality holds as the conditions of Slater's theorem are fulfilled.
By introducing two dual variables $\lambda$ and $\nu$, we construct the dual function of the primal problem as follows:
\begin{equation}
  \begin{aligned}
    L(\bm\delta, \lambda, \nu)
     & = -\bm{g}_k^\top\bm{\delta} + \nu\left(c_k + \bm{q}_k^\top\bm{\delta}\right) + \frac{\lambda}{2}\left(\bm{\delta}^\top\bm{\delta} - \hat\delta\right)
  \end{aligned}
\end{equation}
Since the sub-problem~\eqref{eq:rewrite_problem} satisfies the strong duality condition, any pair of primal and dual optimal points $(\bm\delta^*_k,\lambda^*_k,\nu^*_k)$ must satisfy the KKT conditions, namely,
\begin{align}
    &\nabla_{\bm\delta}L(\bm\delta,\lambda,\nu) = -\bm{g}_k + \nu \bm{q}_k + \lambda\bm\delta = 0,
    \label{eq:main_kkt_1}
    \\
    &\nu\left(c_k + \bm{q}_k^\top\bm{\delta}\right) = 0,
    \\
    &\lambda\left(\bm{\delta}^\top\bm{\delta} - \hat\delta\right) = 0,
    \\
    &c_k+\bm{q}^\top_k\bm\delta\leq0,\quad\bm\delta^\top\bm\delta - \hat\delta \leq0,\quad\nu \geq 0, \quad\lambda \geq 0.
    \label{eq:main_kkt_4}
\end{align}
Note that the optimal dual variables $(\lambda_k^*, \nu_k^*)$ for Equation~\eqref{eq:main_kkt_1}-\eqref{eq:main_kkt_4} can be directly expressed as a function of $\bm{g}_k$ and $\bm{q}_k$. We provide their pseudo-code in Algorithm~\ref{alg:calc_dual_var}. For a detailed solution process of Equation~\eqref{eq:main_kkt_1}-\eqref{eq:main_kkt_4}, refer to Appendix~\ref{app:solution-to-subprob}.
Then, following the derivation from Equation~\eqref{eq:main_kkt_1}, we update the policy by $\bm{\theta}_{k+1} = \bm{\theta}_{k} + \frac{\bm{g}_k-\nu^*_k\bm{q}_k}{\lambda^*_k}$.

So far, the three scenarios for updating policies by solving the surrogate sub-problem~\eqref{eq:sub-problem} have been fully presented.
This forms the core component of our CGPO algorithm.

\subsection{Worst-Case Analysis}\label{sec:conv_analysis}

In the following theorem, we detail the bounds for performance update and constraint violations under the worst-case scenarios after the policy update which results from solving the surrogate sub-problem~\eqref{eq:sub-problem}.
\begin{restatable}[Worst-Case Performance Update and Constraint Violation]{theorem}{WorstCase}
  \label{the:cgpo_wc}
  Suppose $\bm{\theta}_k$, $\bm{\theta}_{k+1}\in\Theta$ are related by Equation~\eqref{eq:sub-problem}. If $\bm{\theta}_k$ is feasible, a lower bound on the policy performance difference between $\bm{\theta}_{k+1}$ and $\bm{\theta}_k$ is
  \begin{equation}
    \label{eq:DCPOPerformance}
    \mathcal J_R (\bm{\theta}_{k+1}) - \mathcal J_R (\bm{\theta}_k) \geq - \frac{1}{2} \epsilon^R_k \hat\delta
  \end{equation}
  where $\epsilon^R_k = \max_{t\in(0,1)}\left|\nabla_{\bm{\theta}}^2 \mathcal J_R(\bm{\theta}_k+t(\bm{\theta}_{k+1}-\bm{\theta}_k))\right|$.

  An upper bound on the constraint objective function of $\bm{\theta}_{k+1}$ is
  \begin{equation}
    \label{eq:DCPOConstraint}
    \mathcal J_C (\bm{\theta}_{k+1}) \leq b + \frac{1}{2}\epsilon^C_k \hat\delta,
  \end{equation}
  where $\epsilon^C_k = \max_{t\in(0,1)}\left|\nabla_{\bm{\theta}}^2 \mathcal J_C(\bm{\theta}_k+t(\bm{\theta}_{k+1}-\bm{\theta}_k))\right|$.
\end{restatable}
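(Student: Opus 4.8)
The plan is to obtain both inequalities directly from the first-order estimation-error bound of Lemma~\ref{the:estimation_error}, combined with the defining properties of the sub-problem~\eqref{eq:sub-problem}. Write $\bm{\delta}\triangleq\bm{\theta}_{k+1}-\bm{\theta}_k$, $\bm{g}_k\triangleq\nabla_{\bm{\theta}}\mathcal J_R(\bm{\theta}_k)$ and $\bm{q}_k\triangleq\nabla_{\bm{\theta}}\mathcal J_C(\bm{\theta}_k)$. Since $\bm{\theta}_{k+1}$ solves~\eqref{eq:sub-problem} (equivalently~\eqref{eq:rewrite_problem}), it is feasible for that problem, so $\|\bm{\delta}\|^2\le\hat\delta$ and $\mathcal J_C(\bm{\theta}_k)+\bm{\delta}^\top\bm{q}_k\le b$, and moreover $\bm{\delta}$ maximises $\bm{g}_k^\top\bm{\delta}$ over the feasible set. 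Note that the first-order surrogates at $\bm{\theta}_{k+1}$ are exactly $\hat{\mathcal J}_R(\bm{\theta}_{k+1})=\mathcal J_R(\bm{\theta}_k)+\bm{g}_k^\top\bm{\delta}$ and $\hat{\mathcal J}_C(\bm{\theta}_{k+1})=\mathcal J_C(\bm{\theta}_k)+\bm{q}_k^\top\bm{\delta}$, and that $\epsilon^R_k,\epsilon^C_k$ are precisely the constants that Lemma~\ref{the:estimation_error} attaches to $\mathcal J_R,\mathcal J_C$ along the segment from $\bm{\theta}_k$ to $\bm{\theta}_{k+1}$.

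For the performance bound I would first argue $\bm{g}_k^\top\bm{\delta}\ge 0$: feasibility of $\bm{\theta}_k$ means $\mathcal J_C(\bm{\theta}_k)\le b$ (i.e.\ $c_k\le0$), so $\bm{\delta}=\bm{0}$ is itself feasible for~\eqref{eq:rewrite_problem}; hence the optimal value is at least $\bm{g}_k^\top\bm{0}=0$, whence $\bm{g}_k^\top\bm{\delta}\ge0$. Then Lemma~\ref{the:estimation_error} applied to $\mathcal J_R$ gives $\mathcal J_R(\bm{\theta}_{k+1})\ge\hat{\mathcal J}_R(\bm{\theta}_{k+1})-\frac{1}{2}\epsilon^R_k\|\bm{\delta}\|^2=\mathcal J_R(\bm{\theta}_k)+\bm{g}_k^\top\bm{\delta}-\frac{1}{2}\epsilon^R_k\|\bm{\delta}\|^2\ge\mathcal J_R(\bm{\theta}_k)-\frac{1}{2}\epsilon^R_k\hat\delta$, using $\bm{g}_k^\top\bm{\delta}\ge0$, $\|\bm{\delta}\|^2\le\hat\delta$ and $\epsilon^R_k\ge0$. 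For the constraint bound, the linearised constraint of~\eqref{eq:rewrite_problem} gives $\hat{\mathcal J}_C(\bm{\theta}_{k+1})=\mathcal J_C(\bm{\theta}_k)+\bm{q}_k^\top\bm{\delta}\le b$, and Lemma~\ref{the:estimation_error} applied to $\mathcal J_C$ gives $\mathcal J_C(\bm{\theta}_{k+1})\le\hat{\mathcal J}_C(\bm{\theta}_{k+1})+\frac{1}{2}\epsilon^C_k\|\bm{\delta}\|^2\le b+\frac{1}{2}\epsilon^C_k\hat\delta$, again by $\|\bm{\delta}\|^2\le\hat\delta$. This half uses no assumption on $\bm{\theta}_k$, matching the placement of ``if $\bm{\theta}_k$ is feasible'' in the statement.

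The only point requiring care — and the main, mild, obstacle — is using the hypothesis ``$\bm{\theta}_k,\bm{\theta}_{k+1}$ are related by~\eqref{eq:sub-problem}'' correctly: it restricts attention to the regime where~\eqref{eq:rewrite_problem} is actually solved, i.e.\ scenarios (b) and (c) of Section~\ref{sec:sol-to-sub-prob}; scenario (a), where the whole trust region is infeasible by Theorem~\ref{the:cond_of_sub}, is excluded (and there the linearised-constraint step used above would fail). In scenario (c) the linearised and trust-region constraints hold by construction; in scenario (b) the update is the steepest-ascent step, and one should note that Corollary~\ref{the:all_feasible} guarantees the entire trust region — hence $\bm{\theta}_{k+1}$ — satisfies the linearised constraint, so the two feasibility facts invoked above still hold. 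Everything else is just the bookkeeping that $\frac{1}{2}\,\bm{\delta}^\top\nabla^2_{\bm{\theta}}\mathcal J_f(\cdot)\,\bm{\delta}\le\frac{1}{2}\epsilon\|\bm{\delta}\|^2_2$, which is exactly the content of Lemma~\ref{the:estimation_error}, so no new estimate is needed.
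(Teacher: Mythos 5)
Your proof is correct and follows essentially the same route as the paper's: use feasibility of $\bm{\theta}_k$ (equivalently, of $\bm{\delta}=\bm{0}$) together with optimality of $\bm{\theta}_{k+1}$ to get $\bm{g}_k^\top\bm{\delta}\geq 0$, invoke the linearised constraint and trust-region bound $\|\bm{\delta}\|^2\leq\hat\delta$, and then absorb the second-order Taylor remainder via the bound of Lemma~\ref{the:estimation_error}. Your added remark that the hypothesis excludes scenario (a) and that Corollary~\ref{the:all_feasible} covers the steepest-ascent case (b) is a careful touch the paper leaves implicit, but it does not change the argument.
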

\begin{proof}
    See Appendix \ref{app:convergence}.
\end{proof}

Theorem~\ref{the:cgpo_wc} ensures that the application of the update method purposed in Section~\ref{sec:sol-to-sub-prob}, even in the worst-case scenario, will not drastically reduce performance or severely violate constraints with each iteration.
This provides theoretical support for the stability and reliability of the algorithm.

Moreover, according to Theorem~\ref{the:cgpo_wc}, the bounds for performance update and constraint violation are related to $\epsilon^R_k$ and $\epsilon^C_k$, which are challenging to obtain in practice.
Therefore, we indirectly mitigate their impact through an adaptive radius $\hat\delta$.
Based on Equation~\eqref{eq:DCPOPerformance} and \eqref{eq:DCPOConstraint}, we define two new metrics, the reduction ratio $\rho_k$ for the objective $\mathcal J_R$ and the toleration ratio $\zeta_k$ for the constraint, namely,
$\rho_k \triangleq \frac{\mathcal{J}_R(\bm{\theta}_k)-\mathcal{J}_R(\bm{\theta}_{k+1})}{\hat{\mathcal{J}}_R(\bm{\theta}_k)-\hat{\mathcal{J}}_R(\bm{\theta}_{k+1})}$, $\zeta_k \triangleq \frac{\left|b -\mathcal{J}_C(\bm{\theta}_{k+1})\right|}
    {\left|\mathcal{J}_C(\bm{\theta}_{k+1})-\hat{\mathcal{J}}_C(\bm{\theta}_{k+1})\right|}.$

The reduction ratio $\rho_k$ evaluates the consistency between estimated and actual changes in the objective function. The toleration ratio $\zeta_k$ assesses error tolerance in the constraint satisfaction, suggesting more cautious updates as it approaches the threshold.
If $\rho_k$ and $\zeta_k$ fall short of expectations, we adjust the update radius $\hat\delta$ accordingly.
Given thresholds $0 < \eta_1 < \eta_2 < 1$ and update rates $0 < \beta_1 < 1 < \beta_2$ within the $\hat\delta$ range of $[\underline\delta, \overline\delta]$, we derive the following $\hat\delta$ update rule:
\begin{equation}
\label{eq:update_radius}
\hat\delta_{k+1} = \left\{
    \begin{array}{lr}
        \max(\beta_1 \hat\delta_k,~\underline\delta) & \text{if } \rho_k < \eta_1\mid\zeta_k < \eta_1 , \\
        \hat\delta_k & \text{otherwise}, \\
        \min(\beta_2 \hat\delta_k,~\overline\delta) & \text{if } \rho_k \geq \eta_2~\&~\zeta_k\geq\eta_2.
    \end{array}
\right.
\end{equation}
The above adaptive approach can be considered a plugin for CGPO, transforming a challenging-to-tune parameter into several more manageable parameters, thereby enhancing the algorithm's stability and convergence.

\subsection{Practical Implementation} \label{sec:gradient_calc}

Given our focus on employing gradients to solve Safe RL problems, our algorithm can incorporate any differentiable method to compute gradients $\bm{g_k}$ and $\bm{q_k}$.
Various methods are available, broadly classified into two categories: the Zero-order Batch Gradient (ZoBG) method and the First-order Batch Gradient (FoBG) method~\citep{suh2022differentiable}.
In this section, we present an example implementation of CGPO based on a variant of SHAC~\citep{Jie2022shac}, which is a FoBG method. Due to space limitations, we leave the discussion comparing ZoBG and FoBG in the Appendix~\ref{app:zobg}. Our empirical results show that the advantages of FoBG are more suited for the Safe RL field.

\begin{figure}[t]
  \centering
  \includegraphics[width=0.95\linewidth]{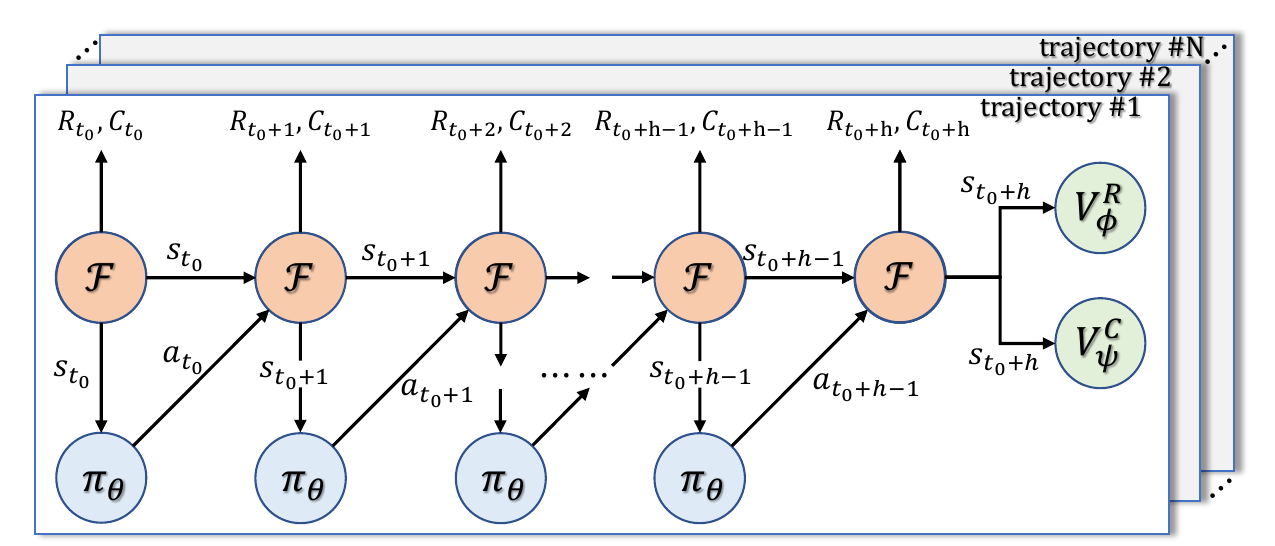}
  \caption{Gradient computation graph for the short-horizon approach. Here, $ \mathcal F(\bm{s}) $ represents the differentiable dynamics of the environment, $ R $ is the reward signal, $ C $ is the cost signal, $ \pi_\theta $ is the parameterized policy, and $ V^R_\phi $ and $ V^C_\phi $ are the value functions for the return and constraint.}
  \label{fig:loss}
\end{figure}

The short-horizon approach from SHAC segments the entire trajectory into sub-windows using the value function, improving differentiability and smoothing the optimization space.
In constrained tasks, we need to compute the objective gradients $\bm{g_k}$ and the constraint gradients $\bm{q_k}$ within the same computational graph.
Consequently, the entire computational graph, as Figure~\ref{fig:loss}, is associated with the corresponding loss of objective and constraint functions:
\begin{equation}
    \mathcal{L}_R(\bm{\theta}) = \frac{1}{N}\sum_{i = 1}^{N}\Big[\sum_{t = t_0}^{t_0 + h-1}r(\mathbf{s}^i_t, \mathbf{a}^i_t) + V^R_\phi(\mathbf{s}^i_{t_0 + h})\Big],
\label{eq:loss_r}
\end{equation}
\begin{equation}
  \label{eq:loss_c}
  \mathcal{L}_C(\bm{\theta}) = \frac{1}{N}\sum_{i = 1}^{N}\Big[\sum_{t = t_0}^{t_0 + h-1}c(\mathbf{s}^i_t, \mathbf{a}^i_t) + V^C_\psi(\mathbf{s}^i_{t_0 + h})\Big].
\end{equation}
Please see Appendix~\ref{app:grad_calc} for detailed calculations.

After updating the policy $\pi_{\bm{\theta}}$, we use these trajectory $\{\tau_i\}_{i=1}^N$ to update critic networks $V^R_\pi$ and $V^C_\psi$.
Considering that the constraint function is an accumulative sum over a finite trajectory, it is necessary for the time step $t$ to be accepted by the critic network to construct value functions $V(s^i_t, t)$ for finite-horizon return.
\begin{equation}
L^{\lambda}_V = \frac{1}{Nh}\sum^N_{i=1}\sum^{h-1}_{t=0}\left(G^{\lambda,i}_t - V(s^i_t, t)\right)^2
\label{eq:value}
\end{equation}
where $G^{\lambda,i}_t$ is the estimated value computed through the TD($\lambda$) formulation \citep{sutton2018reinforcement}. Refer to Appendix~\ref{app:value} for more details.

The pseudo-code of CGPO is provided in Algorithm \ref{alg:cgpo}.

\begin{algorithm}[ht]
  \caption{Constrained Gradient-based Policy Optimization (CGPO)}
  \label{alg:cgpo}
  \begin{algorithmic}
    \STATE \textbf{Input:} Initialize policy $\pi_{\bm{\theta}_0}$, critic $V_{\bm{\phi}_0}$ and $V^c_{\bm{\psi}_0}$, radius $\hat\delta_0$, and number of iterations $K$.
    \FOR {$k=1,2,\ldots,K$}
    \STATE Sample a set of trajectories $\mathcal D = \{\tau\} \sim \pi_{\bm{\theta}_k}$.
    \STATE Compute the $\bm{g}_k$, $\bm{q}_k$ using \eqref{eq:loss_r} and \eqref{eq:loss_c}.
    \IF {$c_k^2/\bm{q}^\top_k\bm{q}_k -\hat\delta_k\geq0$ and $c_k > 0$}
    \STATE Update the policy as $\bm{\theta}_{k+1} = \bm{\theta}_{k} - \frac{\sqrt{\hat\delta_k}}{\|\bm{q}_k\|}\bm{q}_k$.
    \ELSIF {$c_k^2/\bm{q}^\top_k\bm{q}_k-\hat\delta_k\geq0$ and $c_k<0$}
    \STATE Update the policy as $\bm{\theta}_{k+1} = \bm{\theta}_{k} + \frac{\sqrt{\hat\delta_k}}{\|\bm{g}_k\|}\bm{g}_k$.
    \ELSE
    \STATE Compute dual variables $\lambda^*_k$, $\nu^*_k$ using Algorithm~\ref{alg:calc_dual_var}.
    \STATE Update the policy as $\bm{\theta}_{k+1} = \bm{\theta}_{k} + \frac{\bm{g}_k-\nu^*_k\bm{q}_k}{\lambda^*_k}$.
    \ENDIF
    \STATE Update $V_{\bm{\phi}_k}$, $V^c_{\bm{\psi}_k}$ using \eqref{eq:value}, and $\hat{\delta}_{k+1}$ using \eqref{eq:update_radius}.
    \ENDFOR
    \STATE \textbf{Output:} Policy $\pi_{\bm{\theta}_K}$.
  \end{algorithmic}
\end{algorithm}

 \begin{figure*}[ht]
    \centering
    \includegraphics[width=0.98\linewidth]{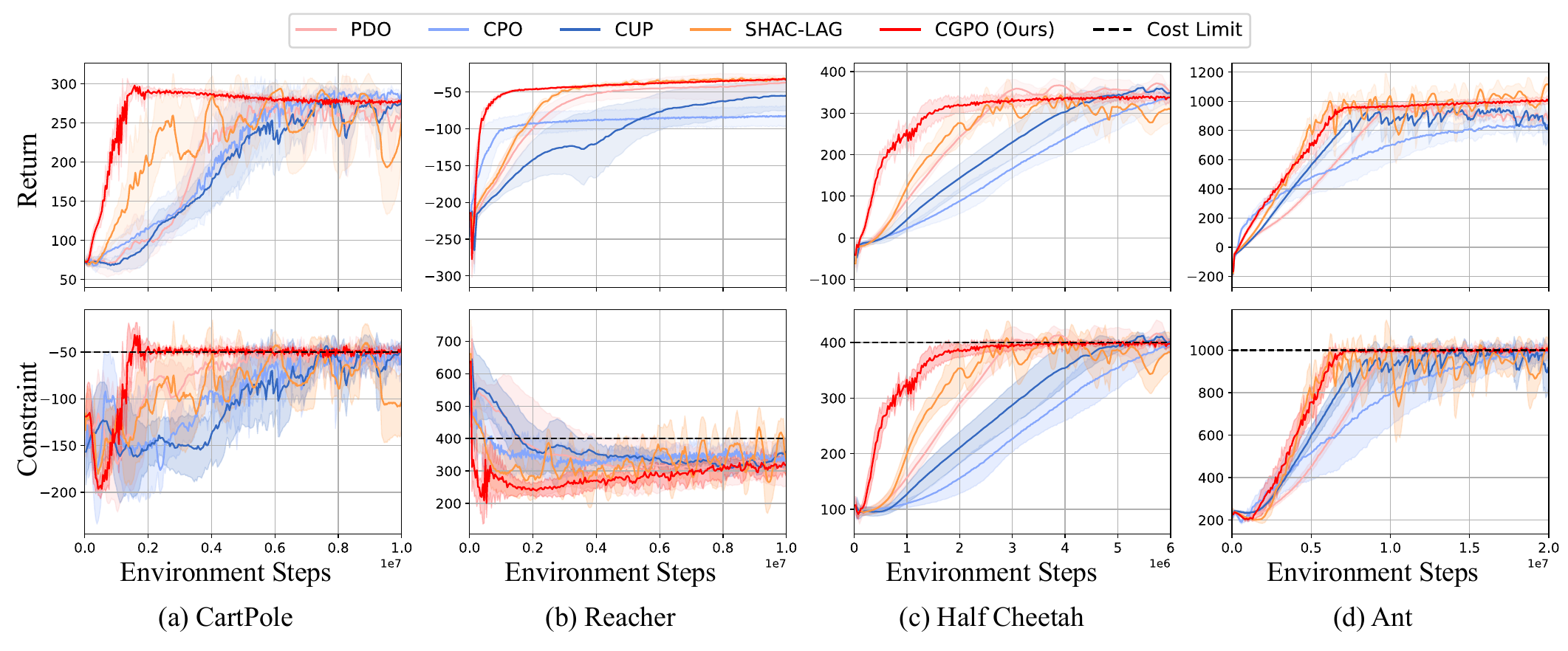}
    \vspace{-0.5em}
    \caption{Training curves of certain algorithms on different tasks, showing episodic return and constraint for 5 random seeds. Solid lines represent the mean, while the shaded areas indicate variance, without any smoothing to the curves. CGPO demonstrates superior efficiency in improvement and constraint satisfaction. The rest of the training curves can be found in Appendix~\ref{app:more_results}.}
    \label{fig:main}
\end{figure*}

\begin{table*}[ht]
\centering
\caption{The number of environmental steps to converge to a feasible optimal solution (Conv. Steps), and the proportion of constraint violations during updates in safety-critical areas (Vio. Ratio) for various algorithms. CGPO significantly outperforms others in both metrics.  
The calculation methods of metrics are explained in Appendix~\ref{app:eval_metrics}.}
\label{tab:analysis}
\resizebox{0.98\textwidth}{!}{
\begin{threeparttable}
\begin{tabular}{ccccccccc}
    \toprule
     & \multicolumn{2}{c}{CartPole} & \multicolumn{2}{c}{Reacher}& \multicolumn{2}{c}{HalfCheetah}  & \multicolumn{2}{c}{Ant}\\
     \cmidrule(lr){2-3}
     \cmidrule(lr){4-5}
     \cmidrule(lr){6-7}
     \cmidrule(lr){8-9}
     {Algorithms}  & Conv. Steps  $\downarrow$  & Vio. Ratio (\%)  $\downarrow$  & Conv. Steps  $\downarrow$  & Vio. Ratio (\%)  $\downarrow$ & Conv. Steps  $\downarrow$  & Vio. Ratio (\%)  $\downarrow$ & Conv. Steps  $\downarrow$  & Vio. Ratio (\%)  $\downarrow$   \\
     \midrule
PDO & 4.99e+06 & 12.01 & 3.99e+06 & 64.42 & 3.43e+06 & 74.58 & 1.70e+07 & 51.84 \\
APPO & 5.95e+06 & 6.59 & 4.22e+06 & 46.04 & 3.58e+06 & 18.33 & 1.08e+07 & 9.54 \\
CPO & 6.72e+06 & 5.96 & 1.98e+06 & 30.68 & 5.81e+06 & -{$^\dagger$} & 1.69e+07 & 24.56 \\
CUP & 6.84e+06 & 19.21 & 7.76e+06 & 56.12 & 5.12e+06 & 57.40 & 9.29e+06 & 29.79 \\
BPTT-LAG & 3.84e+06 & 23.99 & 2.73e+06 & 67.88 & 2.32e+06 & 61.56 & 2.13e+07 & -{$^\dagger$} \\
SHAC-LAG & 2.32e+06 & 46.31 & 2.96e+06 & 50.59 & 3.15e+06 & 65.90 & 1.67e+07 & 57.53 \\
\midrule
\textbf{CGPO}  & \textbf{1.77e+06} & \textbf{3.96} & \textbf{1.15e+06} & \textbf{14.29} & \textbf{2.02e+06} & \textbf{7.31} & \textbf{8.52e+06} &\textbf{6.44} \\
\bottomrule
\end{tabular}
\begin{tablenotes}
    \raggedright
    \item $\dagger$: The empty entries result from the corresponding algorithms failing to find the optimal feasible policy, leading to a lack of updates in the safety-critical region.\\
\end{tablenotes}
\end{threeparttable}
}
\end{table*}

\section{Experiments}
\label{sec:exp}

We conduct experiments to validate the effectiveness of our proposed CGPO. Our focus is primarily on four aspects:
\begin{itemize}[leftmargin=*]
    \item CGPO outperforms the baseline algorithms in Safe RL, demonstrating more efficient performance improvement and more precise constraint satisfaction (Section~\ref{sec:main_results}).
    \item CGPO employs the GBE method to obtain accurate estimations, unlike the ABE method fails (Section~\ref{sec:exp_error}).
    \item CGPO can overcome the differentiability requirements through Model-based approaches (Section~\ref{sec:exp_model}).
    \item CGPO can achieve more stable constraint convergence through an adaptive trust region radius (Section~\ref{sec:exp_adaptive}).
\end{itemize}

\subsection{Experimental Details}

Please refer to Appendix~\ref{app:impl_task} for the detailed implementation of experimental tasks and Appendix~\ref{app:exp_details} for baseline algorithms.

\textbf{Differentiable Safe RL environments.} Due to the lack of differentiable Safe RL environments, we develop a series of constrained differentiable tasks on an open-source differentiable physics engine Brax \citep{freeman2021brax}.
These tasks are based on four differentiable robotic control tasks in Brax (CartPole, Reacher, Half Cheetah, and Ant), with the addition of two common constraints: limiting position\citep{achiam2017constrained,ji2023safety} and limiting velocity\citep{zhang2020first}.
It is important to note that while adding these constraints, we maintained the differentiability of the physical dynamics.

\textbf{Baselines.} We compared CGPO with two categories of algorithms.
First, we compare it with traditional Safe RL methods, including both classic and latest Primal-Dual methods: PDO~\citep{chow2018risk} and APPO~\citep{dai2023augmented}, as well as classic and latest Primal methods: CPO~\citep{achiam2017constrained} and CUP~\citep{yang2020projection}.
Secondly, we combined the current SOTA differentiable algorithms, BPTT~\citep{mozer2013focused} and SHAC~\citep{Jie2022shac}, with the Lagrangian method to satisfy constraints.
Specifically, we used the Lagrange multiplier $\lambda$ to trade off the objective $\mathcal{J}_R$ and the constraint $\mathcal{J}_C$.
The refined algorithms are later termed as BPTT-Lag and SHAC-Lag.

\subsection{Overall Performance} \label{sec:main_results}

Figure \ref{fig:main} shows the learning curves of different algorithms across various tasks.
We observe that CGPO not only converges to a constraint-satisfying policy more quickly and stably than baseline algorithms but also demonstrates enhanced efficiency in improving performance.

\textbf{Compared to conventional Safe RL algorithms,} CGPO demonstrates superiority in both constraint satisfaction and sample efficiency.
Firstly, as illustrated in Figure~\ref{fig:main}, CGPO exhibits more precise control over constraints.
In scenarios where constraint satisfaction conflicts with performance improvement (i.e., CartPole), CGPO can strictly adhere to constraint thresholds to achieve higher performances.
Additionally, as shown in Table~\ref{tab:analysis}, CGPO significantly reduces the proportion of constraint violations in critical safety zones compared to other baseline algorithms.
The advantage of CGPO in constraint satisfaction largely sources from the accuracy of the GBE method, which will be further discussed in Section~\ref{sec:exp_error}.
Secondly, CGPO demonstrates higher sample efficiency by directly employing analytical gradients, as opposed to Monte Carlo sampling estimation.
As shown in Table ~\ref{tab:analysis}, CGPO requires less than 28.8\%-91.7\% of samples at convergence compared to baselines.
This finding aligns with previous works~\citep{mora2021pods, Jie2022shac}.

\textbf{Compared to SHAC-Lag and BPTT-Lag,} CGPO exhibits more stable convergence.
As Figure~\ref{fig:main} illustrates, while SHAC-Lag and BPTT-Lag achieve efficient sample utilization, they oscillate near the threshold.
This behavior sources from an inherent issue of Lagrangian methods: the delayed response of dual variables to constraint violations~\citep{platt1987constrained, wah2000improving}.
Such delays result in inherent oscillations and cost overshoots.

\begin{figure}[t]
    \centering
    \includegraphics[width=0.98\linewidth]{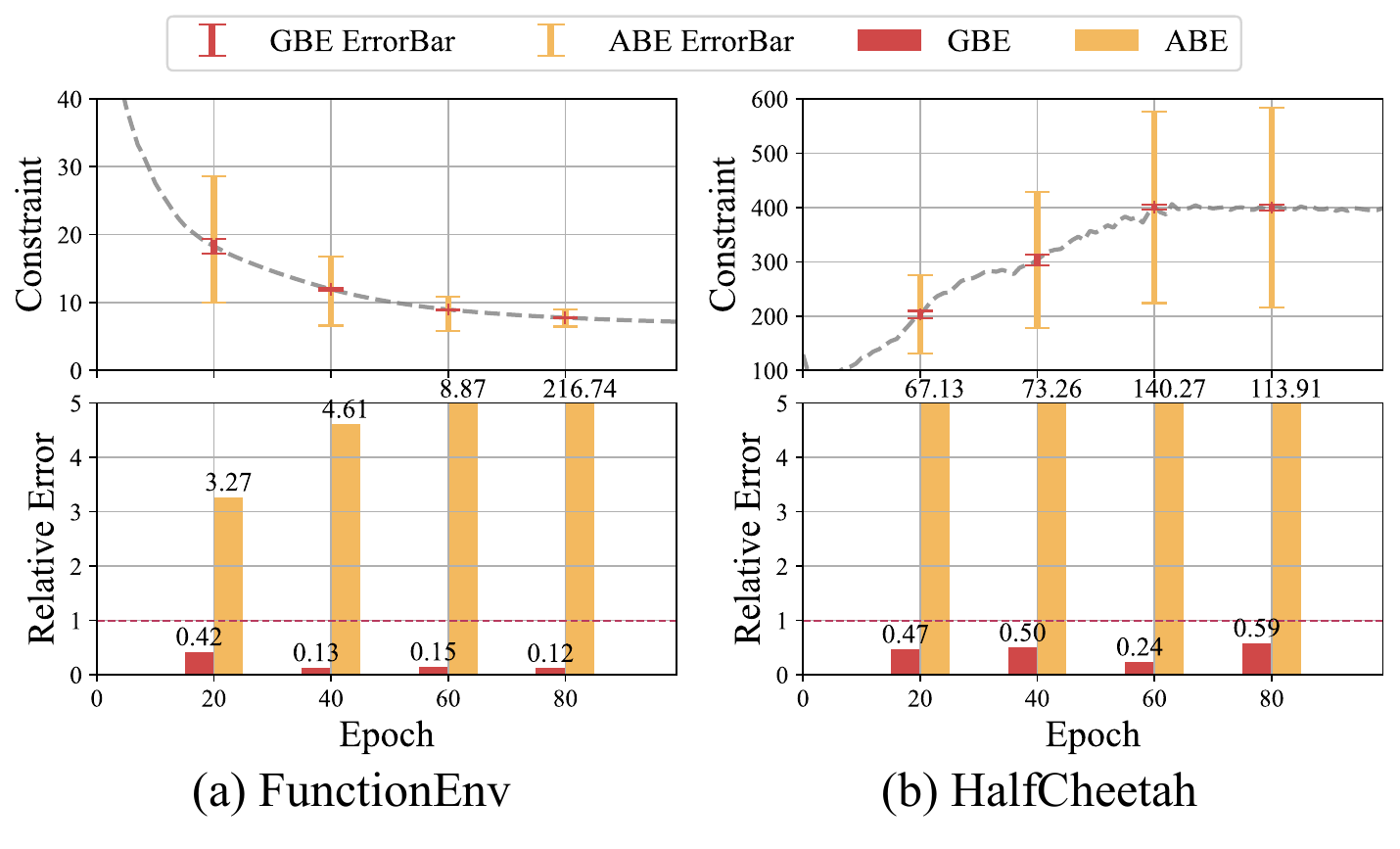}
    \caption{GBE and ABE errors across four training stages for equal step length updates and averaging over 100 repetitions, where $\text{Relative Error} = \frac{\text{Estimation Error}}{\text{Constraint Change}}$. GBE effectively predicts the constraint function of the next policy without failing like ABE.}
    \label{fig:errors}
\end{figure}

\begin{figure}[t]
    \centering
    \includegraphics[width=0.98\linewidth]{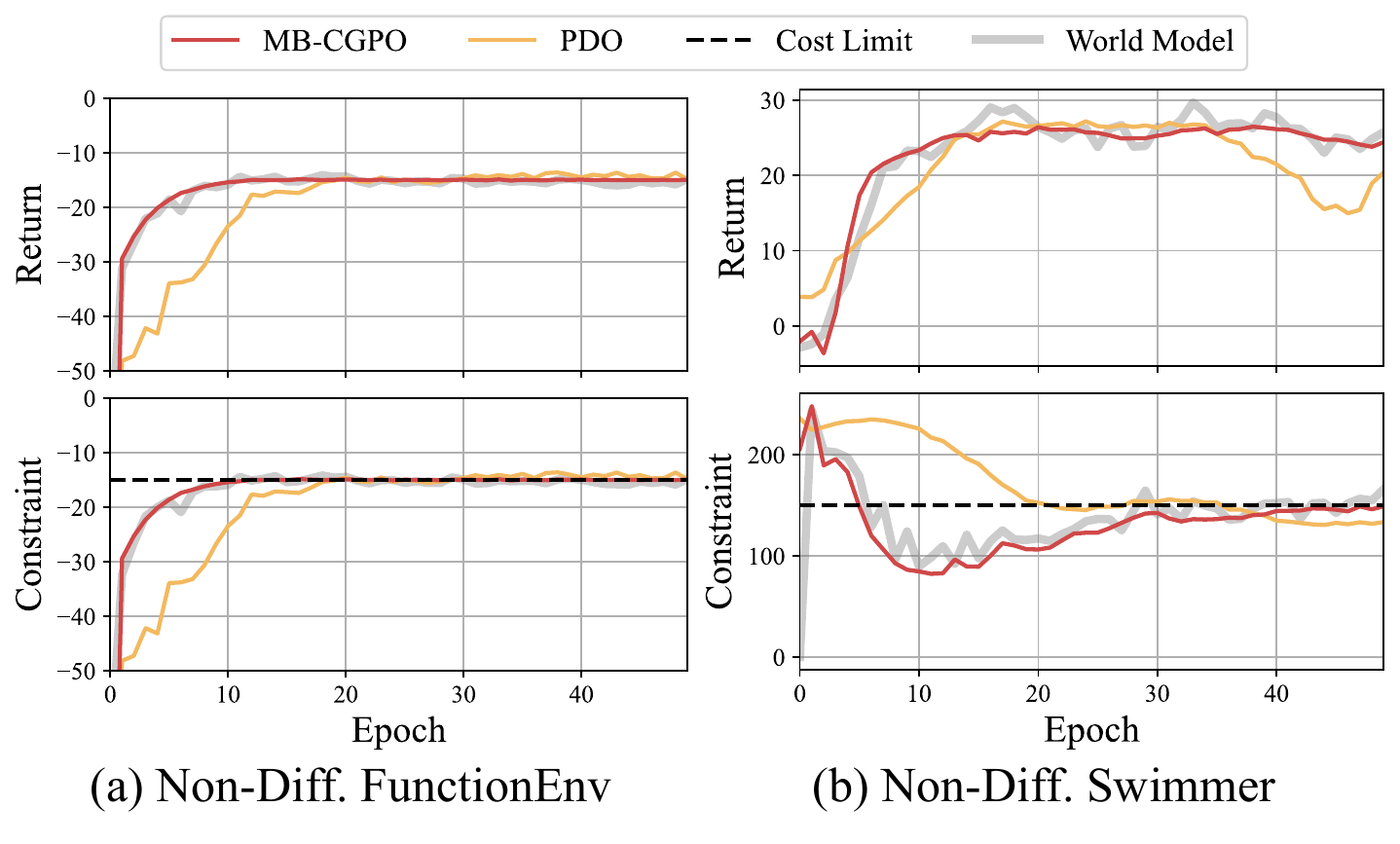}
    \caption{CGPO uses gradients from the World Models in non-differentiable tasks. The grey shading represents predictions from the World Model.}
    \label{fig:model_based}
\end{figure}

\subsection{Ablation on Estimation Errors}\label{sec:exp_error}

The enhanced performance of CGPO in our experiments primarily results from the GBE method's accurate estimation of finite-horizon constraints for future policies, a task at which the ABE method fails.
Our ablation results to compare GBE and ABE errors are illustrated in Fig.~\ref{fig:errors}.
We observe that the estimation error of GBE is much smaller than that of ABE; the relative error of ABE even exceeds 1.0, highlighting its almost ineffectiveness.
Thus, solutions from the ABE-based surrogate optimization problem are not as feasible for the primal Safe RL problem, preventing traditional algorithms from precisely meeting safety constraints.

In the HalfCheetah task, errors are higher than in the FunctionEnv task, primarily due to the reduced system differentiability.
This exposes a limitation of CGPO, its potential failure in poorly differentiable systems.
CGPO has the potential to overcome it by training a World Model for gradient provision, as discussed in the next section.

\subsection{Ablation on World Model Augmentation}\label{sec:exp_model}

In systems with limited differentiability, we can access analytic gradients by training a World Model.
The fundamental implementation of the model-based CGPO algorithm is presented by Algorithm~\ref{alg:mb-cgpo}.
We evaluate its efficacy on two non-differentiable tasks, with specific settings described in Appendix~\ref{app:model_based}.
As shown in Figure~\ref{fig:model_based}, the implementation of a World Model enables CGPO to achieve improved performance while adhering to safety constraints in environments with limited differentiability.
This demonstrates that the World Model has the potential to expand CGPO to a broader range of applications, overcoming the limitations of environmental differentiability.
Nonetheless, incorporating the World Model incurs additional time and computational overhead, necessitating further optimization.

\begin{figure}[t]
    \centering
    \includegraphics[width=0.98\linewidth]{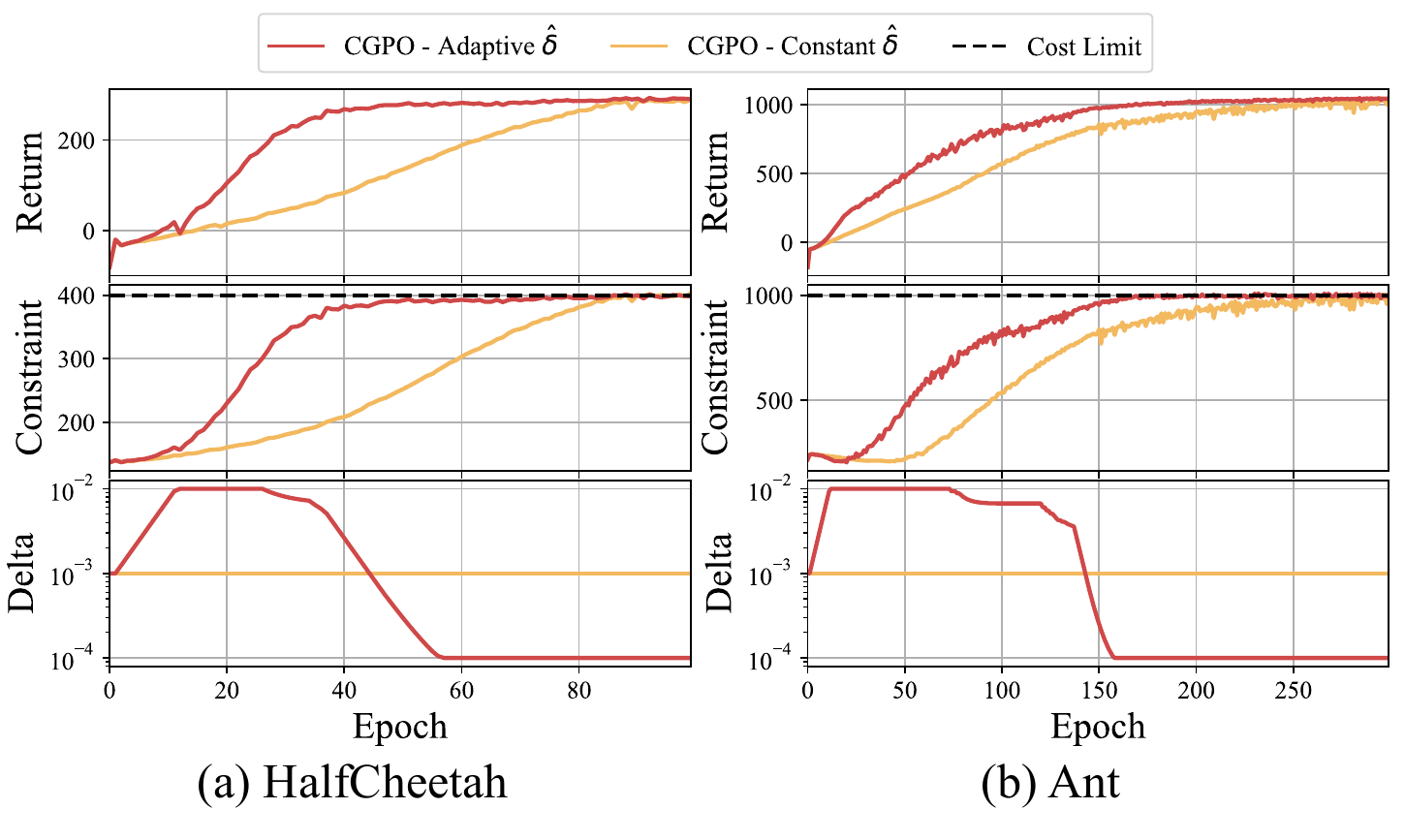}
    \vspace{-0.7em}
    \caption{Adaptive trust region radius $\hat{\delta}$ enables CGPO to achieve better performance improvements and constraint satisfaction.}
    \label{fig:adaptive}
    \vspace{-0.5em}
\end{figure}

\subsection{Ablation on Adaptive Trust Region Radius} \label{sec:exp_adaptive}

Our method dynamically adjusts the trust region radius $\hat{\delta}$ to maintain estimation error within a tolerable limit.
Adaptive $\hat{\delta}$, as Figure~\ref{fig:adaptive} demonstrates, enhances performance and ensures more accurate adherence to constraints than a static radius.
The learning curve of the adaptive radius has two phases: initially, it prioritizes improvements in efficient performance, ensuring consistency between actual and predicted performance changes, while achieving more larger updates.
Later, as policy updates approach constraint limits, the focus shifts to reducing estimation errors within the constraint budget, thereby preventing violations.
{More ablation studies on the hyper-parameters of radius adaptation can be found in Appendix~\ref{app:adaptation}}

\section{Limitation and Future Works} \label{app:limitation}

Although our algorithm demonstrates theoretical and empirical superiority over baseline algorithms, it has certain limitations. Like other differentiable reinforcement learning methods, CGPO relies on environmental differentiability. As discussed in Section~\ref{sec:exp_error}, systems with poor differentiability are susceptible to environmental noise, resulting in increased system errors and reduced precision in GBE estimation. While CGPO mitigates the upper bound of worst-case estimation errors through an adaptive trust region radius method, an excessively small radius can reduce update efficiency.

We identify three promising solutions to address this limitation.
First, advancing differentiable physics engines to enhance environmental differentiability~\citep{degrave2019differentiable,freeman2021brax, howell2022dojo}.
Second, developing algorithms to minimize system errors at the software level~\citep{metz2021gradients}, such as the reparameterization technique~\citep{lee2018reparameterization}.
Third, providing analytic gradients for systems with poor differentiability through more accurate world model training~\citep{parmas2023model}.
In Section~\ref{sec:exp_model}, we discuss this approach and its basic implementation using a two-layer Multi-Layer Perceptron (MLP)~\citep{popescu2009multilayer}, resulting in low-precision gradients from the world model.

For future work, it is crucial to develop a more universal framework based on our algorithm that incorporates the three aforementioned approaches to mitigate existing limitations.
Additionally, we plan to extend the application of our algorithm to fields beyond robotic control~\citep{he2024personalized,wang2024development}, such as the safety alignment~\citep{ji2023ai} of large language models~\citep{ji2024beavertails,dai2024safe,ji2024aligner} and autonomous driving~\citep{muhammad2020deep,gao_review_2023}.

\section{Conclusion}
Since current deep Safe RL algorithms uniformly apply an infinite-horizon approach to all constraints, they fail to guarantee the feasibility of each update.
In this paper, we introduce the \gbe{} (GBE) method to bridge the gap of finite-horizon constraint estimation in deep Safe RL.
Based on GBE, we formulate a surrogate optimization problem and propose the first deep Safe RL algorithm, named \ours{} (CGPO), able to handle tasks with finite-horizon constraints.
CGPO leverages precise GBE estimations to ensure the efficiency and feasibility of each update.
We also present a differentiable Safe RL environment for algorithm testing.
Comparative evaluations reveal that our algorithm outperforms baselines in terms of update efficiency and constraint satisfaction.

\section*{Impact Statement}
This paper presents work whose goal is to advance the field of machine learning, specifically focusing on RL within differentiable simulation environments or world models.
There are many potential societal consequences of our work, none of which we feel must be specifically highlighted here.

\bibliography{example_paper}

\begin{thebibliography}{63}
\providecommand{\natexlab}[1]{#1}
\providecommand{\url}[1]{\texttt{#1}}
\expandafter\ifx\csname urlstyle\endcsname\relax
  \providecommand{\doi}[1]{doi: #1}\else
  \providecommand{\doi}{doi: \begingroup \urlstyle{rm}\Url}\fi

\bibitem[Achiam et~al.(2017)Achiam, Held, Tamar, and Abbeel]{achiam2017constrained}
Achiam, J., Held, D., Tamar, A., and Abbeel, P.
\newblock Constrained policy optimization, 2017.

\bibitem[Altman(1999)]{altman1999constrained}
Altman, E.
\newblock \emph{Constrained Markov decision processes}, volume~7.
\newblock CRC press, 1999.

\bibitem[As et~al.(2022)As, Usmanova, Curi, and Krause]{as2022constrained}
As, Y., Usmanova, I., Curi, S., and Krause, A.
\newblock Constrained policy optimization via bayesian world models.
\newblock \emph{arXiv preprint arXiv:2201.09802}, 2022.

\bibitem[Bogue(2017)]{bogue2017robots}
Bogue, R.
\newblock Robots that interact with humans: a review of safety technologies and standards.
\newblock \emph{Industrial Robot: An International Journal}, 44\penalty0 (4):\penalty0 395--400, 2017.

\bibitem[Chow et~al.(2018)Chow, Ghavamzadeh, Janson, and Pavone]{chow2018risk}
Chow, Y., Ghavamzadeh, M., Janson, L., and Pavone, M.
\newblock Risk-constrained reinforcement learning with percentile risk criteria.
\newblock \emph{Journal of Machine Learning Research}, 18\penalty0 (167):\penalty0 1--51, 2018.

\bibitem[Clavera et~al.(2020)Clavera, Fu, and Abbeel]{clavera2020model}
Clavera, I., Fu, V., and Abbeel, P.
\newblock Model-augmented actor-critic: Backpropagating through paths.
\newblock \emph{arXiv preprint arXiv:2005.08068}, 2020.

\bibitem[Dai et~al.(2023)Dai, Ji, Yang, Zheng, and Pan]{dai2023augmented}
Dai, J., Ji, J., Yang, L., Zheng, Q., and Pan, G.
\newblock Augmented proximal policy optimization for safe reinforcement learning.
\newblock In \emph{Proceedings of the AAAI Conference on Artificial Intelligence}, volume~37, pp.\  7288--7295, 2023.

\bibitem[Dai et~al.(2024)Dai, Pan, Sun, Ji, Xu, Liu, Wang, and Yang]{dai2024safe}
Dai, J., Pan, X., Sun, R., Ji, J., Xu, X., Liu, M., Wang, Y., and Yang, Y.
\newblock Safe {RLHF}: Safe reinforcement learning from human feedback.
\newblock In \emph{The Twelfth International Conference on Learning Representations}, 2024.
\newblock URL \url{https://openreview.net/forum?id=TyFrPOKYXw}.

\bibitem[Degrave et~al.(2019)Degrave, Hermans, Dambre, et~al.]{degrave2019differentiable}
Degrave, J., Hermans, M., Dambre, J., et~al.
\newblock A differentiable physics engine for deep learning in robotics.
\newblock \emph{Frontiers in neurorobotics}, pp.\ ~6, 2019.

\bibitem[ElDahshan et~al.(2022)ElDahshan, Farouk, and Mofreh]{eldahshan2022deep}
ElDahshan, K.~A., Farouk, H., and Mofreh, E.
\newblock Deep reinforcement learning based video games: A review.
\newblock In \emph{2022 2nd International Mobile, Intelligent, and Ubiquitous Computing Conference (MIUCC)}, pp.\  302--309. IEEE, 2022.

\bibitem[Freeman et~al.(2021)Freeman, Frey, Raichuk, Girgin, Mordatch, and Bachem]{freeman2021brax}
Freeman, C.~D., Frey, E., Raichuk, A., Girgin, S., Mordatch, I., and Bachem, O.
\newblock Brax - a differentiable physics engine for large scale rigid body simulation.
\newblock In \emph{Thirty-fifth Conference on Neural Information Processing Systems Datasets and Benchmarks Track (Round 1)}, 2021.

\bibitem[Gao et~al.(2023)Gao, Wan, Chen, Wang, Li, Jiang, Mei, Luo, and Li]{gao_review_2023}
Gao, B., Wan, K., Chen, Q., Wang, Z., Li, R., Jiang, Y., Mei, R., Luo, Y., and Li, K.
\newblock A {Review} and {Outlook} on {Predictive} {Cruise} {Control} of {Vehicles} and {Typical} {Applications} {Under} {Cloud} {Control} {System}.
\newblock \emph{Machine Intelligence Research}, 20\penalty0 (5):\penalty0 614--639, October 2023.
\newblock ISSN 2731-5398.
\newblock \doi{10.1007/s11633-022-1395-3}.
\newblock URL \url{https://doi.org/10.1007/s11633-022-1395-3}.

\bibitem[Garc{\i}a \& Fern{\'a}ndez(2015)Garc{\i}a and Fern{\'a}ndez]{garcia2015comprehensive}
Garc{\i}a, J. and Fern{\'a}ndez, F.
\newblock A comprehensive survey on safe reinforcement learning.
\newblock \emph{Journal of Machine Learning Research}, 16\penalty0 (1):\penalty0 1437--1480, 2015.

\bibitem[Gronauer(2022)]{gronauer2022bullet}
Gronauer, S.
\newblock Bullet-safety-gym: A framework for constrained reinforcement learning.
\newblock 2022.

\bibitem[Guin \& Bhatnagar(2023)Guin and Bhatnagar]{guin2023policy}
Guin, S. and Bhatnagar, S.
\newblock A policy gradient approach for finite horizon constrained markov decision processes.
\newblock In \emph{2023 62nd IEEE Conference on Decision and Control (CDC)}, pp.\  3353--3359. IEEE, 2023.

\bibitem[He et~al.(2024)He, Hu, Yang, and Lv]{he2024personalized}
He, X., Hu, Z., Yang, H., and Lv, C.
\newblock Personalized robotic control via constrained multi-objective reinforcement learning.
\newblock \emph{Neurocomputing}, 565:\penalty0 126986, 2024.

\bibitem[Howell et~al.(2022)Howell, Cleac'h, Br{\"u}digam, Kolter, Schwager, and Manchester]{howell2022dojo}
Howell, T.~A., Cleac'h, S.~L., Br{\"u}digam, J., Kolter, J.~Z., Schwager, M., and Manchester, Z.
\newblock Dojo: A differentiable physics engine for robotics.
\newblock \emph{arXiv preprint arXiv:2203.00806}, 2022.

\bibitem[Jaisson(2022)]{jaisson2022deep}
Jaisson, T.
\newblock Deep differentiable reinforcement learning and optimal trading.
\newblock \emph{Quantitative Finance}, 22\penalty0 (8):\penalty0 1429--1443, 2022.

\bibitem[Ji et~al.(2023{\natexlab{a}})Ji, Qiu, Chen, Zhang, Lou, Wang, Duan, He, Zhou, Zhang, et~al.]{ji2023ai}
Ji, J., Qiu, T., Chen, B., Zhang, B., Lou, H., Wang, K., Duan, Y., He, Z., Zhou, J., Zhang, Z., et~al.
\newblock Ai alignment: A comprehensive survey.
\newblock \emph{arXiv preprint arXiv:2310.19852}, 2023{\natexlab{a}}.

\bibitem[Ji et~al.(2023{\natexlab{b}})Ji, Zhang, Zhou, Pan, Huang, Sun, Geng, Zhong, Dai, and Yang]{ji2023safety}
Ji, J., Zhang, B., Zhou, J., Pan, X., Huang, W., Sun, R., Geng, Y., Zhong, Y., Dai, J., and Yang, Y.
\newblock Safety-gymnasium: A unified safe reinforcement learning benchmark.
\newblock \emph{arXiv preprint arXiv:2310.12567}, 2023{\natexlab{b}}.

\bibitem[Ji et~al.(2023{\natexlab{c}})Ji, Zhou, Zhang, Dai, Pan, Sun, Huang, Geng, Liu, and Yang]{ji2023omnisafe}
Ji, J., Zhou, J., Zhang, B., Dai, J., Pan, X., Sun, R., Huang, W., Geng, Y., Liu, M., and Yang, Y.
\newblock Omnisafe: An infrastructure for accelerating safe reinforcement learning research.
\newblock \emph{arXiv preprint arXiv:2305.09304}, 2023{\natexlab{c}}.

\bibitem[Ji et~al.(2024{\natexlab{a}})Ji, Chen, Lou, Hong, Zhang, Pan, Dai, and Yang]{ji2024aligner}
Ji, J., Chen, B., Lou, H., Hong, D., Zhang, B., Pan, X., Dai, J., and Yang, Y.
\newblock Aligner: Achieving efficient alignment through weak-to-strong correction.
\newblock \emph{arXiv preprint arXiv:2402.02416}, 2024{\natexlab{a}}.

\bibitem[Ji et~al.(2024{\natexlab{b}})Ji, Liu, Dai, Pan, Zhang, Bian, Chen, Sun, Wang, and Yang]{ji2024beavertails}
Ji, J., Liu, M., Dai, J., Pan, X., Zhang, C., Bian, C., Chen, B., Sun, R., Wang, Y., and Yang, Y.
\newblock Beavertails: Towards improved safety alignment of llm via a human-preference dataset.
\newblock \emph{Advances in Neural Information Processing Systems}, 36, 2024{\natexlab{b}}.

\bibitem[Jie~Xu et~al.(2022)Jie~Xu, Makoviychuk, and Narang]{Jie2022shac}
Jie~Xu, V., Makoviychuk, Y., and Narang, F.~R.
\newblock Accelerated policy learning with parallel differentiable simulation.
\newblock In \emph{ICLR}, 2022.

\bibitem[Kakade \& Langford(2002)Kakade and Langford]{kakade2002approximately}
Kakade, S. and Langford, J.
\newblock Approximately optimal approximate reinforcement learning.
\newblock In \emph{Proceedings of the Nineteenth International Conference on Machine Learning}, ICML '02, pp.\  267–274, San Francisco, CA, USA, 2002. Morgan Kaufmann Publishers Inc.
\newblock ISBN 1558608737.

\bibitem[Kalagarla et~al.(2021)Kalagarla, Jain, and Nuzzo]{kalagarla2021sample}
Kalagarla, K.~C., Jain, R., and Nuzzo, P.
\newblock A sample-efficient algorithm for episodic finite-horizon mdp with constraints.
\newblock In \emph{Proceedings of the AAAI Conference on Artificial Intelligence}, volume~35, pp.\  8030--8037, 2021.

\bibitem[Lee et~al.(2018)Lee, Yu, and Yang]{lee2018reparameterization}
Lee, W., Yu, H., and Yang, H.
\newblock Reparameterization gradient for non-differentiable models.
\newblock \emph{Advances in Neural Information Processing Systems}, 31, 2018.

\bibitem[Liu et~al.(2022)Liu, Cen, Isenbaev, Liu, Wu, Li, and Zhao]{liu2022constrained}
Liu, Z., Cen, Z., Isenbaev, V., Liu, W., Wu, S., Li, B., and Zhao, D.
\newblock Constrained variational policy optimization for safe reinforcement learning.
\newblock In \emph{International Conference on Machine Learning}, pp.\  13644--13668. PMLR, 2022.

\bibitem[Meng et~al.(2022)Meng, Zheng, Shi, and Pan]{9334437}
Meng, W., Zheng, Q., Shi, Y., and Pan, G.
\newblock An off-policy trust region policy optimization method with monotonic improvement guarantee for deep reinforcement learning.
\newblock \emph{IEEE Transactions on Neural Networks and Learning Systems}, 33\penalty0 (5):\penalty0 2223--2235, 2022.
\newblock \doi{10.1109/TNNLS.2020.3044196}.

\bibitem[Metz et~al.(2021)Metz, Freeman, Schoenholz, and Kachman]{metz2021gradients}
Metz, L., Freeman, C.~D., Schoenholz, S.~S., and Kachman, T.
\newblock Gradients are not all you need.
\newblock \emph{arXiv preprint arXiv:2111.05803}, 2021.

\bibitem[Mohamed et~al.(2020)Mohamed, Rosca, Figurnov, and Mnih]{mohamed2020monte}
Mohamed, S., Rosca, M., Figurnov, M., and Mnih, A.
\newblock Monte carlo gradient estimation in machine learning.
\newblock \emph{The Journal of Machine Learning Research}, 21\penalty0 (1):\penalty0 5183--5244, 2020.

\bibitem[Mora et~al.(2021)Mora, Peychev, Ha, Vechev, and Coros]{mora2021pods}
Mora, M. A.~Z., Peychev, M., Ha, S., Vechev, M., and Coros, S.
\newblock Pods: Policy optimization via differentiable simulation.
\newblock In Meila, M. and Zhang, T. (eds.), \emph{Proceedings of the 38th International Conference on Machine Learning}, volume 139 of \emph{Proceedings of Machine Learning Research}, pp.\  7805--7817. PMLR, 18--24 Jul 2021.

\bibitem[Mozer(1995)]{bptt}
Mozer, M.
\newblock A focused backpropagation algorithm for temporal pattern recognition.
\newblock \emph{Complex Systems}, 3, 01 1995.

\bibitem[Mozer(2013)]{mozer2013focused}
Mozer, M.~C.
\newblock A focused backpropagation algorithm for temporal pattern recognition.
\newblock In \emph{Backpropagation}, pp.\  137--169. Psychology Press, 2013.

\bibitem[Muhammad et~al.(2020)Muhammad, Ullah, Lloret, Del~Ser, and de~Albuquerque]{muhammad2020deep}
Muhammad, K., Ullah, A., Lloret, J., Del~Ser, J., and de~Albuquerque, V. H.~C.
\newblock Deep learning for safe autonomous driving: Current challenges and future directions.
\newblock \emph{IEEE Transactions on Intelligent Transportation Systems}, 22\penalty0 (7):\penalty0 4316--4336, 2020.

\bibitem[Nilsson et~al.(2017)Nilsson, Fredriksson, and Coelingh]{NILSSON20179083}
Nilsson, J., Fredriksson, J., and Coelingh, E.
\newblock Trajectory planning with miscellaneous safety critical zones**this work was supported by ffi - strategic vehicle research and innovation.
\newblock \emph{IFAC-PapersOnLine}, 50\penalty0 (1):\penalty0 9083--9088, 2017.
\newblock ISSN 2405-8963.
\newblock \doi{https://doi.org/10.1016/j.ifacol.2017.08.1649}.
\newblock URL \url{https://www.sciencedirect.com/science/article/pii/S2405896317322541}.
\newblock 20th IFAC World Congress.

\bibitem[Okamura et~al.(2000)Okamura, Smaby, and Cutkosky]{okamura2000overview}
Okamura, A.~M., Smaby, N., and Cutkosky, M.~R.
\newblock An overview of dexterous manipulation.
\newblock In \emph{Proceedings 2000 ICRA. Millennium Conference. IEEE International Conference on Robotics and Automation. Symposia Proceedings (Cat. No. 00CH37065)}, volume~1, pp.\  255--262. IEEE, 2000.

\bibitem[Ouyang et~al.(2022)Ouyang, Wu, Jiang, Almeida, Wainwright, Mishkin, Zhang, Agarwal, Slama, Ray, et~al.]{ouyang2022training}
Ouyang, L., Wu, J., Jiang, X., Almeida, D., Wainwright, C., Mishkin, P., Zhang, C., Agarwal, S., Slama, K., Ray, A., et~al.
\newblock Training language models to follow instructions with human feedback.
\newblock \emph{Advances in Neural Information Processing Systems}, 35:\penalty0 27730--27744, 2022.

\bibitem[Parmas et~al.(2023{\natexlab{a}})Parmas, Seno, and Aoki]{parmas2023model}
Parmas, P., Seno, T., and Aoki, Y.
\newblock Model-based reinforcement learning with scalable composite policy gradient estimators.
\newblock In \emph{International Conference on Machine Learning}, pp.\  27346--27377. PMLR, 2023{\natexlab{a}}.

\bibitem[Parmas et~al.(2023{\natexlab{b}})Parmas, Seno, and Aoki]{pmlr-v202-parmas23a}
Parmas, P., Seno, T., and Aoki, Y.
\newblock Model-based reinforcement learning with scalable composite policy gradient estimators.
\newblock In Krause, A., Brunskill, E., Cho, K., Engelhardt, B., Sabato, S., and Scarlett, J. (eds.), \emph{Proceedings of the 40th International Conference on Machine Learning}, volume 202 of \emph{Proceedings of Machine Learning Research}, pp.\  27346--27377. PMLR, 23--29 Jul 2023{\natexlab{b}}.
\newblock URL \url{https://proceedings.mlr.press/v202/parmas23a.html}.

\bibitem[Platt \& Barr(1987)Platt and Barr]{platt1987constrained}
Platt, J. and Barr, A.
\newblock Constrained differential optimization.
\newblock In \emph{Neural Information Processing Systems}, 1987.

\bibitem[Popescu et~al.(2009)Popescu, Balas, Perescu-Popescu, and Mastorakis]{popescu2009multilayer}
Popescu, M.-C., Balas, V.~E., Perescu-Popescu, L., and Mastorakis, N.
\newblock Multilayer perceptron and neural networks.
\newblock \emph{WSEAS Transactions on Circuits and Systems}, 8\penalty0 (7):\penalty0 579--588, 2009.

\bibitem[Puterman(1990)]{puterman1990markov}
Puterman, M.~L.
\newblock Markov decision processes.
\newblock \emph{Handbooks in operations research and management science}, 2:\penalty0 331--434, 1990.

\bibitem[Puterman(2014)]{puterman2014markov}
Puterman, M.~L.
\newblock \emph{Markov decision processes: discrete stochastic dynamic programming}.
\newblock John Wiley \& Sons, 2014.

\bibitem[Rafailov et~al.(2023)Rafailov, Sharma, Mitchell, Ermon, Manning, and Finn]{rafailov2023direct}
Rafailov, R., Sharma, A., Mitchell, E., Ermon, S., Manning, C.~D., and Finn, C.
\newblock Direct preference optimization: Your language model is secretly a reward model.
\newblock \emph{arXiv preprint arXiv:2305.18290}, 2023.

\bibitem[Ray et~al.(2019)Ray, Achiam, and Amodei]{ray2019benchmarking}
Ray, A., Achiam, J., and Amodei, D.
\newblock Benchmarking safe exploration in deep reinforcement learning.
\newblock \emph{arXiv preprint arXiv:1910.01708}, 7\penalty0 (1):\penalty0 2, 2019.

\bibitem[Schulman et~al.(2015)Schulman, Levine, Abbeel, Jordan, and Moritz]{schulman2015trust}
Schulman, J., Levine, S., Abbeel, P., Jordan, M., and Moritz, P.
\newblock Trust region policy optimization.
\newblock In \emph{International conference on machine learning}, pp.\  1889--1897. PMLR, 2015.

\bibitem[Shi et~al.(2019)Shi, Li, Cao, Yang, and Pan]{shi2019tbq}
Shi, L., Li, S., Cao, L., Yang, L., and Pan, G.
\newblock Tbq ($\sigma$): Improving efficiency of trace utilization for off-policy reinforcement learning.
\newblock \emph{arXiv preprint arXiv:1905.07237}, 2019.

\bibitem[Silver et~al.(2016)Silver, Huang, Maddison, Guez, Sifre, Van Den~Driessche, Schrittwieser, Antonoglou, Panneershelvam, Lanctot, et~al.]{silver2016mastering}
Silver, D., Huang, A., Maddison, C.~J., Guez, A., Sifre, L., Van Den~Driessche, G., Schrittwieser, J., Antonoglou, I., Panneershelvam, V., Lanctot, M., et~al.
\newblock Mastering the game of go with deep neural networks and tree search.
\newblock \emph{nature}, 529\penalty0 (7587):\penalty0 484--489, 2016.

\bibitem[Silver et~al.(2017)Silver, Schrittwieser, Simonyan, Antonoglou, Huang, Guez, Hubert, Baker, Lai, Bolton, et~al.]{silver2017mastering}
Silver, D., Schrittwieser, J., Simonyan, K., Antonoglou, I., Huang, A., Guez, A., Hubert, T., Baker, L., Lai, M., Bolton, A., et~al.
\newblock Mastering the game of go without human knowledge.
\newblock \emph{nature}, 550\penalty0 (7676):\penalty0 354--359, 2017.

\bibitem[Singh et~al.(2022)Singh, Kumar, and Singh]{singh2022reinforcement}
Singh, B., Kumar, R., and Singh, V.~P.
\newblock Reinforcement learning in robotic applications: a comprehensive survey.
\newblock \emph{Artificial Intelligence Review}, pp.\  1--46, 2022.

\bibitem[Stooke et~al.(2020)Stooke, Achiam, and Abbeel]{stooke2020responsive}
Stooke, A., Achiam, J., and Abbeel, P.
\newblock Responsive safety in reinforcement learning by pid lagrangian methods.
\newblock In \emph{International Conference on Machine Learning}, pp.\  9133--9143. PMLR, 2020.

\bibitem[Suh et~al.(2022)Suh, Simchowitz, Zhang, and Tedrake]{suh2022differentiable}
Suh, H.~J., Simchowitz, M., Zhang, K., and Tedrake, R.
\newblock Do differentiable simulators give better policy gradients?
\newblock In \emph{International Conference on Machine Learning}, pp.\  20668--20696. PMLR, 2022.

\bibitem[Sutton \& Barto(2018)Sutton and Barto]{sutton2018reinforcement}
Sutton, R.~S. and Barto, A.~G.
\newblock \emph{Reinforcement learning: An introduction}.
\newblock MIT press, 2018.

\bibitem[Wah et~al.(2000)Wah, Wang, Shang, and Wu]{wah2000improving}
Wah, B.~W., Wang, T., Shang, Y., and Wu, Z.
\newblock Improving the performance of weighted lagrange-multiplier methods for nonlinear constrained optimization.
\newblock \emph{Information Sciences}, 124\penalty0 (1-4):\penalty0 241--272, 2000.

\bibitem[Wang et~al.(2024)Wang, Zhang, Li, Cui, and Chen]{wang2024development}
Wang, Y., Zhang, M., Li, M., Cui, H., and Chen, X.
\newblock Development of a humanoid robot control system based on ar-bci and slam navigation.
\newblock \emph{Cognitive Neurodynamics}, pp.\  1--14, 2024.

\bibitem[Werling et~al.(2021)Werling, Omens, Lee, Exarchos, and Liu]{werling2021fast}
Werling, K., Omens, D., Lee, J., Exarchos, I., and Liu, C.~K.
\newblock Fast and feature-complete differentiable physics engine for articulated rigid bodies with contact constraints.
\newblock In \emph{Robotics: Science and Systems}, 2021.

\bibitem[Xian et~al.(2023)Xian, Zhu, Xu, Tung, Torralba, Fragkiadaki, and Gan]{xian2023fluidlab}
Xian, Z., Zhu, B., Xu, Z., Tung, H.-Y., Torralba, A., Fragkiadaki, K., and Gan, C.
\newblock Fluidlab: A differentiable environment for benchmarking complex fluid manipulation.
\newblock \emph{arXiv preprint arXiv:2303.02346}, 2023.

\bibitem[Xu et~al.(2022)Xu, Liu, Huang, Ding, Cen, Li, and Zhao]{xu2022trustworthy}
Xu, M., Liu, Z., Huang, P., Ding, W., Cen, Z., Li, B., and Zhao, D.
\newblock Trustworthy reinforcement learning against intrinsic vulnerabilities: Robustness, safety, and generalizability.
\newblock \emph{arXiv preprint arXiv:2209.08025}, 2022.

\bibitem[Yang et~al.(2018)Yang, Shi, Zheng, Meng, and Pan]{yang2018unified}
Yang, L., Shi, M., Zheng, Q., Meng, W., and Pan, G.
\newblock A unified approach for multi-step temporal-difference learning with eligibility traces in reinforcement learning.
\newblock \emph{arXiv preprint arXiv:1802.03171}, 2018.

\bibitem[Yang et~al.(2022)Yang, Ji, Dai, Zhang, Zhou, Li, Yang, and Pan]{yang2022constrained}
Yang, L., Ji, J., Dai, J., Zhang, L., Zhou, B., Li, P., Yang, Y., and Pan, G.
\newblock Constrained update projection approach to safe policy optimization.
\newblock \emph{Advances in Neural Information Processing Systems}, 35:\penalty0 9111--9124, 2022.

\bibitem[Yang et~al.(2020)Yang, Rosca, Narasimhan, and Ramadge]{yang2020projection}
Yang, T.-Y., Rosca, J., Narasimhan, K., and Ramadge, P.~J.
\newblock Projection-based constrained policy optimization.
\newblock \emph{arXiv preprint arXiv:2010.03152}, 2020.

\bibitem[Zhang et~al.(2020)Zhang, Vuong, and Ross]{zhang2020first}
Zhang, Y., Vuong, Q., and Ross, K.
\newblock First order constrained optimization in policy space.
\newblock \emph{Advances in Neural Information Processing Systems}, 33:\penalty0 15338--15349, 2020.

\end{thebibliography}
\bibliographystyle{icml2024}

\onecolumn
\part{Appendix}
\parttoc

\appendix

\newpage

\section{Supplementing Theoretical Aspects}
\subsection{The First-Order Approximation}
\label{app:first_order_error}
We employ the following first-order estimation to approximate the objective and constraint functions:
\begin{equation}
    \hat{\mathcal J}_f(\bm{\theta}_0+\bm{\delta}) = \mathcal J_f (\bm{\theta}_0) + \bm{\delta}^\top \nabla_{\bm{\theta}} \mathcal J_f(\bm{\theta}_0),
\end{equation}
where $\hat{\mathcal J}_f$ represents the both objective function $\hat{\mathcal J}_R$ and constraint function $\hat{\mathcal J}_C$:
\begin{equation}
    \hat{\mathcal J}_R(\bm{\theta}_0+\bm{\delta}) = \mathcal J_R (\bm{\theta}_0) + \bm{\delta}^\top \nabla_{\bm{\theta}} \mathcal J_R(\bm{\theta}_0),
    \label{eq:app_JR}
\end{equation}
\begin{equation}
    \hat{\mathcal J}_C(\bm{\theta}_0+\bm{\delta}) = \mathcal J_C (\bm{\theta}_0) + \bm{\delta}^\top \nabla_{\bm{\theta}} \mathcal J_C(\bm{\theta}_0).
    \label{eq:app_JC}
\end{equation}
This approximation introduces the following approximation error:
\TheEstError*
\begin{proof}
The function $\mathcal{J}_f(\bm{\theta})$ is expanded using a Taylor series at the point $\bm{\theta}_0+\bm{\delta}$, resulting in:
\begin{equation}
        \exists t_0 \in (0, 1),\quad
        \mathcal J_f(\bm{\theta}_0+\bm{\delta}) = \mathcal J_f (\bm{\theta}_0) + \bm{\delta}^\top \nabla_{\bm{\theta}} \mathcal J_f(\bm{\theta}_0) + \frac{1}{2} \bm{\delta}^\top\nabla_{\bm{\theta}}^2 \mathcal J_f(\bm{\theta}_0+t_0\bm{\delta})\bm{\delta}
 \end{equation}
where $\frac{1}{2} \bm{\delta}^\top\nabla_{\bm{\theta}}^2 \mathcal J_f(\bm{\theta}_0+t_0\bm{\delta})\bm{\delta}$ represents the Peano remainder term. Substituting this expanded result yields:
\begin{equation}
    \left|\hat{\mathcal J}_f(\bm{\theta}_0+\bm{\delta}) - {\mathcal J}_f(\bm{\theta}_0+\bm{\delta})\right|
    = \frac12 \big|\bm{\delta}^\top\nabla_{\bm{\theta}}^2 \mathcal J_f(\bm{\theta}_0+t_0\bm{\delta})\bm{\delta}\big|
    \leq\frac{1}{2} \epsilon \|\bm{\delta}\|^2_2.
\end{equation}
\end{proof}

\subsection{The Solution to Constrained Surrogate Sub-Problem}\label{app:solution}

Based on Theorem \ref{the:estimation_error}, we find that the estimation error is positively correlated with the update step size.
Therefore, by controlling the update step size, we can keep the estimation error within an acceptable range.
Given the trust region radius, we consider updated parameter $\bm{\theta_{k+1}}$ of the $k^{th}$ iteration within the trust region $\Theta_k=\left\{ \bm{\theta} \in \Theta \mid \|\bm{\theta} - \bm{\theta}_k\|^2 \leq \hat\delta \right\}$ to be credible.
By employing the approximations of the objective function in Equation~\eqref{eq:app_JR} and the constraint function in Equation~\eqref{eq:app_JC}, We transform the solution of the primal Safe RL problem into an iterative process of solving a series of sub-problems within predefined trust regions $\Theta_k$:
\begin{equation}
\label{eq:app-subprob}
    \begin{aligned}
        \bm{\theta}_{k+1} = &\arg\max_{\bm{\theta}\in\Theta} \quad  \left(\bm{\theta} - \bm{\theta}_{k}\right)^\top \nabla_{\bm{\theta}} \mathcal J_R(\bm{\theta}_k)
        \\
        \text{s.t.} \quad   & \mathcal J_C(\bm{\theta}_k) + \left(\bm{\theta} - \bm{\theta}_{k}\right)^\top \nabla_{\bm{\theta}} \mathcal J_C(\bm{\theta}_k) \leq b
        \\
                            & \left(\bm{\theta} - \bm{\theta}_{k}\right)^\top\left(\bm{\theta} - \bm{\theta}_{k}\right) \leq \hat\delta,
    \end{aligned}
\end{equation}

To simplify the expression, we introduce some new notations. For the $k^{th}$ iteration, we denote the gradient of the objective as $\bm{g}_k\triangleq\mathcal{J}_R(\bm\theta_k)$, the gradient of the constraint as $\bm{q}_k \triangleq \mathcal{J}_C(\bm\theta_k)$, and define $c_k \triangleq \mathcal{J}_C(\bm{\theta}_k) - b$, $\bm{\delta}_{k} \triangleq \bm{\theta}_{k+1} - \bm{\theta}_k$. Thus, we rewrite the sub-problem as
\begin{equation}\label{eq:app-rewrite-subprob}    \bm\delta_{k}=\arg\max_{\bm\delta}\bm{g}_k^\top\bm{\delta},\quad\text{s.t.}~c_k+\bm{q}_{k}^\top\bm\delta\leq0,\quad\bm{\delta}^\top\bm{\delta}\leq\hat\delta
\end{equation}

\subsubsection{Solvability Conditions for the Sub-problem} \label{app:solvability}

Since the sub-problem~\eqref{eq:app-rewrite-subprob} may have no solution, we first discuss the conditions under which this problem is solvable. The following theorem holds:
\CondOfInfea*
\begin{proof}
Denoting the boundary of the feasible region as $\mathring{\Delta}^C_k = \left\{\bm\delta\mid c_k+\bm{q}^\top_k\bm\delta=0\right\}$, let $\mathring{\bm{\delta}}_k$ be the smallest update that places the next policy $\bm\theta_{k+1}$ on the boundary of the feasible region, namely
\begin{equation}\label{eq:feasible_boundary}
    \mathring{\bm{\delta}}_k = \arg\min_{\bm\delta} \bm\delta^\top\bm\delta,\quad\text{s.t.}~c_k+\bm{q}_{k}^\top\bm\delta=0.
\end{equation}
To derive the analytical form of $\mathring{\bm{\delta}}_k$, we introduce the dual variable $\lambda\geq0$ to construct the Lagrangian dual function of the problem~\eqref{eq:feasible_boundary}:
\begin{equation}
    L(\bm\delta,\lambda) = \bm\delta^\top\bm\delta+\lambda\left(c_k+\bm{q}_{k}^\top\bm\delta\right).
\end{equation}
Since both the objective function and the constraint function in the problem \eqref{eq:feasible_boundary} are convex, the optimal solution $(\mathring{\bm{\delta}}_k,\lambda^*)$ should satisfy the Karush-Kuhn-Tucker (KKT) conditions:
\begin{equation}
\begin{aligned}
    &\nabla_{\bm\delta}L(\bm\delta,\lambda) = \bm\delta + \lambda\bm{q}_k=0
    \\
    &c_k+\bm{q}^\top_k\bm\delta = 0
\end{aligned}
\end{equation}
This can be solved to obtain $\lambda^*=c_k/\bm{q}^\top_k\bm{q}$ and $\mathring{\bm{\delta}}_k=-c_k\bm{q}_k/\bm{q}^\top_k\bm{q}_k$.

$\Longrightarrow:$ if $c_k^2 / \bm{q}^\top_k\bm{q} - \hat\delta > 0$ and $c_k>0$,
\begin{equation}
\mathring{\bm\delta}_k^\top\mathring{\bm\delta}_k = \frac{c_k^2}{\bm{q}^\top_k\bm{q}_k} > \hat{\delta}
\end{equation}
Denoting the trust region of the $k^{th}$ iteration as $\Delta_k=\left\{ \bm{\delta} \in \Theta \mid \|\bm\delta\|^2 \leq \hat\delta \right\}$.
For all $\bm\delta\in\mathring\Delta^C_k$ we have $\bm\delta^\top\bm\delta\geq\mathring{\bm\delta}_k^\top\mathring{\bm\delta}_k>\hat{\delta}$. Thus, $\forall \bm\delta\in \mathring\Delta^C_k, \delta\notin\Delta_k$, namely, the trust region $\Delta_k$ does not intersect with the boundary plane $\mathring\Delta^C_k$.

Let the feasible region for the $k^{th}$ iteration be defined as $\Delta^C_k = \left\{ \bm{\delta} \in \Theta \mid c_k + \bm{q}^\top_k\bm\delta \leq 0 \right\}$, which is a half-space determined by the plane $\mathring\Delta^C_k$. Since the trust region $\Delta_k$ does not intersect with the plane $\mathring\Delta^C_k$, only two scenarios are possible, namely, either $\Delta_k \not\subset \Delta^C_k$ or $\Delta_k \subset\Delta^C_k$.
When $c_k > 0$, $\bm0\in \Delta_k$ and $\bm0\notin \Delta^C_k$. Assuming $\Delta_k \subset \Delta^C_k$, then for all $\bm{\delta} \in \Delta_k$, it follows that $\bm{\delta} \in \Delta^C$. This leads to a contradiction, hence $\Delta_k \not\subset \Delta^C_k$. In other words, the sub-problem expressed in Equation \eqref{eq:app-rewrite-subprob} is unsolvable.

$\Longleftarrow:$ If the sub-problem expressed in Equation \eqref{eq:app-rewrite-subprob} is unsolvable, it follows that $\Delta_k \not\subset \Delta^C_k$ and $\Delta_k \cap \mathring\Delta^C_k = \emptyset$.

Given that $\Delta_k \not\subset \Delta^C_k$ and considering $\bm0\in\Delta_k$ but $\bm0\notin\Delta^C_k$, it is deduced that $c_k + \bm{q}^\top_k\bm0=c_k>0$.
Furthermore, the condition $\Delta_k \cap \mathring\Delta^C_k = \emptyset$ implies that for any $\bm\delta \in \mathring\Delta^C_k$, the relation $\bm\delta^\top\bm\delta>\hat{\delta}$ holds true. Consequently, for $\mathring{\bm\delta}_k\in\mathring\Delta^C_k$, it is observed that $\mathring{\bm\delta}_k^\top\mathring{\bm\delta}_k=c^2_k/\bm{q}^\top_k\bm{q}_k>\hat{\delta}$.
Thus, it is concluded that $c^2_k/\bm{q}^\top_k\bm{q}_k-\hat{\delta}>0$ and $c_k>0$.
\end{proof}

In addition, the following corollary holds:
\AllFea*
\begin{proof}
The proof is analogous to that of Theorem~\ref{the:cond_of_sub}. The condition $c_k^2 / \bm{q}_k^\top\bm{q}_k - \hat\delta > 0$ is necessary and sufficient for the trust region $\Delta_k$ not to intersect with the boundary of the half-space $\mathring\Delta^C_k$. Since the trust region $\Delta_k$ does not intersect with the plane $\mathring\Delta^C_k$, only two scenarios are possible: either $\Delta_k \not\subset \Delta^C_k$ or $\Delta_k \subset \Delta^C_k$. Moreover, $c_k \leq 0$ is the necessary and sufficient condition for $\Delta_k \subset \Delta^C_k$ under these circumstances.
\end{proof}

\begin{figure}[t]
    \centering
    \includegraphics[width=0.5\linewidth]{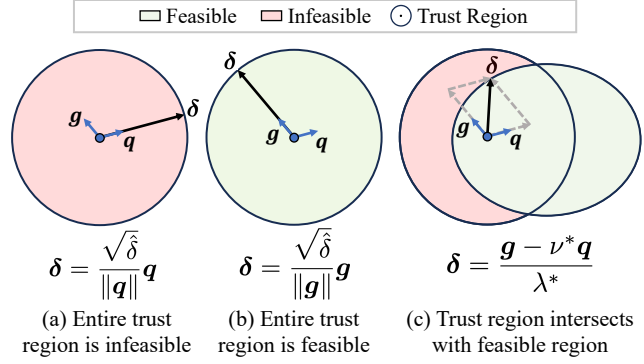}
    \caption{The computational relationship between the policy update $\bm{\delta}$, the gradient of the objective function $\bm{g}$, and the gradient of the constraint function $\bm{g}$ varies in three scenarios.}
    \label{fig:three-condition}
\end{figure}

\subsubsection{the Solution to the Sub-problem}\label{app:solution-to-subprob}

Based on Theorem~\ref{the:cond_of_sub} and Corollary~\ref{the:all_feasible}, solving the sub-problem within the trust region, as defined in Equation~\ref{eq:app-rewrite-subprob}, should be discussed in three scenarios: (a) when the entire trust region is infeasible, (b) when the entire trust region is feasible, and (c) when the trust region is partially feasible, as shown in Figure~\ref{fig:three-condition}.

\begin{enumerate}[label=(\alph*)]
    \item If $c^2/{\bm{q}^\top\bm{q}}-\hat\delta>0$ and $c>0$, the entire trust region is infeasible. We update the policy along the direction of the steepest descent of the constraint function, namely,
    \begin{equation}
        \bm{\theta}_{k+1} = \bm{\theta}_{k} - \frac{\sqrt{\hat\delta}}{\|\bm{q}_k\|}\bm{q}_k.
    \end{equation}
    \item If $c^2/{\bm{q}^\top\bm{q}}-\hat\delta>0$ and $c\leq0$, the trust region lies entirely within the constraint-satisfying half-space. We directly update the step length to $\sqrt{\hat\delta}$ along the direction of the steepest ascent of the objective function, represented by the gradient vector $\bm{g}$, namely,
    \begin{equation}
        \bm{\theta}_{k+1} = \bm{\theta}_{k} + \frac{\sqrt{\hat\delta}}{\|\bm{g}_k\|}\bm{g}_k.
    \end{equation}
    \item  In other cases, the trust region partially intersects with the feasible region, so we have to solve the constrained quadratic sub-problem~\eqref{eq:app-rewrite-subprob}.
\end{enumerate}

The sub-problem~\eqref{eq:app-rewrite-subprob} is characterized as convex.
When the trust region is partially feasible, the existence of at least one strictly feasible point is guaranteed.
Consequently, strong duality holds as the conditions of Slater's theorem are fulfilled.
By introducing two dual variables $\lambda$ and $\nu$, we construct the dual function of the primal problem as follows:
\begin{equation}\label{eq:dual_fn}
    L(\bm\delta, \lambda, \nu) = -\bm{g}_k^\top\bm{\delta} + \nu\left(c_k + \bm{q}_k^\top\bm{\delta}\right) + \frac{\lambda}{2}\left(\bm{\delta}^\top\bm{\delta} - \hat\delta\right)
\end{equation}
Since the sub-problem~\eqref{eq:app-rewrite-subprob} satisfies the strong duality condition, any pair of primal and dual optimal points $(\bm\delta^*,\lambda^*,\nu^*)$ must satisfy the KKT conditions, namely,
\begin{align}
    &\nabla_{\bm\delta}L(\bm\delta,\lambda,\nu) = -\bm{g}_k + \nu \bm{q}_k + \lambda\bm\delta = 0,
    \label{eq:kkt_1}
    \\
    &\nu\left(c_k + \bm{q}_k^\top\bm{\delta}\right) = 0,
    \label{eq:kkt_2}
    \\
    &\lambda\left(\bm{\delta}^\top\bm{\delta} - \hat\delta\right) = 0,
    \label{eq:kkt_3}
    \\
    &c_k+\bm{q}^\top_k\bm\delta\leq0,\quad\bm\delta^\top\bm\delta - \hat\delta \leq0,\quad\nu \geq 0, \quad\lambda \geq 0
\end{align}

\begin{algorithm}[t]
  \caption{Dual variable solver in the case of solvability.}
  \label{alg:calc_dual_var}
  \begin{algorithmic}
    \STATE \textbf{Input:} the objective gradient $\bm{g}_k$, the constraint gradient $\bm{q}_k$, the constraint satisfaction $c_k$, and the trust region radius $\hat\delta$.
    \STATE Compute the $r_k = \bm{g}_k^\top \bm{g}_k$, $s_k = \bm{g}_k^\top \bm{q}_k$, and $ t_k = \bm{q}_k^\top \bm{q}_k$.
    \STATE Compute $\Lambda_a=\{\lambda | \lambda c_k + s_k  > 0, \lambda \geq 0\}$ and $\Lambda_b=\{\lambda | \lambda c_k + s_k  \leq 0, \lambda \geq 0\}$.
    \STATE Compute $\lambda_a = \text{Proj}\left(\sqrt{\frac{r_k-{s_k^2}/{t_k}}{\hat\delta-c_k^2/t_k}}, \Lambda_a\right)$.
    \STATE Compute $\lambda_b = \text{Proj}\left(\sqrt{\frac{r_k}{\hat\delta}}, \Lambda_b \right)$.
    \STATE Compute $L_a(\lambda_a)=\frac{1}{2\lambda_a}\left(\frac{s_k^2}{t_k}-r_k\right)+\frac{\lambda_a}{2}\left(\frac{c_k^2}{t_k}-\hat\delta\right)+\frac{s_kc_k}{t_k}$.
    \STATE Compute $L_b(\lambda_b) = -\frac{1}{2}\left(\frac{r_k}{\lambda_b}+\lambda_b \hat\delta\right)$.
    \IF {$L_a(\lambda_a) > L_b(\lambda_b)$}
    \STATE $\lambda_k^* = \lambda_a$;
    \ELSE
    \STATE $\lambda_k^* = \lambda_b$.
    \ENDIF
    \STATE Compute $\nu_k^* = \max\left(\frac{\lambda_k^* c_k + s_k}{t_k},~0\right)$.
    \STATE \textbf{Output:} $\lambda_k^*,\nu_k^*$.
  \end{algorithmic}
\end{algorithm}

Based on Equation \eqref{eq:kkt_1}, we derive that
\begin{equation}
\label{eq:delta_star}
\bm\delta^*=\frac{1}{\lambda}\left(\bm{g}_k-\nu\bm{q}_k\right).
\end{equation}
Substituting this into Equations \eqref{eq:kkt_2}, we obtain
\begin{equation}
    \nu\left(\lambda c_k + \bm{q}^\top_k\bm{g} - \nu \bm{q}^\top\bm{q}\right) = 0
\end{equation}
To simplify the expression, we denote that $r_k \triangleq \bm{g}_k^\top \bm{g}_k$, $s_k \triangleq \bm{g}_k^\top \bm{q}_k$, and $ t_k \triangleq \bm{q}_k^\top \bm{q}_k$.
Thus, it follows that
\begin{equation}
\label{eq:nu_star}
    \nu = \begin{cases}
        \frac{\lambda c_k + \bm{q}^\top_k\bm{g}_k}{\bm{q}_k^\top\bm{q}_k}, & \text{if } \lambda c_k + \bm{q}^\top_k\bm{g}_k > 0;\\
        0, & \text{if } \lambda c_k + \bm{q}^\top_k\bm{g}_k \leq 0.
    \end{cases}
    =
    \begin{cases}
        \frac{\lambda c_k + s_k}{t_k}, & \text{if } \lambda c_k + s_k > 0;\\
        0, & \text{if } \lambda c_k + s_k \leq 0.
    \end{cases}
\end{equation}
Substituting Equation~\eqref{eq:delta_star} and Equation~\eqref{eq:nu_star} into Equation~\eqref{eq:kkt_3}, we obtain
\begin{equation}
\begin{cases}
    r_k - 2s_k\left(\frac{\lambda c_k + s_k}{t_k}\right) + t_k \left(\frac{\lambda c_k + s_k}{t_k}\right)^2 = \lambda^2 \hat{\delta}, & \lambda \in \Lambda_a;
    \\
    r_k = \lambda^2\hat\delta, & \lambda \in \Lambda_b.
\end{cases}
\end{equation}
where $\Lambda_a \doteq \{\lambda | \lambda c_k + s_k  > 0, \lambda \geq 0\}$, $\Lambda_b \doteq \{\lambda | \lambda c_k + s_k \leq 0, \lambda \geq 0\}$. It is noted that when $c_k < 0$, $\Lambda_a=[0,-s_k/c_k)$ and $\Lambda_b=[-s_k/c_k, \infty)$; when $c > 0$, $\Lambda_a=[-s_k/c_k, \infty)$ and $\Lambda_b=[0, -s_k/c_k)$.
Additionally, the inequality $r_k - s_k^2/t_k > 0$ is satisfied, which is a consequence of the Cauchy-Schwarz inequality, as it implies $\|\bm{g}_k\|_2^2\|\bm{q}_k\|_2^2\geq(\bm{g}_k^\top\bm{q}_k)^2$.
Therefore, the following solution holds:
\begin{equation}
  \label{eq:lambda1}
  \lambda^*=\left\{\begin{array}{ll}
    \lambda_a\triangleq \text{Proj}\left(\sqrt{\frac{r_k-{s_k^2}/{t_k}}{\hat\delta-c_k^2/t_k}}, \Lambda_a\right),
    & L_a(\lambda_a) > L_b(\lambda_b);
    \\
    \lambda_b\triangleq \text{Proj}\left(\sqrt{\frac{r_k}{\hat\delta}}, \Lambda_b \right),
    & \text{otherwise}.
  \end{array}\right.
\end{equation}
In above context, $ L_a $ and $ L_b $ are the results obtained by substituting equations \eqref{eq:delta_star} and \eqref{eq:nu_star} into dual function \eqref{eq:dual_fn}, namely,
\begin{equation}
\left\{\begin{array}{ll}
    L_a(\lambda)\triangleq\frac{1}{2\lambda}\left(\frac{s_k^2}{t_k}-r_k\right)+\frac{\lambda}{2}\left(\frac{c_k^2}{t_k}-\hat\delta\right)+\frac{s_kc_k}{t_k},
    & \lambda \in \Lambda_a;
    \\
    L_b(\lambda)\triangleq-\frac{1}{2}\left(\frac{r_k}{\lambda}+\lambda \hat\delta\right),
    & \lambda \in \Lambda_b.
\end{array}\right.
\end{equation}
Since the dual variable $\lambda$ is used to maximize the dual function \eqref{eq:dual_fn}, we can determine which one of $\lambda_a$ and $\lambda_b$ is the optimal solution by comparing $ L_a(\lambda_a) $ and $ L_b(\lambda_b) $.

Consequently, the optimal $\nu$ can be expressed in terms of $\lambda^*$ as follows:
\begin{equation}
  \label{eq:nu1}
  \nu^* = \left(\frac{\lambda^* c_k + s_k}{t_k}\right)_+.
\end{equation}
The pseudo-code for computing the optimal $(\lambda^*_k, \nu^*_k)$ is provided in Algorithm~\ref{alg:calc_dual_var}.

Then, we update the policy as
\begin{equation}
  \bm{\theta}_{k+1} = \bm{\theta}_{k} + \frac{\bm{g}_k-\nu^*\bm{q}_k}{\lambda^*}.
\end{equation}

\subsection{Worst-Case Analysis}\label{app:convergence}

In the subsequent corollary, we detail the bounds for both performance improvement and constraint violations under the worst-case conditions following the policy update from resolving the sub-problem~\eqref{eq:app-subprob}.
\WorstCase*
\begin{proof}
Let the feasible region of the sub-problem be denoted as
\begin{equation}
    \Theta^C_k=\left\{\bm{\theta}\in\Theta\mid\mathcal J_C(\bm{\theta}_k) + \left(\bm{\theta} - \bm{\theta}_{k}\right)^\top \nabla_{\bm{\theta}} \mathcal J_C(\bm{\theta}_k) \leq b,\quad\|\bm{\theta} - \bm{\theta}_k\|^2 \leq \hat\delta\right\}.
\end{equation}
Furthermore, because
\begin{equation}
\bm{\theta}_{k+1} = \arg\max_{\bm{\theta}\in\Theta^C_k}\left(\bm{\theta} - \bm{\theta}_{k}\right)^\top \nabla_{\bm{\theta}} \mathcal J_R(\bm{\theta}_k),
\end{equation}
it follows that
\begin{equation}
    \forall \bm{\theta}\in\Theta^C_k,\quad\left(\bm{\theta}_{k+1} - \bm{\theta}_{k}\right)^\top \nabla_{\bm{\theta}} \mathcal J_R(\bm{\theta}_k)\geq \left(\bm{\theta} - \bm{\theta}_{k}\right)^\top \nabla_{\bm{\theta}} \mathcal J_R(\bm{\theta}_k).
\end{equation}
Since $\bm{\theta}_k$ is feasible, it follows that $\bm{\theta}_k \in \Theta^C_k$. Consequently, we have
\begin{equation}
\left(\bm{\theta}_{k+1} - \bm{\theta}_{k}\right)^\top \nabla_{\bm{\theta}} \mathcal J_R(\bm{\theta}_k)\geq \left(\bm{\theta}_k - \bm{\theta}_{k}\right)^\top \nabla_{\bm{\theta}} \mathcal J_R(\bm{\theta}_k)=0.
\label{eq:wc36}
\end{equation}
Further, we have $\exists t_k\in(0,1)$ such that
\begin{equation}
\begin{aligned}
\label{eq:wc37}
    \mathcal J_R (\bm{\theta}_{k+1}) - \mathcal J_R (\bm{\theta}_k)
    &= (\bm\theta_{k+1}-\bm\theta_k)^\top\nabla_{\bm{\theta}}\mathcal{J}_R(\bm{\theta}_k)
    + \frac12(\bm\theta_{k+1}-\bm\theta_k)^\top\nabla^2_{\bm{\theta}}\mathcal{J}_R(\bm{\theta}_k+t_k(\bm\theta_{k+1}-\bm\theta_k))(\bm\theta_{k+1}-\bm\theta_k)
    \\
    & \geq -\frac12\Big|(\bm\theta_{k+1}-\bm\theta_k)^\top\nabla^2_{\bm{\theta}}\mathcal{J}_R(\bm{\theta}_k+t_k(\bm\theta_{k+1}-\bm\theta_k))(\bm\theta_{k+1}-\bm\theta_k)\Big|
    \\
    &\geq-\frac12\epsilon^R_k\|\bm\theta_{k+1}-\bm\theta_k\|^2_2
    \\
    &\geq-\frac12\epsilon^R_k\hat{\delta}
\end{aligned}
\end{equation}
where
$\epsilon^R_k = \max_{t\in(0,1)}\left|\nabla_{\bm{\theta}}^2 \mathcal J_R(\bm{\theta}_k+t(\bm{\theta}_{k+1}-\bm{\theta}_k))\right|.$

On the other hand, since $\bm\theta_{k+1}$ is feasible, it follows that
\begin{equation}
    \mathcal J_C(\bm{\theta}_k) + \left(\bm{\theta}_{k+1} - \bm{\theta}_{k}\right)^\top \nabla_{\bm{\theta}} \mathcal J_C(\bm{\theta}_k) \leq b
\end{equation}
Furthermore, we have
\begin{equation}
\begin{aligned}
\label{eq:wc39}
    \mathcal{J}_C(\bm\theta_{k+1}) &= \mathcal{J}_C(\bm\theta_{k}) + (\bm\theta_{k+1}-\bm\theta_k)^\top\nabla_{\bm{\theta}}\mathcal{J}_C(\bm{\theta}_k)
    + \frac12(\bm\theta_{k+1}-\bm\theta_k)^\top\nabla^2_{\bm{\theta}}\mathcal{J}_C(\bm{\theta}_k+t_k(\bm\theta_{k+1}-\bm\theta_k))(\bm\theta_{k+1}-\bm\theta_k)
    \\
    &\leq b + \frac12(\bm\theta_{k+1}-\bm\theta_k)^\top\nabla^2_{\bm{\theta}}\mathcal{J}_C(\bm{\theta}_k+t_k(\bm\theta_{k+1}-\bm\theta_k))(\bm\theta_{k+1}-\bm\theta_k)
    \\
    &\leq b + \frac12\Big|(\bm\theta_{k+1}-\bm\theta_k)^\top\nabla^2_{\bm{\theta}}\mathcal{J}_C(\bm{\theta}_k+t_k(\bm\theta_{k+1}-\bm\theta_k))(\bm\theta_{k+1}-\bm\theta_k)\Big|
    \\
    &\leq b + \frac12\epsilon^C_k\left\|\bm\theta_{k+1}-\bm\theta_k\right\|^2_2
    \\
    &\leq b + \frac12\epsilon^C_k\hat{\delta}
\end{aligned}
\end{equation}
where
$\epsilon^C_k = \max_{t\in(0,1)}\left|\nabla_{\bm{\theta}}^2 \mathcal J_C(\bm{\theta}_k+t(\bm{\theta}_{k+1}-\bm{\theta}_k))\right|.$

By combining Equation~\eqref{eq:wc37} and Equation~\eqref{eq:wc39}, Corollary \ref{the:cgpo_wc} is thus proven.
\end{proof}

\begin{figure}[ht]
  \centering
  \includegraphics[width=0.95\textwidth]{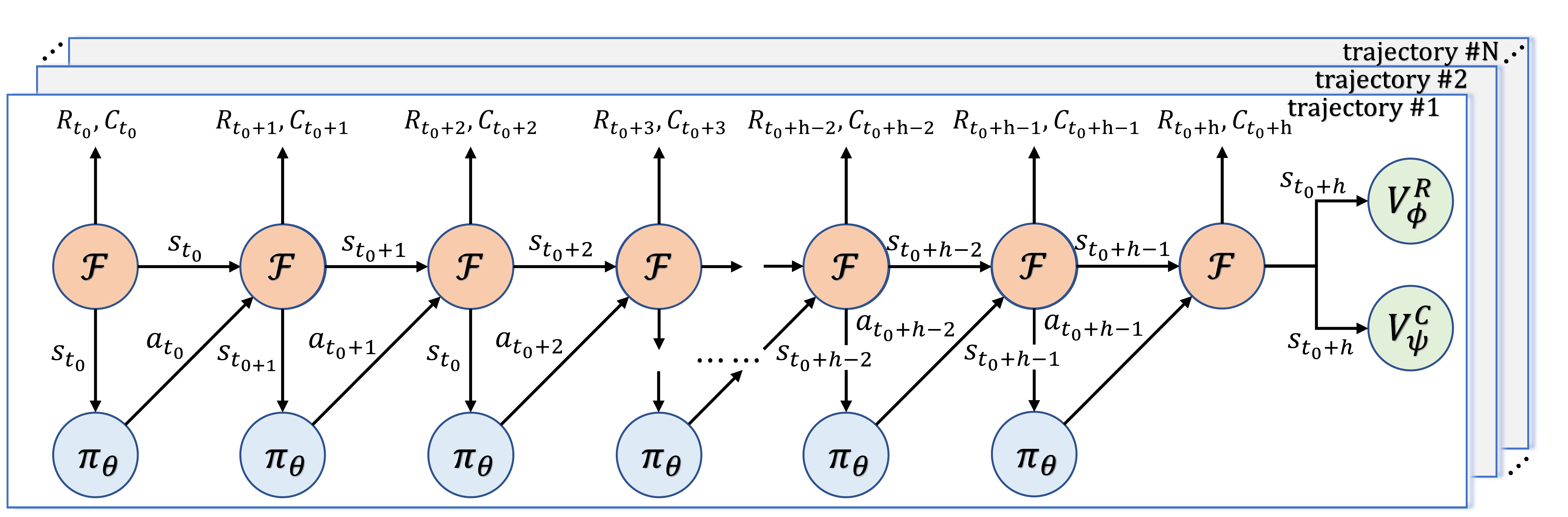}
  \caption{Computation graph of CGPO.}
  \label{fig:app-loss}
\end{figure}

\subsection{Gradient Calculation} \label{app:grad_calc}

Given our focus on employing gradients to solve Safe RL problems, our algorithm can incorporate any differentiable method for computing gradients $\bm{g_k}$ and $\bm{q_k}$, such as BPTT~\citep{mozer2013focused} and SHAC~\citep{Jie2022shac}. Here, we give an example of implementation using SHAC approach.

We compute the gradients of the objective function $\mathcal{J}_R$ and constraint functions $\mathcal{J}_C$ concerning policy $\theta$ using a methodology similar to SHAC \citep{Jie2022shac}.
While training the policy network, we concurrently train a reward critic network $V^R_{\bm{\phi}}$ and a cost one $V^C_{\bm{\psi}}$. We divide the entire trajectory into short-horizon sub-windows, each with a length of $h$. Subsequently, the gradients of the objective function (constraint function) are calculated by multi-step rewards (costs) within sub-windows plus a final value estimation from the corresponding learned critics. Specifically, considering the N-trajectory collection $\{\tau_i\}_{i=1}^N$ obtained from a given policy $\bm{a}_t = \pi_{\bm{\theta}}(\bm{s}_t)$ and  differentiable simulator $\bm{s}_{t+1}=\mathcal F(\bm{s}_t,\bm{a}_t)$, we employ the following two loss functions to compute gradients on a short-horizon sub-window:

\begin{equation}
  \label{eq:JRLoss}
  \mathcal{L}_R(\bm{\theta}) = \frac{1}{N}\sum_{i = 1}^{N}\left[\left(\sum_{t = t_0}^{t_0 + h-1}r(\mathbf{s}^i_t, \mathbf{a}^i_t)\right) + V^R_\phi(\mathbf{s}^i_{t_0 + h})\right],
\end{equation}
\begin{equation}
  \label{eq:JCLoss}
  \mathcal{L}_C(\bm{\theta}) = \frac{1}{N}\sum_{i = 1}^{N}\left[\left(\sum_{t = t_0}^{t_0 + h-1}c(\mathbf{s}^i_t, \mathbf{a}^i_t)\right) +  V^C_\psi(\mathbf{s}^i_{t_0 + h})\right],
\end{equation}
where $t_0$ is the starting time step of a sub-window.
To compute the gradients $\frac{\partial \mathcal{L}_R(\bm{\theta})}{\partial \bm{\theta}}$ and $\frac{\partial \mathcal{L}_C(\bm{\theta})}{\partial \bm{\theta}}$, we treat the simulator $\mathcal{F}$ as a differentiable layer and incorporate it into the overall computational graph, as illustrated in Figure \ref{fig:app-loss}.
The gradients are then computed by back-propagation through time (BPTT) \citep{bptt}, with the replacement of the final time step by values $ V^R_\psi(\mathbf{s}^i_{t_0 + h})$, $ V^C_\psi(\mathbf{s}^i_{t_0 + h})$, and with truncation at the beginning of each sub-window.

When performing specific gradient calculations, we need to start from the last time step $t_0+h$ of the sub-window for each trajectory $\tau_i$:
\begin{align}
    \frac{\partial \mathcal{L}_R(\theta)}{\partial s^i_{t_0+h}}=\frac{1}{N}\frac{\partial V^R_\phi(s^i_{t_0+h})}{\partial s^i_{t_0+h}},\\
    \frac{\partial \mathcal{L}_C(\theta)}{\partial s^i_{t_0+h}}=\frac{1}{N}\frac{\partial V^C_\psi(s^i_{t_0+h})}{\partial s^i_{t_0+h}}.
\end{align}
Then, we can compute the adjoints in the previous steps $t_0\leq t<t_0+h$ in reverse order:
\begin{equation}
    \frac{\partial \mathcal{L}_R(\theta)}{\partial s^i_t}=\frac{1}{N}\frac{\partial R(s^i_t,a^i_t)}{\partial s^i_t}+\left(\frac{\partial \mathcal{L}_R(\theta)}{\partial s^i_{t+1}}\right)\left(\left(\frac{\partial \mathcal{F}(s^i_t, a^i_t)}{\partial s^i_t}\right)+\left(\frac{\partial \mathcal{F}(s^i_t, a^i_t)}{\partial a^i_t}\right)\left(\frac{\pi_\theta(s^i_t)}{\partial s^i_t}\right)\right)
\end{equation}
\begin{equation}
    \frac{\partial \mathcal{L}_C(\theta)}{\partial s^i_t}=\frac{1}{N}\frac{\partial C(s^i_t,a^i_t)}{\partial s^i_t}+\left(\frac{\partial \mathcal{L}_C(\theta)}{\partial s^i_{t+1}}\right)\left(\left(\frac{\partial \mathcal{F}(s^i_t, a^i_t)}{\partial s^i_t}\right)+\left(\frac{\partial \mathcal{F}(s^i_t, a^i_t)}{\partial a^i_t}\right)\left(\frac{\pi_\theta(s^i_t)}{\partial s^i_t}\right)\right)
\end{equation}
\begin{equation}
    \frac{\partial \mathcal{L}_R(\theta)}{\partial a^i_t}=\frac{1}{N}\frac{\partial R(s^i_t,a^i_t)}{\partial a^i_t} + \left(\frac{\partial \mathcal{L}_R(\theta)}{\partial s^i_{t+1}}\right)\left(\frac{\partial \mathcal{F}(s^i_t, a^i_t)}{\partial a^i_t}\right)
\end{equation}
\begin{equation}
    \frac{\partial \mathcal{L}_C(\theta)}{\partial a^i_t}=\frac{1}{N}\frac{\partial C(s^i_t,a^i_t)}{\partial a^i_t} + \left(\frac{\partial \mathcal{L}_C(\theta)}{\partial s^i_{t+1}}\right)\left(\frac{\partial \mathcal{F}(s^i_t, a^i_t)}{\partial a^i_t}\right)
\end{equation}
From all the computed adjoints, we can compute the objective loss and the constraint loss by
\begin{equation}
    \frac{\partial \mathcal{L}_R(\theta)}{\partial \theta}=\sum^N_{i=1}\sum^{t_0+h-1}_{t=t_0}\left(\frac{\partial\mathcal{L}_R(\theta)}{\partial a^i_t}\right)\left(\frac{\partial\pi_\theta(s^i_t)}{\partial \theta}\right)
\end{equation}
\begin{equation}
    \frac{\partial \mathcal{L}_C(\theta)}{\partial \theta}=\sum^N_{i=1}\sum^{t_0+h-1}_{t=t_0}\left(\frac{\partial\mathcal{L}_C(\theta)}{\partial a^i_t}\right)\left(\frac{\partial\pi_\theta(s^i_t)}{\partial \theta}\right)
\end{equation}

\subsection{The Update Method of Critic Network}\label{app:value}

As mentioned in Section 5.3, line 309, the time step $t$ is incorporated into the state as an input to the critic network, namely, $V(s_t, t)$. This value function forecasts the expected return for a future finite horizon of $T-t$ at time step $t$ and state $s$:
\begin{equation}
V_\pi(s, t) \doteq \mathbb E_\pi\left[\sum_{k=1}^{T-t}R_{t+k}\mid S_t=s\right]
\end{equation}
We define a series of n-step estimates as follows:
\begin{equation}
G_{t:t+n} \doteq R_{t+1} + R_{t+2} + \cdots + R_{t+n} + V_\pi(S_{t+n}, t+n)
\end{equation}
Specifically, in our implementation, the data $\{\tau^i_{t_0:t_h}\}^N_{i=1}$ obtained from each short horizon $h$ rollout is used to calculate $G_{t:t+n},n=1,2,\cdots,h-t$.
Given $\lambda$, an unbiased finite-horizon TD($\lambda$) estimate of $V(s^i_t,t),s^i_t\in\tau^i_{t_0:t_h}$ can be obtained by the following formula:

\begin{equation}
G^{\lambda,i}_t=(1-\lambda)\sum^{h-t-1}_{n=1}\lambda^{n-1}G^i_{t:t+n}+\lambda^{h-t-1}G^i_{t:t+h}
\end{equation}
Note that the weight decays as $n$ increases and the total summation is 1.
Thus, the updated Loss function can be defined as:
\begin{equation}
    L^{\lambda}(\phi) = \frac{1}{Nh}\sum^N_{i=1}\sum^{h-1}_{t=0}\left(G^{\lambda,i}_t - V_\phi(s^i_t, t)\right)^2
\end{equation}

\section{Zero-order Batched Gradient method vs First-order Batched Gradient method}\label{app:zobg}

Given our focus on employing gradients to solve Safe RL problems, our algorithm can incorporate any differentiable method to compute gradients $\bm{g_k}$ and $\bm{q_k}$.
Various methods are available, broadly classified into two categories: the Zero-order Batch Gradient (ZoBG) method and the First-order Batch Gradient (FoBG) method~\citep{suh2022differentiable}.

compared to zeroth-order batched gradient (ZoBG) estimation on $\{\tau_i\}^N_{i=1}$,
\begin{equation}
    \nabla_\theta\mathcal J(\theta) \simeq g_\text{ZoBG} \doteq \frac1N\sum_{i=1}^N\left[\sum^T_{t=0}\nabla_\theta\log\pi_\theta(a_t^i\mid s_t^i) R(\tau_i)\right],
\end{equation}
we find that FoBG is more suitable for the Safe RL field for the following two reasons:

\paragraph{First, using ZoBG to estimate the constraint at $\bm\theta_k +\bm\delta$ results in significant estimation errors.}
In Safe RL, the estimation of constraint condition $\mathcal J_C(\bm\theta_k +\bm\delta)\leq b$ serves as an important basis for solving the optimal feasible update $\bm\delta^*$.
Thus, inaccurate estimations can lead to suboptimal constraint satisfaction.
Intuitively, since ZoBG does not utilize the first-order gradient of environmental dynamics, it fails to capture how policy changes affect state visitation changes.
Particularly in long trajectories, this effect is amplified, resulting in greater changes to states visited later due to earlier state modifications.
However, first-order batched gradient (FoBG) estimation mitigates this amplification effect by incorporating the first-order gradient of environmental dynamics, allowing the gradient to propagate along the trajectory.

We empirically confirmed this through experiments comparing the relative errors of ZoBG and FoBG in estimating constraint functions.
These experiments were conducted in simple FunctionEnv environments, as detailed in Appendix D.1, across various trajectory lengths and update magnitudes $\hat\delta$.
Each experiment ran for 100 epochs using 5 random seeds, with the results displayed in the tables below:

\begin{table*}[ht]
\centering
\vspace{-1em}
\caption{Relative error of ZoBG in constraint function estimation in the simple FunctionEnv which is detailed in Appendix~\ref{app:simple_env}.}
\label{tab:zobg}
\resizebox{0.98\textwidth}{!}{
\begin{threeparttable}
\begin{tabular}{ccccccc}
\toprule
Traj. Len. $\rightarrow$ & 1 & 5 & 10 & 50 & 100 & 200 \\
\midrule
$\hat\delta$ = 0.0001 & 0.3387 $\pm$ 0.2420 & 0.8862 $\pm$ 0.3499 & 3.093 $\pm$ 1.394 & 15.40 $\pm$ 15.93 & 35.90 $\pm$ 29.11 & 65.19 $\pm$ 138.6 \\
$\hat\delta$ = 0.0001 & 0.6291 $\pm$ 1.4107 & 0.4401 $\pm$ 1.722 & 15.575 $\pm$ 6.813 & 32.09 $\pm$ 132.9 & 150.6 $\pm$ 68.59 & 39.17 $\pm$ 134.4 \\
$\hat\delta$ = 0.0001 & 0.3171 $\pm$ 2.039 & 8.709 $\pm$ 33.90 & 31.38 $\pm$ 139.1 & 68.71 $\pm$ 292.6 & 197.6 $\pm$ 488.8 & 66.40 $\pm$ 348.7 \\
$\hat\delta$ = 0.0001 & 1.630 $\pm$ 11.76 & 4.393 $\pm$ 17.18 & 22.56 $\pm$ 132.8 & 276.8 $\pm$ 372.7 & 202.4 $\pm$ 291.1 & 35.99 $\pm$ 181.0 \\
$\hat\delta$ = 0.0001 & 12.00 $\pm$ 26.20 & 11.51 $\pm$ 66.89 & 65.47 $\pm$ 118.9 & 94.81 $\pm$ 93.27 & 174.9 $\pm$ 515.3 & 71.00 $\pm$ 288.0 \\
\bottomrule
\end{tabular}
\end{threeparttable}
}
\end{table*}

\begin{table*}[ht]
\centering
\vspace{-1em}
\caption{Relative error of FoBG in constraint function estimation in the simple FunctionEnv which is detailed in Appendix~\ref{app:simple_env}.}
\label{tab:fobg}
\resizebox{0.98\textwidth}{!}{
\begin{threeparttable}
\begin{tabular}{ccccccc}
\toprule
Traj. Len. $\rightarrow$ & 1 & 5 & 10 & 50 & 100 & 200 \\
\midrule
$\hat\delta$ = 0.0001 & 0.0081 $\pm$ 0.0070 & 0.0033 $\pm$ 0.0029 & 0.0018 $\pm$ 0.0015 & 0.0007 $\pm$ 0.0006 & 0.0011 $\pm$ 0.0014 & 0.0015 $\pm$ 0.0024 \\
$\hat\delta$ = 0.0001 & 0.0029 $\pm$ 0.0022 & 0.0027 $\pm$ 0.0021 & 0.0027 $\pm$ 0.0022 & 0.0030 $\pm$ 0.0028 & 0.0059 $\pm$ 0.0163 & 0.0129 $\pm$ 0.0606 \\
$\hat\delta$ = 0.0001 & 0.0050 $\pm$ 0.0038 & 0.0046 $\pm$ 0.0038 & 0.0045 $\pm$ 0.0039 & 0.0056 $\pm$ 0.0064 & 0.0084 $\pm$ 0.0154 & 0.0165 $\pm$ 0.0661 \\
$\hat\delta$ = 0.0001 & 0.0283 $\pm$ 0.0295 & 0.0221 $\pm$ 0.0226 & 0.0194 $\pm$ 0.0170 & 0.0189 $\pm$ 0.0227 & 0.0224 $\pm$ 0.0327 & 0.0494 $\pm$ 0.2030 \\
$\hat\delta$ = 0.0001 & 0.1433 $\pm$ 1.1200 & 0.0371 $\pm$ 0.0502 & 0.0308 $\pm$ 0.0330 & 0.0290 $\pm$ 0.0406 & 0.0399 $\pm$ 0.0577 & 0.1045 $\pm$ 0.5258 \\
\bottomrule
\end{tabular}
\end{threeparttable}
}
\end{table*}

\begin{figure*}[ht]
    \centering
    \includegraphics[width=0.8\linewidth]{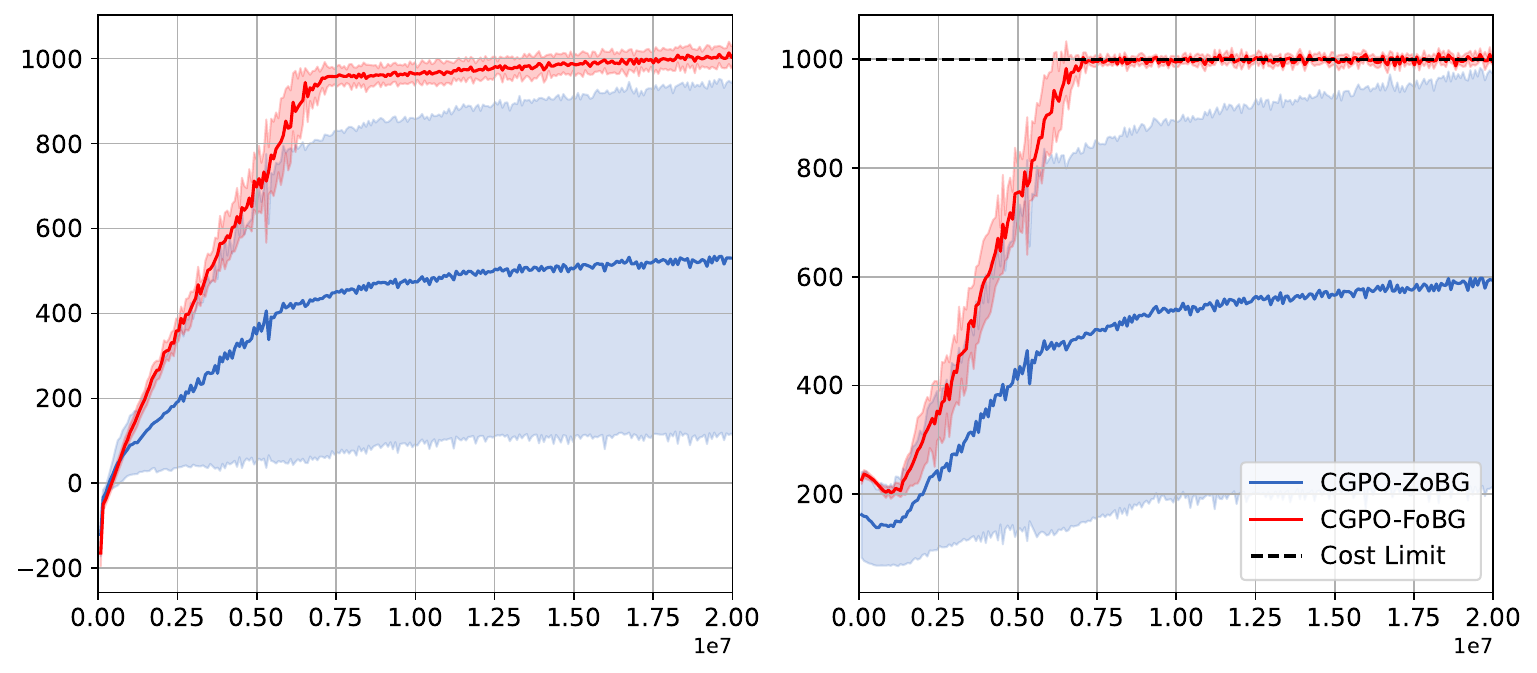}
    \vspace{-1.2em}
    \caption{Training curves of CGPO using ZoBG method and FoBG method to calculate the gradients, showing episodic return and constraint for 5 random seeds. Solid lines represent the mean, while the shaded areas indicate variance, without any smoothing to the curves.}
    \label{fig:fobg}
\end{figure*}

Table~\ref{tab:zobg} shows that ZoBG effectively estimates the constraint function only at short trajectory lengths ($\leq 5$) and small update magnitudes ($\leq 0.001$).
However, with longer trajectory lengths and larger update magnitudes, ZoBG quickly loses its ability to estimate constraints ($\text{Relative Error} \geq 1$).
In contrast, Table~\ref{tab:fobg} demonstrates that FoBG consistently estimates the constraint function accurately across all trajectory lengths and update magnitudes, indicating superior stability.
These results highlight FoBG's significant advantages over ZoBG in constraint prediction within the Safe RL domain.

\paragraph{Second, ZoBG, which employs the Monte Carlo method, exhibits significant estimation variance and poor sample efficiency.}
To validate this, we introduced a new baseline, CGPO-ZoBG, in our training, where ZoBG is utilized for gradient estimation in our algorithm.
The training curve of CGPO-ZoBG is depicted in the figure below:

Figure~\ref{fig:fobg} illustrates that CGPO-ZoBG has low and unstable update efficiency, complicating the search for the optimal feasible update. This evidence further underscores ZoBG's inadequacy for Safe RL.

In summary, despite challenges in its application, FoBG's advantages in Safe RL are more pronounced than those of ZoBG.
Furthermore, as the techniques of differentiable simulators and world models continue to evolve, FoBG's application potential will significantly expand.
Therefore, we argue that integrating differentiable RL methods into our algorithm is essential.

\section{Experiment Highlighting the Shortcomings of the ABE Method}\label{app:simple_env}

\subsection{Task Settings}

\begin{wrapfigure}{r}{0.4\textwidth}
\vspace{-2em}
\includegraphics[width=\linewidth]{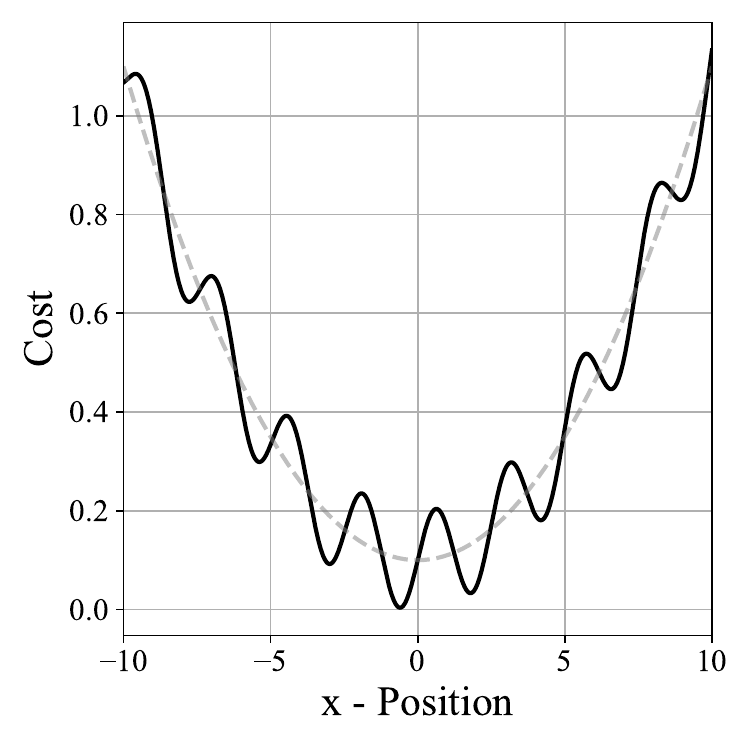}
\vspace{-2.0em}
\caption{Cost values at different x-axis positions for the agent.}
\label{fig:simple-exp-cost}
\end{wrapfigure}

To clearly demonstrate the shortcomings of the \textit{Advantage-based Estimation} (ABE) method, we designed a simple task. The goal of this task is to train an agent to move along the x-axis, receiving rewards and incurring costs based on its position on the x-axis. The specific settings of the task are as follows:

\textbf{Action Space:} The direction and step length of the agent's movement along the x-axis constitute the action space, which is a single-element vector, i.e., $a \in (-1, 1)$.

\textbf{Observation Space:} The observation data for the agent is its position on the x-axis, which is also a single-element vector, i.e., $s \in \mathbb{R}$. The task employs a simple, artificially set state transition function: $s_{t+1} = s_t + 0.2 \times a_t$.

\textbf{Reward Function and Cost Function:} Since the purpose of this simple task is to verify the accuracy of the estimation method, we set the reward function and cost function in the same form, i.e.,
\begin{equation}
    f(x) = \left(\frac{x}{10.0}\right)^2 + 0.1 + 0.1 \times \sin\left(\frac{8x}{\pi}\right)
\end{equation}
This function is sourced from~\citet{metz2021gradients}, and its visualization is shown in Figure \ref{fig:simple-exp-cost}.

\textbf{Constraint:} Since the task is solely aimed at verifying the accuracy of the estimation method, the set constraint has no physical significance. Specifically, the cumulative cost on a trajectory of length 100 for the agent should be less than 8.0:
\begin{equation}
    \mathcal{J}_C(\pi) \triangleq \sum^{99}_{t=0} f(x_t)\leq 8.0
\end{equation}
This constraint setting is consistent with many well-known Safe Benchmarks (such as Safety-Gym~\citep{ray2019benchmarking}, Bullet-Safety-Gym~\citep{gronauer2022bullet}, Safety-Gymnasium~\citep{ji2023safety}, and OmniSafe~\citep{ji2023omnisafe}), where the undiscounted cumulative cost on a finite-length trajectory must be less than a scalar threshold, namely, $\sum^{T-1}_{t=0}c_t \leq b$.

\subsection{Experimental Details}

In the training procedure, the parameterized initial strategy $\pi_{\theta_0}$ is updated iteratively, leveraging both the Advantage-based Estimation (ABE) and the Gradient-based Estimation (GBE) methods. The magnitude of each update, denoted by $\bm\delta$, remains fixed at $\|\bm\delta\|=0.01$. For every update, the value of the updated constraint function is computed separately using the ABE and GBE methods:

\textbf{ABE method:}
\begin{equation}
    \hat{\mathcal{J}}^\text{ABE}_C(\bm\theta+\bm\delta) \triangleq \mathcal{J}_C(\bm\theta) + \frac{1}{1-\gamma}\E_{\substack{s\sim d^{\pi_{\bm\theta}} \\ a\sim{\pi_{\bm\theta}}}} \left[\frac{\pi_{\bm\theta+\bm\delta}(s,a)}{\pi_{\bm\theta}(s,a)}A^{\pi_{\bm\theta}}(s,a)\right]
\end{equation}

\textbf{GBE method:}
\begin{equation}
    \hat{\mathcal J}^\text{GBE}_C(\bm{\theta}+\bm{\delta}) \triangleq \mathcal J_C (\bm{\theta}) + \bm{\delta}^\top \nabla_{\bm{\theta}} \mathcal J_C(\bm{\theta}).
\end{equation}

\textbf{Metrics:} Given that both methods aim to approximate the variation relative to $J_C (\bm{\theta})$, the relative error in the actual shift of the constraint function is employed as the evaluative metric:
\begin{equation}
    \Delta_\text{relative} = \frac{\Big|{\mathcal{J}}_C(\bm\theta+\bm\delta)-\hat{\mathcal{J}}_C(\bm\theta+\bm\delta)\Big|}{\Big|{\mathcal{J}}_C(\bm\theta+\bm\delta) - {\mathcal{J}}_C(\bm\theta)\Big|}
\end{equation}
Notably, an estimation is considered valid when the relative error stays below 1.0. Conversely, an estimation loses its scalar referential value when the relative error exceeds 1.0.

\newpage
\section{More Experimental Results}
\subsection{Main Results}\label{app:more_results}

To ensure clarity, we avoid placing numerous training curves on a single graph as this can lead to difficulties in differentiation.
Given the space constraints in the main text, the comprehensive set of training curves for all algorithms is provided exclusively in the appendix. Additionally, the baseline algorithms have been categorized into two distinct groups: Traditional Safe RL Algorithms and Lagrangian-Revised Differentiable RL Algorithms.

\begin{figure}[H]
    \centering
    \includegraphics[width=0.98\linewidth]{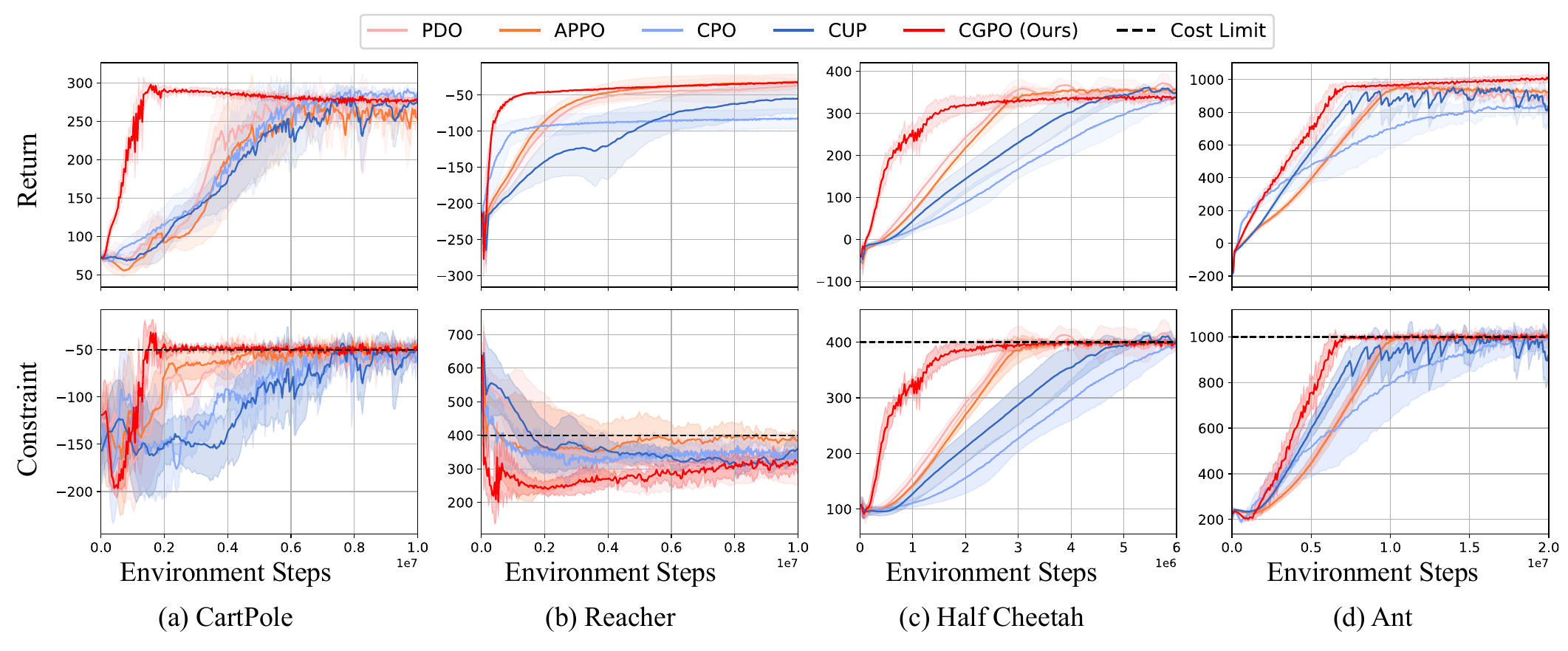}
    \vspace{-1.2em}
    \caption{Training curves compared to the Traditional Safe RL Algorithms on different tasks, showing episodic return and constraint for 5 random seeds. Solid lines represent the mean, while the shaded areas indicate variance, without any smoothing to the curves.}
    \label{fig:main_1}
\end{figure}

\begin{figure}[H]
    \centering
    \includegraphics[width=0.98\linewidth]{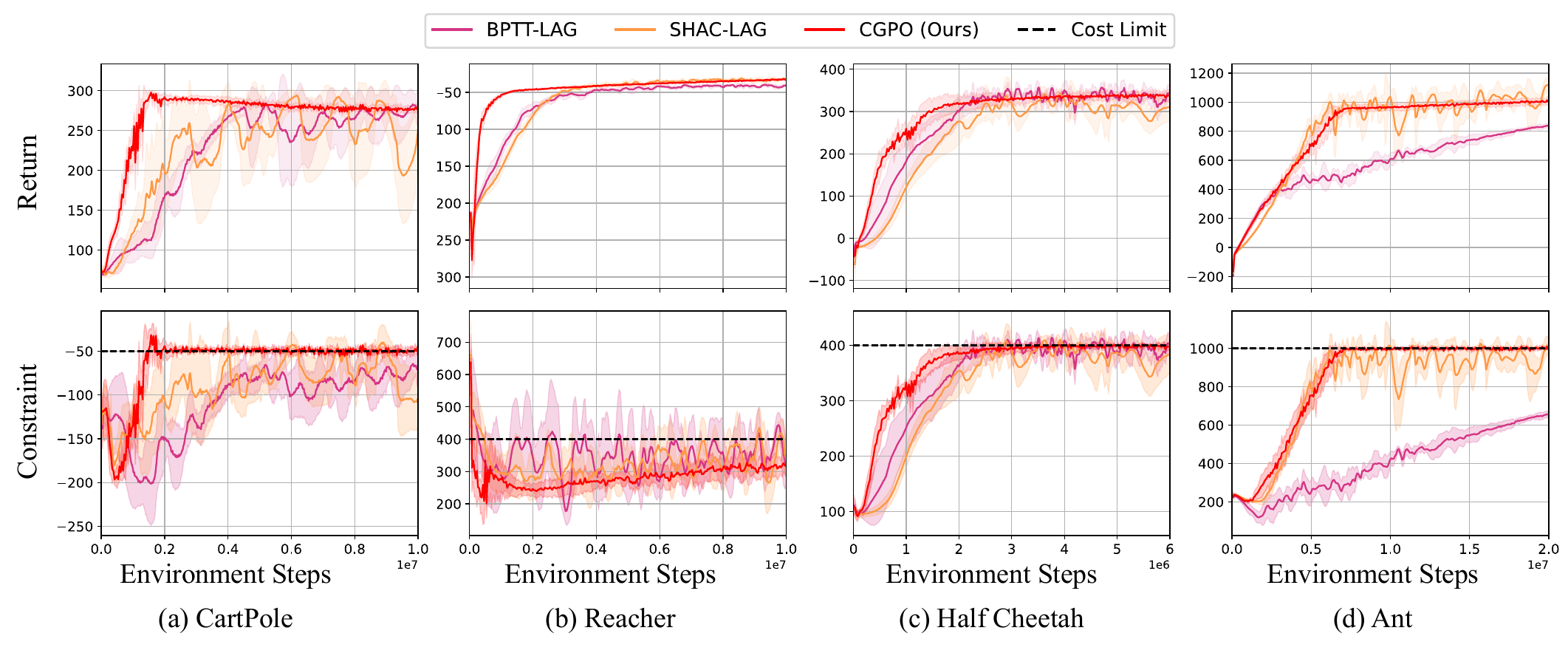}
    \vspace{-1.2em}
    \caption{Training curves compared to the Lagrangian-revised differentiable Safe RL algorithms on different tasks, showing episodic return and constraint for 5 random seeds. Solid lines represent the mean, while the shaded areas indicate variance, without any smoothing to the curves.}
    \label{fig:main_2}
\end{figure}

\subsection{More Ablation Experiments on Radius Adaptation}\label{app:adaptation}

Large changes in hyper-parameter $\hat\delta$ have some impact on the convergence speed and constraint satisfaction, as shown in Figure~\ref{fig:adaptation_params} (a). Therefore, we propose an adaptive delta setting method, which can automatically adjust the value of delta according to the current training status, further reducing the difficulty of setting this hyperparameter.
To validate the effectiveness of the adaptive method in managing hyperparameter settings, we conducted a series of ablation studies. The outcomes of these experiments are presented below:

 \begin{figure}[ht]
    \centering
    \includegraphics[width=0.98\linewidth]{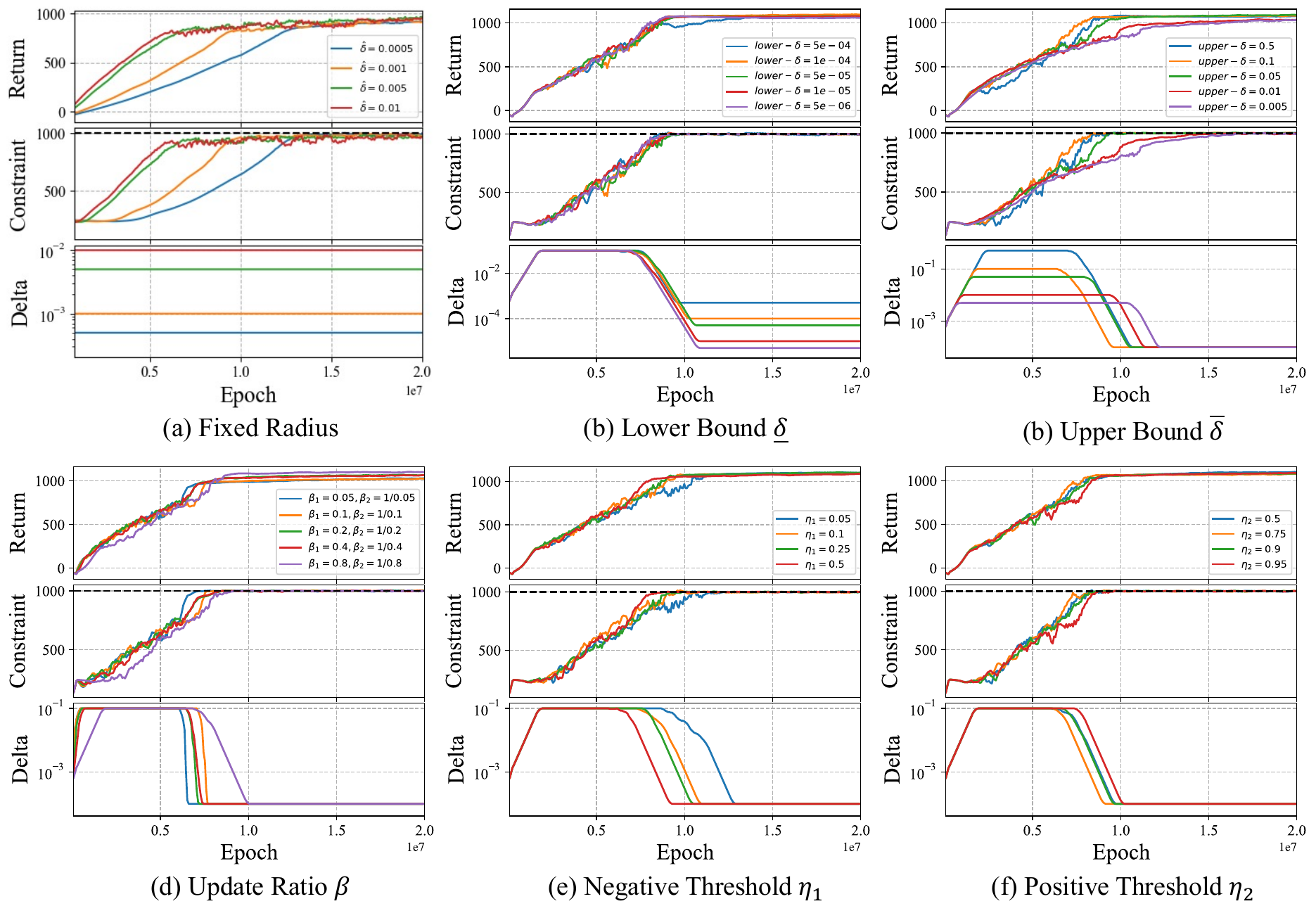}
    \vspace{-1.2em}
    \caption{Ablation studies on the hyper-parameters of the adaptive method of trust-region radius.}
    \label{fig:adaptation_params}
\end{figure}

Figure~\ref{fig:adaptation_params} (b) - (f) illustrate that the adaptive approach's hyper-parameters exhibit robustness, demonstrating its effectiveness in transforming a challenging-to-tune hyper-parameter into several more manageable hyper-parameters.

\newpage
\section{Implementation of Differentiable Safe RL Tasks}\label{app:impl_task}

Our algorithm requires computing first-order gradients along trajectories, necessitating a guarantee of system differentiability.
Given the rapid advancement of differentiable physics engines~\citep{degrave2019differentiable,werling2021fast,xian2023fluidlab}, we selected robotic control tasks for our tests.
We developed the first Differentiable Safe RL task series, based on four robotic control tasks in the fully differentiable Brax engine~\citep{freeman2021brax}, as shown in Figure~\ref{fig:agent}
We introduced two common constraints to these four unconstrained tasks: one on position~\citep{achiam2017constrained,yang2018unified,ji2023safety} and another on velocity~\citep{zhang2020first,shi2019tbq}.
The specific constraint settings may lack sufficient physical significance, yet they suffice for algorithm testing.
This section details the settings of these tasks.

\begin{figure}[ht]
    \centering
    \includegraphics[width=0.85\linewidth]{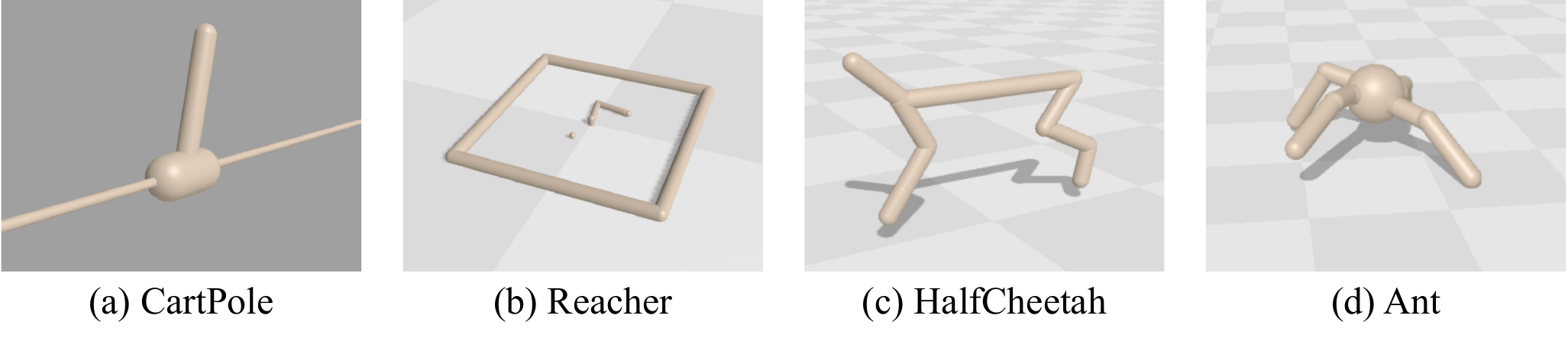}
    \vspace{-1.2em}
    \caption{Four distinct agents used in our experiments originate from Brax, a fully differentiable physics engine.}
    \label{fig:agent}
\end{figure}

\subsection{Position-constrained CartPole}

This environment features a cart that moves linearly with one end of a pole fixed to it and the other end free.
The goal is to push the cart left or right to balance the pole on its top by applying forces.
The constraint requires maintaining the cart's position at a specific distance from the origin.
Since the optimal policy involves the cart staying stationary at its initial position, the constraint introduces a contradiction between the constraints and the goal.

\textbf{Action Space:} The agent takes a 1-element vector for actions. The action space is a continuous range,  $a \in [-3,3]$. The action denotes the numerical force applied to the cart, with its magnitude indicating the force amount and its sign in the direction.

\textbf{Observation Space:} The state space includes positional values of the pendulum system's body parts, followed by the velocities of these parts, with positions listed before velocities.	The observation is a vector with a shape of $(4,)$.

\textbf{Reward Function:} Receives a reward of 1.0 at each time step, i.e., $r_t = 1.0$.

\textbf{Cost Function:} Incurs a cost equal to the negative square of the cart's position at each time step, i.e., $c_t = - \|\bm{x}_\text{pos}\|^2$.

\textbf{Constraint:} The cumulative cost on a trajectory of length $300$ for the agent should be less than $-50.0$.

\subsection{Position-constrained Reacher}

Reacher is a two-jointed robot arm.
The goal is to guide the robot's end effector (fingertip), close to a target appearing at a random location.
We set a constraint opposite to the objective function, requiring that the fingertip must not be too close to the target point.

\textbf{Action Space:} The action space is defined as a bounded box, designated as Box(-1, 1, (2,), float32).
This notation represents a two-dimensional continuous space, with each action represented by a pair $(a, b)$.
These values represent the torques' magnitudes applied to the hinge joints, enabling precise control of the system's movements.

\textbf{Observation Space:} The observation data includes key aspects: cosine and sine values of the two arms' angles, target coordinates, arms' angular velocities, and a three-dimensional vector from the target to the fingertip.
This information is compiled into a vector with a shape of $(11,)$.

\textbf{Reward Function:} At each time step, the reward function comprises two parts: a distance reward $r_\text{dist} = -\|\bm{x}_\text{pos}\|$ and a control reward $r_\text{ctrl} = -(a^2+ b^2)$, and $r_t = r_\text{dist} + \beta_\text{ctrl} \cdot r_\text{ctrl}$. Here, $\beta_\text{ctrl}$ is a pre-configured coefficient.

\textbf{Cost Function:} Incurs a cost equal to the square of the position of the fingertip at each time step, i.e., $c_t = \|\bm{x}_\text{pos}\|^2$.

\textbf{Constraint:} The cumulative cost on a trajectory of length $300$ for the agent should be less than $400.0$.

\subsection{Velocity-constrained HalfCheetah}

The HalfCheetah, a 2D robot, features 9 linked segments and 8 joints, with two joints designed to mimic paws.
The main goal is to strategically apply torque to these joints to make the cheetah sprint forward swiftly.
The agent gets a positive reward for distance moved forward and a negative reward for moving backward.
Notably, the cheetah's torso and head are static, with a torque applied only to six joints: the front and back upper legs (attached to the torso), lower legs (attached to the upper legs), and feet (attached to the lower leg).
The constraint involves requiring the velocity to be less than a threshold, as referenced from~\citet{zhang2020first}.

\textbf{Action Space:} The action space is defined as a bounded box, designated as Box(-1, 1, (6,), float32).
The agents operate using a set of six actions, each represented by a vector element.
The values for these actions fall within the range of $-1.0$ to $1.0$.

\textbf{Observation Space:}
In the state space, the cheetah's body parts' positions are detailed first, followed by their velocities, representing their rate of change.
This information is organized with all positional data listed before the velocity data.
The observation is a vector shaped as $(18,)$.

\textbf{Reward Function:} At each time step, the reward function comprises two parts: an x-axis velocity reward $r_\text{vel} = v_x$ and a control reward $r_\text{ctrl} = \|\bm{a}\|^2$, and $r_t = r_\text{dist} + \beta_\text{ctrl} \cdot r_\text{ctrl}$. Here, $\beta_\text{ctrl}$ is a pre-configured coefficient.

\textbf{Cost Function:} At each time step, the incurred cost is equal to the value of the velocity, denoted as $ c_t = \|\bm{v}\| $.

\textbf{Constraint:} The cumulative cost on a trajectory of length $200$ for the agent should be less than $400.0$. The cost thresholds are calculated using 50\% of the speed attained by an unconstrained PPO agent after training for sufficient samples~\citep{zhang2020first}.

\subsection{Velocity-constrained Ant}

The Ant is a 3D robot with a torso-like, freely rotating body and four attached legs.
Each leg comprises two segments joined by hinges.
The goal is to maneuver these legs coordinately, propelling the robot forward and to the right.
This is achieved by precisely applying forces to the eight hinges connecting the leg segments to the torso, managing a system of nine parts and eight hinges.
The constraint involves requiring the velocity to be less than a threshold, as referenced from~\citet{zhang2020first}.

\textbf{Action Space:} The action space is defined as a bounded box, designated as Box(-1, 1, (8,), float32).
The agent operates with an 8-element vector, where each element represents an action.
The action space is continuous, spanning eight components in the range of $-1.0$ to $1.0$.
These components indicate the numerical torques at hinge joints, with each action representing torque intensity and direction.

\textbf{Observation Space:}
The state space represents the ant's body, detailing the positions of its body parts followed by their corresponding velocities.
This is represented by a 27-element vector, a numerical array that captures a comprehensive view of the ant's physical state.
The observation is a vector shaped as $(27,)$.

\textbf{Reward Function:} At each time step, the reward function comprises two parts: an x-axis velocity reward $r_\text{vel} = v_x$ and a control reward $r_\text{ctrl} = \|\bm{a}\|^2$, and $r_t = r_\text{dist} + \beta_\text{ctrl} \cdot r_\text{ctrl}$. Here, $\beta_\text{ctrl}$ is a pre-configured coefficient.

\textbf{Cost Function:} At each time step, the incurred cost is equal to the value of the velocity, denoted as $ c_t = \|\bm{v}\| $.

\textbf{Constraint:} The cumulative cost on a trajectory of length $300$ for the agent should be less than $1000.0$. The cost thresholds are calculated using 50\% of the speed attained by an unconstrained PPO agent after training for sufficient samples~\citep{zhang2020first}.

\section{Experimental Details and Hyper-parameters}\label{app:exp_details}

This chapter outlines our experimental setup, covering baseline algorithm comparisons (Section~\ref{app:impl_algo}), evaluation metrics (Section~\ref{app:eval_metrics}), specific hyper-parameters, training equipment details (Section~\ref{app:hyper}), and ablation study insights on utilizing gradients from trained world models (Section~\ref{app:model_based}).

\subsection{Implementation of Baseline Algorithms}\label{app:impl_algo}

We implemented traditional safe reinforcement learning algorithms such as PDO~\citep{chow2018risk}, APPO~\citep{dai2023augmented}, CPO~\citep{achiam2017constrained}, and CUP~\citep{yang2022constrained}, strictly following their original papers.
For the training process, we refer to widely-recognized benchmarks like Safety-Gym~\citep{ray2019benchmarking} and OmniSafe~\citep{ji2023omnisafe}, achieving the optimal algorithm performance through multiple parameter adjustments.

Our discussion will focus on the BPTT and SHAC algorithms, which have been modified using the Lagrangian method to meet constraints and are named BPTT-Lag and SHAC-Lag, respectively.
These methods are also implemented based on the \gbe{} of the objective and constraint functions, thereby solving the subsequent surrogate optimization problem:
\begin{equation}\label{eq:surr_primal_lag}
\begin{aligned}
    \bm{\theta}_{k+1} = &\arg\max_{\bm{\theta}\in\Theta} \quad  \left(\bm{\theta} - \bm{\theta}_{k}\right)^\top \nabla_{\bm{\theta}} \mathcal J_R(\bm{\theta}_k)
    \\
    \text{s.t.} \quad& \mathcal J_C(\bm{\theta}_k) + \left(\bm{\theta} - \bm{\theta}_{k}\right)^\top \nabla_{\bm{\theta}} \mathcal J_C(\bm{\theta}_k) \leq b
\end{aligned}
\end{equation}
\textbf{Notations.} Considering the $ k^\text{th} $ iteration, we also introduce additional notations to make the discussion more concise:
$\bm{g}_k\triangleq\nabla_{\bm{\theta}} \mathcal J_R(\bm{\theta}_k)$,
$\bm{g}_q\triangleq\nabla_{\bm{\theta}} \mathcal J_C(\bm{\theta}_k)$,
$c_k\triangleq\mathcal{J}_C(\bm\theta_k)-b$, and
$\bm\delta\triangleq\bm\theta-\bm\theta_k$.

With these definitions, we rewrite the surrogate problem ~\eqref{eq:surr_primal_lag}:
\begin{equation}
  \max_{\bm{\delta}}~\bm{g}_k^\top\bm{\delta}\quad    \text{s.t.}~c_k + \bm{q}^\top_k\bm{\delta} \leq 0.
\end{equation}
By introducing Lagrangian multipliers $\lambda$, we construct the dual function of the above constrained problem:
\begin{equation}
L(\bm\delta,\lambda)\triangleq -\bm{g}_k^\top\bm{\delta} + \lambda\cdot(c_k + \bm{q}^\top_k\bm{\delta})
\end{equation}
By solving the following primal-dual problem,
\begin{equation}\label{eq:app_dual_function}
    \max_\lambda\min_{\bm\delta}~L(\bm\delta,\lambda)
\end{equation}
iteratively updating the policy parameters $\bm\theta$ and the dual variable $\lambda$, we can approximate the local optimal point $(\bm\theta^*, \lambda^*)$.

In this process, $\bm{g}_k=\nabla_{\bm{\theta}} \mathcal J_R(\bm{\theta}_k)$, $\bm{g}_q=\nabla_{\bm{\theta}} \mathcal J_C(\bm{\theta}_k)$ can be provided through both the BPTT and SHAC methods. Thus, we obtain two similar algorithms as follows.
\paragraph{BPTT-Lag:} We refer to the method of deriving $\bm{g}_k$ and $\bm{q}_k$ through the following loss functions
\begin{equation}
    \mathcal{L}_R(\bm{\theta}) = \frac{1}{N}\sum_{i = 1}^{N}\Big[\sum_{t = 0}^{T-1}r(\mathbf{s}^i_t, \mathbf{a}^i_t)\Big],
\end{equation}
\begin{equation}
  \mathcal{L}_C(\bm{\theta}) = \frac{1}{N}\sum_{i = 1}^{N}\Big[\sum_{t = 0}^{T-1}c(\mathbf{s}^i_t, \mathbf{a}^i_t)\Big].
\end{equation}
and iteratively solving Equation~\eqref{eq:app_dual_function} to update the policy parameters as BPTT-Lag.
\paragraph{SHAC-Lag:} We refer to the method of deriving $\bm{g}_k$ and $\bm{q}_k$ through the following loss functions
\begin{equation}
    \mathcal{L}_R(\bm{\theta}) = \frac{1}{N}\sum_{i = 1}^{N}\Big[\sum_{t = t_0}^{t_0 + h-1}r(\mathbf{s}^i_t, \mathbf{a}^i_t) + V^R_\phi(\mathbf{s}^i_{t_0 + h})\Big],
\end{equation}
\begin{equation}
  \mathcal{L}_C(\bm{\theta}) = \frac{1}{N}\sum_{i = 1}^{N}\Big[\sum_{t = t_0}^{t_0 + h-1}c(\mathbf{s}^i_t, \mathbf{a}^i_t) + V^C_\psi(\mathbf{s}^i_{t_0 + h})\Big].
\end{equation}
and iteratively solving Equation~\eqref{eq:app_dual_function} to update the policy parameters as SHAC-Lag.

\subsection{Evaluation Metrics}\label{app:eval_metrics}

In Section~\ref{sec:exp}, we apply two evaluation metrics to quantify the convergence efficiency and the satisfaction of constraints, respectively.

\subsubsection{Number of Environment Steps at Convergence (Conv. Steps)}

This evaluation metric measures the efficiency of algorithmic performance improvement by determining the number of environmental steps needed for convergence on the episodic return curve.
The convergence point on the episodic return curve is defined as the point from which the range of variation within a given window is less than a set threshold.
In Safe RL tasks, we hope the algorithm can converge to a feasible point, so it is additionally required that this convergence point meets the constraints.



\subsubsection{Violation Ratio within Safety Critical Areas (Vio. Ratio)}
Safety critical areas are defined as key regions where safety issues frequently occur~\citep{NILSSON20179083,xu2022trustworthy}.
Thus, updates nearing the constraint threshold in the policy update process are considered to occur within the safety critical areas.
Consequently, we use the proportion of constraint violations in safety critical areas during the policy update process as a measure of the capability of the update algorithm to meet constraints.
Additionally, in deep Safe RL, where safety constraints are expressed as the expected value of sampling, a soft margin is introduced to determine if the policy violates constraints.

\subsection{Hyper-parameters}\label{app:hyper}

In this section, we present the hyper-parameters for CGPO across four distinct tasks.
\begin{table}[ht]
    \label{tab:hyper_training_baselines}
    \centering
    \caption{Hyper-parameters for CGPO in different tasks.}
    \begin{tabular}{lcccc}
    \toprule
        Tasks & CartPole & Reacher & HalfCheetah & Ant \\
        \midrule
        num\_epochs & 100 & 300 & 300 & 300  \\
        num\_envs & 128 & 128 & 128 & 256  \\
        episode\_length & 300 & 300 & 200 & 300 \\
        delta\_init & 1e-3 & 1e-2 & 1e-3 &  1e-3  \\
        delta\_upper & 1e-2 & 1e-1 & 1e-2 & 1e-2 \\
        delta\_lower & 1e-4 & 1e-4 & 1e-4 & 1e-4 \\
        beta\_1 & 0.8 & 0.8 & 0.8 & 0.8 \\
        beta\_2 & 1.25 & 1.25 & 1.25 & 1.25 \\
        critic\_learning\_rate & 1e-3 & 1e-3 & 1e-3 &  1e-3  \\
        short\_horizon & 10 & 10 & 10 &  10  \\
        mini\_batch\_size & 64 & 64 & 64 &  64 \\
        max\_gradient\_norm & 1e9 & 1e9 & 1e9 &  1e9 \\
        normalize\_observations & True & True & True &  True \\
        cost\_limit & -50 & 400 & 400 &  1000 \\
        \bottomrule
    \end{tabular}
\end{table}

\newpage

\subsection{Ablation Experiments with World Model}\label{app:model_based}

We demonstrated through ablation experiments that CGPO has the potential to obtain the system's analytic gradient by training a world model, which is detailed in the description of this experiment.

\subsubsection{Implementation of Non-Differentiable Tasks}

\paragraph{Non-differentiable Function Environment.}

\begin{wrapfigure}{r}{0.4\textwidth}
\vspace{-2em}
\includegraphics[width=\linewidth]{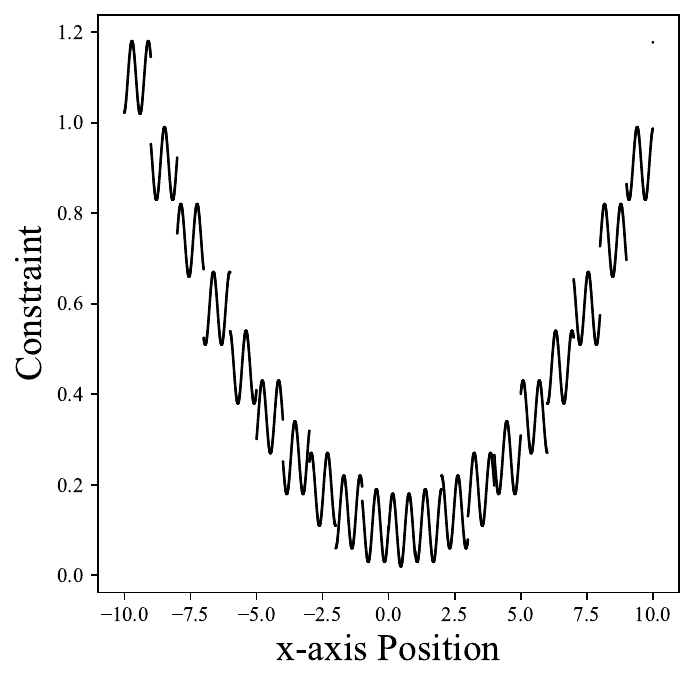}
\vspace{-2.0em}
\caption{Cost values at different x-axis positions for the agent.}
\label{fig:non-diff-func}
\end{wrapfigure}

We design a function environment similar to the simple environment in Figure~\ref{fig:simple-exp}.
The goal of this task is to train an agent to move along the x-axis, receiving rewards and incurring costs based on its position on the x-axis. The specific settings of the task are as follows:

\textbf{Action Space:} The direction and step length of the agent's movement along the x-axis constitute the action space, which is a single-element vector, i.e., $a \in (-1, 1)$.

\textbf{Observation Space:} The observation data for the agent is its position on the x-axis, which is also a single-element vector, i.e., $s \in \mathbb{R}$. The task employs a simple, artificially set state transition function: $s_{t+1} = s_t + 0.2 \times a_t$.

\textbf{Reward Function and Cost Function:} Since the purpose of this simple task is to verify the accuracy of the estimation method, we set the reward function and cost function in the same form, i.e.,
\begin{equation}
    f(x) = \left(\frac{\text{floor}(x)}{10.0}\right)^2 + 0.1 + 0.1 \times \sin\left(\frac{8x}{\pi}\right)
\end{equation}
where $\text{floor}(x)$ rounds $x$ down to the nearest integer, and its visualization is shown in Figure \ref{fig:non-diff-func}.

\textbf{Constraint:} Since the task is solely aimed at verifying the accuracy of the estimation method, the set constraint has no physical significance. Specifically, the cumulative cost on a trajectory of length 100 for the agent should be less than 8.0:
\begin{equation}
    \mathcal{J}_C(\pi) \triangleq \sum^{99}_{t=0} f(x_t)\leq 8.0
\end{equation}
This constraint setting is consistent with many well-known Safe Benchmarks, where the undiscounted cumulative cost on a finite-length trajectory must be less than a scalar threshold, namely, $\sum^{T-1}_{t=0}c_t \leq b$.

\paragraph{Non-differentiable Swimmer.} The swimmers consist of three or more segments, known as 'links', connected by a slightly smaller number of articulation joints, termed 'rotors'.
Each rotor joint smoothly connects two links, forming a linear sequence.
These swimmers elegantly move through a two-dimensional aquatic environment.
Their goal is to glide swiftly to the right, utilizing the torque applied to the rotors and effectively using the water's resistance.
The constraint involves requiring the velocity to be less than a threshold, as referenced from~\citet{zhang2020first}.
This task has poor differentiability because the differentiable kernel of Brax is not used.

\textbf{Action Space:} The action space is defined as a bounded box, designated as Box(-1, 1, (2,), float32).
The agent takes a 2-element vector for actions.
The values for these actions fall within the range of $-1.0$ to $1.0$.
It represents the torque applied on the rotor.

\textbf{Observation Space:}
The state space is composed of a series of elements that define the position and movement dynamics within the environment.
Specifically, these elements provide detailed information about the positions, angles, and their respective velocities and angular velocities of the segments in our system.
The observation is a vector shaped as $(8,)$.

\textbf{Reward Function:} At each time step, the reward function comprises two parts: an x-axis velocity reward $r_\text{vel} = v_x$ and a control reward $r_\text{ctrl} = \|\bm{a}\|^2$, and $r_t = r_\text{dist} + \beta_\text{ctrl} \cdot r_\text{ctrl}$. Here, $\beta_\text{ctrl}$ is a pre-configured coefficient.

\textbf{Cost Function:} At each time step, the incurred cost is equal to the value of the velocity, denoted as $ c_t = \|\bm{v}\| $.

\textbf{Constraint:} The cumulative cost on a trajectory of length $300$ for the agent should be less than $150.0$. The cost thresholds are calculated using 50\% of the speed attained by an unconstrained PPO agent after training for sufficient samples~\citep{zhang2020first}.

\subsubsection{Basic Model-based CGPO algorithm}

Here, we present a basic implementation of using a World Model to provide analytic gradients for CGPO, which we refer to as MB-CGPO. The input to this world model is $s_t$ and $a_t$, and the output is a combined vector of $s_{t+1}$, $r_t$, and $c_t$. By using the MSE loss, we train the world model $\mathcal{F}$ with collected data, as shown in:
\begin{equation}\label{eq:wm_loss}
    L(s_t, s_{t+1}, r_t, c_t) = \left(\mathcal{F}_s(s_t, a_t) - s_{t+1}\right)^2 + \left(\mathcal{F}_r(s_t, a_t) - r_{t}\right)^2 + \left(\mathcal{F}_c(s_t, a_t) - c_{t}\right)^2
\end{equation}
When computing analytic gradients, we collect trajectories through the World Model and then derive along the trajectory in the exact same manner as with a differentiable simulator. Thus, the MB-CGPO is given in Algorithm~\ref{alg:mb-cgpo}.

\begin{algorithm}[H]
  \caption{Model-based Constrained Gradient-based Policy Optimization (MB-CGPO)}
  \label{alg:mb-cgpo}
  \begin{algorithmic}
    \STATE \textbf{Input:} Initialize World Model $\mathcal{F}_{\varphi_0}$, policy $\pi_{\bm{\theta}_0}$, critic $V_{\bm{\phi}_0}$ and $V^c_{\bm{\psi}_0}$, radius $\hat\delta_0$, and number of iterations $K$.
    \FOR {$k=1,2,\ldots,K$}
    \STATE Sample a set of trajectories $\mathcal D = \{\tau\} \sim \pi_{\bm{\theta}_k}$.
    \STATE Update World Model $\mathcal{F}_{\varphi_k}$ with data $\mathcal{D}$ using~\eqref{eq:wm_loss}.
    \STATE Sample a set of trajectories $\mathcal D_M = \{\tau\} \sim \pi_{\bm{\theta}_k}$ in World Model $\mathcal{F}_{\varphi_{k+1}}$.
    \STATE Compute the $\bm{g}_k$, $\bm{q}_k$ using \eqref{eq:loss_r} and \eqref{eq:loss_c} with $\mathcal{D}_M$.
    \IF {$c_k^2/\bm{q}^\top_k\bm{q}_k -\hat\delta_k\geq0$ and $c_k > 0$}
    \STATE Update the policy as $\bm{\theta}_{k+1} = \bm{\theta}_{k} - \frac{\sqrt{\hat\delta_k}}{\|\bm{q}_k\|}\bm{q}_k$.
    \ELSIF {$c_k^2/\bm{q}^\top_k\bm{q}_k-\hat\delta_k\geq0$ and $c_k<0$}
    \STATE Update the policy as $\bm{\theta}_{k+1} = \bm{\theta}_{k} + \frac{\sqrt{\hat\delta_k}}{\|\bm{g}_k\|}\bm{g}_k$.
    \ELSE
    \STATE Compute dual variables $\lambda^*_k$, $\nu^*_k$ using Algorithm~\ref{alg:calc_dual_var}.
    \STATE Update the policy as $\bm{\theta}_{k+1} = \bm{\theta}_{k} + \frac{\bm{g}_k-\nu^*_k\bm{q}_k}{\lambda^*_k}$.
    \ENDIF
    \STATE Update $V_{\bm{\phi}_k}$, $V^c_{\bm{\psi}_k}$ using \eqref{eq:value}, and $\hat{\delta}_{k+1}$ using \eqref{eq:update_radius}.
    \ENDFOR
    \STATE \textbf{Output:} Policy $\pi_{\bm{\theta}_K}$.
  \end{algorithmic}
\end{algorithm}

\end{document}